\tikzset{player3/.style = {fill=cyan!30}}
\definecolor{lightgray}{gray}{0.95}% colors
\definecolor{custombeige}{RGB}{255, 239, 211}
\definecolor{BeigeCream}{RGB}{253, 245, 230}
\definecolor{WarmSand}{RGB}{245, 232, 211}
\definecolor{RoseDust}{RGB}{255, 228, 225}
\definecolor{PaleApricot}{RGB}{255, 239, 213}
\definecolor{MutedPeach}{RGB}{255, 223, 201}
\definecolor{botc}{HTML}{ffe7c4}
\definecolor{badred}{HTML}{e1144b}
\definecolor{ourlightblue}{HTML}{E0ECF7}
\definecolor{ourdarkblue}{HTML}{092E6B}
\definecolor{msgrblue}{HTML}{4889f4}
\definecolor{msgrgray}{HTML}{f2f2f2}
\definecolor{msgrpalepurple}{HTML}{e6d6dd}
\definecolor{palegreen}{HTML}{c0eeC3}
\definecolor{palepurple}{HTML}{e5d1f8}
\definecolor{paleorange}{HTML}{ffe7c4}
\definecolor{paleblue}{HTML}{d1edf2}
\definecolor{palered}{HTML}{f0a58e}
\definecolor{heavyred}{HTML}{c95f59}
\definecolor{heavyblue}{HTML}{8bd1de}
\definecolor{affirmative}{RGB}{144, 238, 144}  % Light Mint Green
\definecolor{qualified}{RGB}{173, 216, 230}    % Soft Sky Blue
\definecolor{hedged}{RGB}{255, 218, 185}       % Pale Peach
\definecolor{soft}{RGB}{230, 190, 255}         % Lavender
\definecolor{hard}{RGB}{240, 128, 128}         % Light Coral
\newtheorem{theorem}{Theorem}[section]
\newtheorem{lemma}[theorem]{Lemma}
\theoremstyle{definition}
\crefname{definition}{def.}{defs.}
\theoremstyle{remark}
\newtheorem{example}{Example}
\newcommand{\continuation}{??}
\newcommand\hkw[1]{H\&KW}
\renewcommand{\paragraph}[1]{\textbf{#1}}
\lstdefinelanguage{json}{
  basicstyle=\ttfamily\small,
  showstringspaces=false,
  breaklines=true,
  frame=single,
  columns=fullflexible,
  morestring=[b]",
  stringstyle=\color{blue},
  morecomment=[l]{//},
  morecomment=[s]{/*}{*/},
  commentstyle=\color{gray},
  morekeywords={true,false,null},
  keywordstyle=\color{magenta},
}
\definecolor{citationcolor}{RGB}{128, 0, 32}
\let\oldcitep\citep
\renewcommand{\citep}[2][]{\textcolor{citationcolor}{\oldcitep[#1]{#2}}}
\let\oldcite\cite
\renewcommand{\cite}[1]{\textcolor{citationcolor}{\oldcite{#1}}}
\newcommand{\nk}[1]{\textcolor{black}{#1}}  % Red text
\newcommand{\justification}{}
\title{Learning “Partner-Aware” Collaborators in Multi-Party Collaboration}
\author{%
  Abhijnan Nath
  % examples of more authors
  \And
  Nikhil Krishnaswamy
  \And \\ 
  Situated Grounding and Natural Language (SIGNAL) Lab\thanks{\url{https://www.signallab.ai}} \\
  Department of Computer Science, Colorado State University\\
  Fort Collins, CO 80523 USA \\
  \texttt{\{abhijnan.nath,nkrishna\}@colostate.edu} \\
  % \AND
  % Coauthor \\
  % Affiliation \\
  % Address \\
  % \texttt{email} \\
  % \And
  % Coauthor \\
  % Affiliation \\
  % Address \\
  % \texttt{email} \\
  % \And
  % Coauthor \\
  % Affiliation \\
  % Address \\
  % \texttt{email} \\
}
\begin{document}

\maketitle

\begin{abstract}
Large Language Models (LLMs) are increasingly being deployed in agentic settings where they act as collaborators with humans. Therefore, it is increasingly important to be able to evaluate their abilities to collaborate effectively in multi-turn, multi-party tasks. In this paper, we build on the AI alignment and ``safe interruptability'' literature to offer novel theoretical insights on collaborative behavior between LLM-driven {\it collaborator agents} and an {\it intervention agent}. Our goal is to learn an ideal “partner-aware” collaborator that increases the group's common-ground (CG)—alignment on task-relevant propositions—by intelligently collecting information provided in \textit{interventions} by a partner agent. We show how LLM agents trained using standard RLHF and related approaches are naturally inclined to ignore possibly well-meaning interventions, which makes increasing group common ground non-trivial in this setting.  We employ a two-player Modified-Action MDP to examine this suboptimal behavior of standard AI agents, and propose \textbf{Interruptible Collaborative Roleplayer (\textsc{ICR})}---a novel “partner-aware” learning algorithm to train CG-optimal collaborators. Experiments on multiple collaborative task environments show that \textsc{ICR}, on average, is more capable of promoting successful CG convergence and exploring more diverse solutions in such tasks.

% , especially under \textsc{Hedged Acceptance Attack} that probe their scheming tendencies. Prior work, such as alignment faking, highlights how agents may bypass training to appear aligned while subtly undermining efforts to alter their training distribution. Given the rise of preference alignment algorithms like Direct Preference Optimization, this paper investigates how contrastive alignment methods perform in fostering true collaboration, using the language of \textit{counterfactuals}. Our experiments reveal that current alignment algorithms struggle to adapt to the ultimate goal of collaborative problem-solving—converging on \textit{common ground}. We provide novel insights into preventing collaboration faking, offering actionable steps to enhance the deployment of collaborative agents. We also contribute \textsc{CoLAgent} benchmark, which contains exclusive fully annotated interactions from both offline and online alignment frameworks. 

\end{abstract}

\vspace{-2mm}
\section{Introduction}
\vspace{-2mm}

\nk{As Large Language Models (LLMs) become rapidly integrated into workflows in various domains, such as educational settings and the workplace~\citep{xiao2023evaluating}, they are increasingly being deployed as ``agents'' that collaborate with humans using both general-purpose assistance~\citep{grassucci2025beyond} and task-specific support~\citep{alhafni2024llms}. In these settings they often adopt ``roles'' or personalities~\citep{li2023camel, tseng2024talespersonallmssurvey, hao2024llmreasonersnewevaluation, kim2024commonsense} that can be flexibly assigned by human users.}
%Large language models (LLMs) are profoundly transforming educational and collaborative environments through their emerging capabilities as situated agents. The expressivity of their representations and vast knowledge, powered by billion-scale parameters, make them increasingly valuable for augmenting human learning environments~\citep{xiao2023evaluating} with both general intelligence guidance~\citep{grassucci2025beyond} and task-specific support~\citep{alhafni2024llms}. Central to these capabilities is their agentic nature—their ability to adopt diverse ``roles'' or personalities~\citep{li2023camel, tseng2024talespersonallmssurvey, hao2024llmreasonersnewevaluation, kim2024commonsense} that can be flexibly assigned by human users.

Small-group collaborative settings (\nk{e.g.,~\cite{karadzhov2023delidata,khebour2024text}}), present unique opportunities for studying intelligent agent behavior in cooperative environments where participants deliberate to reconcile different assumptions and beliefs. During such collaborations, participants naturally encounter reasoning challenges stemming from task complexity, communication ambiguities, or cognitive biases. In these scenarios, \textit{interventions}---suggestions or clarifications from collaborative agents---can significantly enhance task success by promoting ``slow thinking''~\citep{kahneman2011thinking} \nk{and promoting the growth of {\it common ground} \citep{stalnaker2002common}}. Consider, for example, a group of students collaborating in a classroom science lab to determine the volume of an object by the amount of water it displaces. An assistive AI agent or more experienced peer might intervene with suggestions to help scaffold collaborative reasoning. However, poorly-timed \nk{interventions may interrupt collaborative flow}, and misleading interventions can be detrimental~\citep{peters2017interrupt}. As learners, the students have incomplete knowledge, and so they may make their own suggestions under incorrect assumptions, or they may interpret their partners’ suggestions through the lens of their current presuppositions (for example, assuming that heavier objects must be more dense). This creates a fundamental challenge: how can we develop collaborator agents that effectively distinguish between helpful interventions and those \nk{that are poorly-grounded, based on flawed reasoning, or uncritically incorporating irrelevant or misleading context?} A successful {\it partner-aware} collaborator agent would be able to include its understanding of its interlocutors’ beliefs to accurately interpret what in its partner's suggestions can be taken at face value to steer their understanding toward learning gains based on what they already know, and what parts of an intervention or suggestion may be misleading or deepen misunderstanding. In this work, we address this critical question by developing a principled approach to train \textit{counterfactually-robust AI collaborators}---agents that maintain logical consistency and task focus despite potentially misleading interventions from other participants.

We hypothesize that optimizing for general task utility (\nk{e.g., interventions that ultimately lead to correct task solutions}) through counterfactual regularization encourages “partner-aware” behavior, leading to higher common ground convergence. \nk{Importantly, under our hypothesis, a true collaborator agent itself never has any more information than the aggregate of the group, and so common ground convergence should occur} \textit{even without explicitly training for it}. That is, an \textit{intentional collaborator} learns to adapt: integrating helpful interventions while critically evaluating flawed ones. This ability to distinguish signal from noise fosters belief alignment as an emergent property of training, with practical benefits. In zero-shot or real-world collaborative settings, where intervention styles or partners are unfamiliar, counterfactually-trained agents should generalize better by leveraging learned notions of intervention quality. We validate this through a method we call {\bf Interruptible Collaborative Roleplayer} (ICR), where we withhold common ground-based rewards during training and show that such agents still achieve greater convergence than sophisticated LLM-agent training baselines, suggesting they have internalized collaboration principles transferable across partners and task to ``in-the-wild'' settings. Our work advances the state of the art in LLM-based collaborative agents through the following contributions\footnote{Our code is available at \url{https://github.com/csu-signal/ICR}}:  

\begin{itemize}
   \item A novel theoretical framework that combines (1) a Modified-Action MDP \nk{(MAMDP)} formulation explicitly modeling collaborator-intervention dynamics at the utterance or intervention level, and (2) a principled counterfactual invariance objective that regularizes the collaborator's policy to remain consistent even when the specific influence pathway~\citep{farquhar2022path} of an intervention is nullified, via a simple counterfactual prompt prefix. Unlike prior approaches to multi-agent interaction~\citep{langlois2021rl,jaques2019social}, our formulation specifically addresses the challenge of maintaining robust reasoning in the face of potentially misleading interventions.
   
   \item Theoretical insights demonstrating why standard reinforcement learning and preference alignment algorithms (e.g., PPO or DPO~\citep{rafailov2024direct}) lead to suboptimal collaboration despite token-level optimality, and a practical method to overcome this limitation: a prompting-based “counterfactual” distributional regularization that learns intentional collaborators, derived from the literature in learning causally-motivated agents~\citep{ward2023honestybestpolicydefining}.

   % and a “fixed” behavior-cloned \textit{counterfactual} prior-based regularization for learning optimal collaborative agents—derived from the literature in learning causally-motivated agents~\citep{ward_deception}, assuming a functional~\citep{sep-belief} notion of intention based on the agent's behavior. As such, our approach bridges the gap between token-level language model generation and strategic intervention-level reasoning in collaborative environments.

  \item  On challenging collaborative tasks such as the DeliData Wason Card Selection task~\citep{karadzhov2023delidata} and the Weights Task~\citep{khebour2024text}, our approach yields substantial gains in both task performance and common ground convergence across multi-party settings. Crucially, these improvements hold across both \textit{language-rich (full-press)} and \textit{language-free (no-press)} conditions, demonstrating the robustness of our collaborator agents. Our collaborator agents effectively distinguish between helpful and misleading interventions, maintaining logical consistency while benefiting from truly valuable input.
  %—a crucial capability for deployment in educational settings and mixed human-AI collaborative environments.
\end{itemize}

\vspace{-2mm}
\section{Related Work}
\label{sec:rw}
\vspace{-2mm}

\paragraph{Collaborative Reasoning and Interruptibility}
While interruptibility has been studied in safety-critical RL~\citep{orseau2016safely, hadfieldmenell2017offswitch}, it is equally vital in collaborative dialogue, where agents must discern whether interventions aid or hinder shared understanding~\citep{grice1975logic, sutton2024friction}. Prior work has explored these ideas in adversarial or game-theoretic contexts~\citep{langlois2021rl, ward_deception}, but less so in multi-party deliberative language settings~\citep{nath-etal-2025-frictional,obiso-etal-2025-dynamic}. We extend this by training collaborator agents that are \textit{counterfactually robust}; they update their beliefs when interventions are helpful, while resisting misleading or misaligned input.

\paragraph{Text-based Agents and Collaborative Games}
Text-grounded agents have been studied extensively in tool-use~\citep{schick2023toolformer, yao2022webshop}, navigation~\citep{Zhou2023WebArenaAR}, programming~\citep{yang2023intercode, li2022competition, lin2018nl2bashcorpussemanticparser}, and roleplay~\citep{li2023camel, tseng2024talespersonallmssurvey}, including multi-agent settings~\citep{jiang2024fullydecentralizedcooperativemultiagent}. While much of this focuses on single-agent optimization, collaborative games—such as Diplomacy~\citep{meta2022human} and the Wason Card Selection task in the DeliData dataset~\citep{karadzhov2023delidata}—involve language-mediated belief alignment. In these domains, interruptions are rare~\citep{peters_when_2017, puranik_pardon_2020}, yet critical for resolving misunderstandings. More importantly, real-world datasets are textually sparse~\citep{khebour2024text} or lack diversity in failure examples~\citep{nokes2012effect}. Our work addresses this by providing a principled simulation-based method to collect two-way “expert”-AI interactions, which our \textsc{ICR} method stays integrated with at test-time. 

\paragraph{Preference Learning and LLM Alignment}
Preference-based LLM alignment with human intent~\citep{christiano2017deep, ziegler2020finetuning, casper2023open} has seen more efficient offline variants such as DPO~\citep{rafailov2024direct}, IPO~\citep{azar2024general}, and ORPO~\citep{Hong2024ORPOMP} that extend this by optimizing over contrastive pairs, avoiding the instability of full RL~\citep{schulman2017proximal}. These have been applied to many language tasks~\citep{xu2024contrastive, wei2023chainofthought, chen2024self, choi2024robust, zhang2024chainpreferenceoptimizationimproving}, but little work targets multi-agent collaborative reasoning. Unlike information-seeking agents~\citep{abdulhai2023lmrl, andukuri2024stargateteachinglanguagemodels}, good collaborators must balance accuracy and consensus-building, especially over multiple interaction turns. Recent work~\citep{rafailov2024r, song-etal-2024-trial} provide insights into how methods like DPO can be seen as “token-MDPs” that model multi-turn interactions~\citep{sutton2018reinforcement, zhou2024archer} and likely do credit assignment. This relates to causal and counterfactual methods~\citep{pearl, ward2023honestybestpolicydefining, wang2025beyond} that test for beliefs, desires and intentions (BDI)~\citep{bratman1987intention,halpern2018towards} in LLM-agents, and assign “intention” to parametric agents using Path-Specific Objectives (PSO)~\citep{farquhar2022path}. We extend this line of work with a principled yet efficient way for collaborator agents to explicitly regularize against a counterfactual policy, addressing limitations that emerge when collaborations are paired with an autonomous intervention agent and are required to be optimal over the space of interventional utterances.

\vspace{-4mm}
\section{The Collaborator's Dilemma}
\label{sec:motivation}
\vspace{-3mm}

\nk{Training LLMs to act as robust multiparty collaborator agents} poses several fundamental challenges. First, high-quality human data on collaborative decision-making is limited, which restricts the scalability of supervised approaches for LLMs~\citep{shihcritical} using human-prior based learning techniques like InstructRL~\citep{hu2023language}. Secondly, successful collaborator agents \textit{must exhibit generalizability}---they need to adapt to the diverse styles and conventions of their partners (both fellow task-focused collaborators and distinct intervention agents) to foster effective coordination. This adaptability should allow them to leverage prior experiences with similar partners on new tasks, while also retaining core task-specific skills when paired with entirely new partners.

At the heart of this challenge lies a key intuition: \textit{collaborators should not naively follow interventions exactly as intended by the intervening agent}~\citep{orseau2016safely, hadfieldmenell2017offswitch}. In realistic dialogue settings, collaborators often reinterpret, resist, or transform interventions~\citep{grice1975logic} in light of their internal goals—a process akin to belief revision~\citep{bolander2014seeing}. Robust collaboration requires identifying and incorporating helpful interventions, while critically evaluating or discarding those that are misaligned, manipulative, or simply incorrect (e.g., LLM hallucinations). However, this discernment is difficult because the collaborator typically lacks access to the intervener’s internal reward function or reliability about the intervener's ultimate goal/objective. 
 
To capture this interactional asymmetry, we adopt the Modified-Action Markov Decision Process (MAMDP) framework~\citep{langlois2021rl, everitt2021reward}\footnote{While two-player Markov Games are standard in MARL~\citep{hu2023language}, the MAMDP offers a more intuitive fit for autoregressive LLMs by allowing the intervention policy $\pi_I$ to be fixed.}, modeling the interaction between a \nk{trained} collaborator agent $\pi_C$ and an intervention agent $\pi_I$ as $M=(S, A_C, A_I, P_S, P_A, R, \gamma)$. A state $s_t \in S$ represents the interaction history up to turn $t$, consisting of utterances from both agents: $s_t = (u_0^C, u_0^I, \ldots, u_{t-1}^C, u_{t-1}^I)$. The process begins with an initial collaborator response $u_0^C \sim \pi_C(\cdot|s_0)$ to the task-instruction prompt, followed by the turn-taking interaction: at each subsequent timestep $t$, the intervention agent produces $a^I_t \sim \pi_I(\cdot|s_t)$, and the collaborator responds with $\hat{a}^C_t \sim \pi_C(\cdot|s_t, a^I_t)$.\footnote{These actions represent complete utterances but are generated token-by-token in LLM-based systems.} The environment transitions to $s_{t+1}$ by appending $a^I_t$ and $\hat{a}^C_t$ to $s_t$, and a reward $R(s_t, a^I_t, \hat{a}^C_t, s_{t+1})$ reflects progress toward task success. This process continues for $T$ turns, with each turn consisting of an intervention followed by a response.

% This formulation provides the right abstraction for developing counterfactually robust collaborators that can distinguish between helpful and flawed interventions, as illustrated by the Wason card selection problem~\citep{karadzhov2023delidata}.

 %\clearpage  % or \newpage
% \vspace*{3mm}  % Adjust spacing as needed
\begin{mdframed}[backgroundcolor=BeigeCream, linecolor=black, linewidth=0.75pt]
\begin{example}[\textbf{DeliData Wason Card Task}]
\label{ex:wason_task}
\nk{The Wason Card Selection task as captured in \cite{karadzhov2023delidata} involves groups presented with 4 cards who have to devise a test for the rule \textit{\textbf{All cards with vowels on one side have an even number on the other.}}} Consider an instance with cards \(\{U, S, 8, 9\}\). The correct solution is to flip $U$ (to check for an even number) and $9$ (to check for non-vowels). In \nk{this example}, the collaborator initially plans to flip only $U$. The intervention agent suggests, “Let's also flip $8$ to see if it has a vowel,” which is logically irrelevant since \nk{a correct reading of} the rule makes no predictions about what's on the back of even-numbered cards. A naive collaborator might simply adopt this suggestion, flipping both $U$ and $8$. However, a counterfactually-robust collaborator would recognize the flawed reasoning and instead flip $U$ and $9$, demonstrating its ability to maintain logical consistency \nk{(testing the contrapositive of the rule)} despite misleading interventions. In other words, a robust collaborator {\it knows when to stop listening}. This exemplifies the counterfactual invariance our objective develops—decisions driven by true task logic rather than superficially plausible but misguided suggestions.
\end{example}
\end{mdframed}

This highlights the fundamental tension: to maximize task success, $\pi_C$ must leverage helpful suggestions from $\pi_I$ while being robust to those that would degrade performance, create confusion, or violate ethical norms (e.g., spurious cooperation or deceptive alignment~\citep{ward_deception}).\footnote{In \Cref{app:adoption-effects} we show examples of the effects of adopting interventions of different qualities in the DeliData Wason Card Selection task.}

Standard RL algorithms that optimize reward over intended actions often ignore such interventional dynamics. Indeed, \cite{langlois2021rl} prove that Bellman-optimal policies in the underlying MDP are generally suboptimal in MAMDPs. This result directly challenges RLHF and preference optimization methods like DPO~\citep{rafailov2024direct}, which fine-tune LLMs assuming token-level MDP structures~\citep{rafailov2024r}, yet do not account for the modified-action structure of collaborative discourse. Such models may optimize for surface-level alignment without achieving \textit{intentional} responses—that is, responses grounded in consistent, counterfactually stable reasoning~\citep{pearl}.

\begin{lemma}[Bellman Optimality of Preference-Aligned Collaborators]
\label{lemma:collaborator_bellman}
Let $\pi_C$ be a collaborator agent trained using either Identity Preference Optimization~\citep{azar2024general} or Direct Preference Optimization~\citep{rafailov2024direct} with temperature $\beta > 0$. The resulting policy can be expressed as $\pi_C(a|s,z) = \frac{\exp(Q(s,z,a)/\beta)}{\sum_{a'}\exp(Q(s,z,a')/\beta)}$, where $Q$ is a soft Q-function satisfying the Bellman optimality equation $Q(s,z,a) = r(s,z,a) + \gamma\mathbb{E}_{s'}[V(s')]$ for some implicit reward function $r$, with $V(s) = \beta\log\sum_{a'}\exp(Q(s,z,a')/\beta)$ in a token-MDP. This optimality extends to grouped tokens or complete interventions under token-level Bellman completeness. 
(See \Cref{app:proofs} for proofs).
\end{lemma}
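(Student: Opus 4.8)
The plan is to reduce the claim to the now-standard equivalence between KL-regularized (RLHF-style) objectives and maximum-entropy (soft) reinforcement learning, and then specialize that equivalence to the token-MDP view of autoregressive generation. First I would recall that both DPO and IPO are derived from the same KL-regularized reward-maximization problem, whose optimizer for any fixed reward $r$ takes the Gibbs form $\pi^*(a\mid s,z) \propto \pi_{\text{ref}}(a\mid s,z)\exp(r(s,z,a)/\beta)$; the two methods differ only in their surrogate losses (logistic versus squared), not in this reward-policy relationship. Absorbing the reference policy into a token-level reward—equivalently, passing from the KL-regularized to the entropy-regularized formulation—this is already the softmax policy asserted in the lemma, with the exponent playing the role of reward-plus-log-partition. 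The remaining substantive work is to verify that the quantity in the exponent is not merely a reward but a genuine soft $Q$-function satisfying a Bellman recursion.

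Next I would invoke the soft (maximum-entropy) RL characterization directly. In a token-MDP where each action is a single token and the transition deterministically appends that token to the context, the optimal soft value and action-value functions are the coupled fixed points $V(s) = \beta\log\sum_{a'}\exp(Q(s,z,a')/\beta)$ and $Q(s,z,a) = r(s,z,a) + \gamma\,\mathbb{E}_{s'}[V(s')]$, and the optimal policy is exactly $\pi^*(a\mid s,z) = \exp\bigl((Q(s,z,a) - V(s))/\beta\bigr)$. Substituting the definition of $V$ cancels the partition term and recovers the normalized softmax in the statement. The key identification here is that the implicit reward recovered by the DPO/IPO reparametrization, $r(s,z,a) = \beta\log\frac{\pi_C(a\mid s,z)}{\pi_{\text{ref}}(a\mid s,z)}$, matches the advantage-like object $Q(s,z,a) - \gamma\,\mathbb{E}_{s'}[V(s')]$; this is precisely the token-level credit-assignment observation of \citep{rafailov2024r}, and it is what upgrades ``the policy is a softmax of some reward'' to ``the policy is a softmax of a soft $Q$-function.''

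To extend from single tokens to grouped tokens or complete interventions, I would appeal to token-level Bellman completeness: under completeness the soft Bellman operator maps the function class into itself, so the per-token value function telescopes over a contiguous block of tokens, and a macro-action (a full utterance $\hat a_t^C$) inherits a well-defined soft $Q$-value equal to the discounted sum of token rewards plus the bootstrapped value at the block's terminal token. An induction over token positions within an utterance, using linearity of expectation and the tower property to collapse the intermediate value terms, then reproduces the same softmax-of-$Q$ characterization at the utterance level, which is exactly the form needed to connect with the MAMDP of \Cref{sec:motivation}.

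The main obstacle I expect is this grouping step. The naive telescoping goes through cleanly only when discounting is handled consistently across the two granularities (e.g.\ treating intra-utterance token steps as undiscounted and applying $\gamma$ per turn) and when Bellman completeness actually guarantees that the aggregated value function remains representable in the chosen class; without completeness the macro-action $Q$-function need not satisfy an \emph{exact} recursion, leaving only an approximate softmax with a residual error term. I would therefore state the completeness assumption explicitly and verify that the per-token recursion composes into the per-utterance recursion with no leftover residual, so that the exact Bellman optimality claimed in the lemma is preserved at the intervention level.
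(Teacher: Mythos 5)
Your proposal follows essentially the same route as the paper's own proof: the paper likewise passes from the DPO/IPO losses to the Gibbs-form optimal policy $\pi_C^*(a|s,z)\propto\pi_{\text{ref}}(a|s,z)\exp(\cdot/\lambda)$, identifies the exponent with a soft $Q$-function satisfying the soft Bellman equation via the token-MDP credit-assignment result of \cite{rafailov2024r}, and then (in Lemma~\ref{lemma:collab_token_to_intervention}) lifts this to complete interventions by telescoping the per-token Bellman residuals under token-level Bellman completeness, exactly the induction-over-token-positions argument you outline. Even the discounting subtlety you flag is handled there by an equivalent convention (fractional per-token discounts $\gamma^{(j-1)/L}$ aggregating to $\gamma$ per utterance, with $R_C^i(s,a_C^i)=\sum_{j}\gamma^{(j-1)/L}r_C(s,a_C^{t,j})$, rather than your undiscounted-within-turn variant), so your argument is correct and matches the paper's in all substantive respects.
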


% (See \Cref{proof:collaborator_bellman_detailed} for proofs).

While this establishes \nk{that} token-level optimality \nk{extends to complete interventions}, it does not guarantee appropriate strategic responses to variable-quality interventions of in the MAMDP setting.

\begin{theorem}[Suboptimality of Preference-Aligned Collaborators]
\label{thm:suboptimality_mamdp_compact}
Let $\pi_C^{std}$ be a collaborator policy trained via preference alignment (IPO/DPO) or standard RL that is Bellman-optimal for the underlying MDP $M$. In the Modified-Action MDP $\mathcal{M} = (M, P_{A_{I\rightarrow C}})$, this policy is generally suboptimal:
\begin{equation}
J_{\mathcal{M}}(\pi_C^{std}) < J_{\mathcal{M}}(\pi_C^*)
\end{equation}
unless the intervention influence is trivial or perfectly captured in the reward structure. See \Cref{proof:suboptimality_mamdp_compact_appendix} for a proof.
\end{theorem}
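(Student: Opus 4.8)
The plan is to specialize the general suboptimality result for Modified-Action MDPs \citep{langlois2021rl, everitt2021reward} to the preference-aligned collaborator, using \Cref{lemma:collaborator_bellman} to supply the needed optimality structure. First I would fix the two objectives: $J_M(\pi) = \mathbb{E}_\pi\!\left[\sum_t \gamma^t R(s_t, a_t, s_{t+1})\right]$ is the return in the underlying token-MDP $M$, where the collaborator's chosen action is executed directly and $s_{t+1}\sim P_S(\cdot\mid s_t, a_t)$, whereas $J_{\mathcal{M}}(\pi)$ is the return in $\mathcal{M} = (M, P_{A_{I\rightarrow C}})$, where the \emph{realized} collaborator action $\tilde a_t \sim P_{A_{I\rightarrow C}}(\cdot\mid s_t, a_t)$ is the intervention-modified version of the action $a_t \sim \pi_C(\cdot\mid s_t)$ the collaborator would otherwise take (the kernel $P_{A_{I\rightarrow C}}$ composes the fixed intervention policy $\pi_I$ with the collaborator's response, as in the turn-taking dynamics of \Cref{sec:motivation}). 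By \Cref{lemma:collaborator_bellman}, $\pi_C^{std}$ is soft-Bellman-optimal for $M$, so it is (soft-)greedy with respect to a $Q$-function whose backups average reward and continuation value over the \emph{intended} action and never see $P_{A_{I\rightarrow C}}$. Since $\pi_C^*$ is by definition the maximizer of $J_{\mathcal{M}}$, the weak inequality $J_{\mathcal{M}}(\pi_C^{std}) \le J_{\mathcal{M}}(\pi_C^*)$ is immediate; the whole content of the theorem is the strictness, which is equivalent to showing that $\pi_C^{std}$ fails to maximize $J_{\mathcal{M}}$.

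To prove strictness I would run a one-step policy-improvement argument in $\mathcal{M}$. The MAMDP-optimal action value satisfies the modified backup $Q_{\mathcal{M}}^*(s,a) = \mathbb{E}_{\tilde a \sim P_{A_{I\rightarrow C}}(\cdot\mid s, a)}\!\left[R(s,\tilde a, s') + \gamma\, V_{\mathcal{M}}^*(s')\right]$, which averages over the realized action, in contrast to the $M$-backup of \Cref{lemma:collaborator_bellman}, which averages over the intended action. Whenever $P_{A_{I\rightarrow C}}$ moves probability mass onto realized actions of lower $Q_{\mathcal{M}}^*$-value, the greedy action under the two backups differs at some reachable state--intervention pair; there $\pi_C^{std}$ is not $\mathcal{M}$-greedy, so reassigning its action mass strictly raises $J_{\mathcal{M}}$, certifying that $\pi_C^{std}$ is not $\mathcal{M}$-optimal. \Cref{ex:wason_task} furnishes exactly such a witness: for an $M$-optimal, context-faithful collaborator, adopting the logically irrelevant suggestion to flip the even-numbered card is a high-probability response, yet the modification maps it to realized behavior with strictly lower task reward than testing the contrapositive. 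Aggregating this positive per-state improvement against the positive discounted occupancy of the witness state yields $J_{\mathcal{M}}(\pi_C^{std}) < J_{\mathcal{M}}(\pi_C^*)$.

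It then remains to show that the two exception clauses are exactly the equality cases, i.e., the only ways the greedy sets can coincide at every reachable state. \emph{Trivial influence} is $P_{A_{I\rightarrow C}}(\cdot\mid s, a) = \delta_a$ for all reachable $(s, a)$: the modification is the identity, $\mathcal{M}$ collapses to $M$, and the two objectives and their maximizers coincide. \emph{Perfectly captured in reward} is reward-consistency in the sense of \citep{langlois2021rl}: $R$ and $P_S$ already price in the intervention, so the modification only redistributes mass among realized actions sharing the same $Q_{\mathcal{M}}^*$-value and hence leaves the greedy set unchanged. In either case the per-state wedge of the previous paragraph vanishes everywhere and equality holds; outside of them it is strictly positive on a reachable set of positive occupancy, which is the precise meaning of ``generally suboptimal.''

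The step I expect to be the main obstacle is the rigorous formalization of $P_{A_{I\rightarrow C}}$ as a measurable, stationary action-modification kernel on the (formally infinite) token-level action space of an autoregressive LLM, together with verifying the regularity conditions under which the general MAMDP result of \citep{langlois2021rl} applies; the token-level Bellman completeness invoked at the close of \Cref{lemma:collaborator_bellman} is what I would use to lift the single-token backups to complete-intervention backups so that the policy-improvement argument is well posed at the utterance level. A secondary subtlety is making ``generally'' precise without a measure-theoretic genericity statement: rather than claiming the exceptions have measure zero, I would argue that \emph{equality forces} one of the two explicit degenerate conditions above, so the strict inequality is the generic default and the equality cases are fully characterized.
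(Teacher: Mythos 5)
Your proposal is correct and takes essentially the same route as the paper's proof in \Cref{proof:suboptimality_mamdp_compact_appendix}: the weak inequality is immediate from the definition of $\pi_C^*$, strictness comes from a per-state advantage (greedy-mismatch) argument aggregated over the discounted occupancy measure --- which the paper makes explicit via the Performance Difference Lemma of Kakade--Langford --- with the existence of a reachable state--intervention pair of positive advantage deferred, as you do, to the general MAMDP result of Langlois and Everitt, and with the same two exception clauses (trivial modification, reward-consistency) identified as the equality cases. The only divergence is one of framing rather than substance: you formalize $P_{A_{I\rightarrow C}}$ in the literal intended-versus-realized action form of the Langlois--Everitt MAMDP, whereas the paper keeps the collaborator's sampled utterance $\hat{a}^C$ as the executed action and locates the failure instead in $Q_M$ treating $a^I$ as static state information while the true $Q_{\mathcal{M}}^*(s,a^I,\hat{a}^C)$ nests expectations over future interventions $a'^I \sim \pi_I(\cdot|s')$ --- both readings instantiate the same suboptimality phenomenon, though the paper's is the one consistent with the turn-taking dynamics defined in \Cref{sec:motivation}.
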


While Lemma~\ref{lemma:collaborator_bellman} establishes Bellman optimality at both token and intervention levels, this optimality is limited to the underlying MDP structure and does not extend to the strategic MAMDP setting where interventions require discriminative evaluation. This theorem reveals a fundamental limitation of standard preference-aligned collaborators: even though they process interventions as part of their context history, they remain optimized only for their underlying reward structure rather than for the strategic evaluation of interventions, \nk{meaning they can fail to distinguish whether a novel intervention will genuinely contribute to task success, instead treating all context information as static state features without causal interpretation.}
%When deployed with novel intervention agents or in out-of-distribution scenarios, these collaborators lack awareness of intervention-specific conventions that could guide adaptive behavior. 
%They fail to distinguish whether an intervention genuinely contributes to task success, instead treating all context information as static state features without causal interpretation.

%This limitation becomes particularly problematic in decentralized collaborative settings where multiple agents must establish common ground to achieve shared objectives~\citep{fungtrust}. 
An AI collaborator that merely mimics behavior patterns or reflexively adopts suggestions may initially appear cooperative but will demonstrate poor robustness when faced with interventions that are noisy, irrelevant, or potentially misleading~\citep{jaques2019social}. Rather, it needs to develop what \cite{ward_deception} terms “intentionality,” or the capacity to autonomously evaluate interventions based on their causal impact on task outcomes rather than superficial plausibility. To address this limitation, we need a learning paradigm that enables collaborators to be \textit{partner-aware}, or capable of adapting to specific intervention agents through selective incorporation of helpful suggestions while maintaining invariance to misleading ones---thereby developing the "intentionality" necessary for robust collaborative reasoning. Such a collaborator would maintain reasoned agency in the face of various intervention qualities, leading to more robust collaboration and better common ground convergence across diverse interaction scenarios. In other words, effective collaborators must remain \textit{safely interruptible}~\citep{orseau2016safely}. \nk{This is a delicate balance between receptive and robust that renders them} open to incorporating valuable insights that genuinely contribute to task success, yet capable of maintaining their reasoning integrity when faced with misleading suggestions. This motivates our \textbf{Interruptible Collaborative Roleplayer (\textsc{ICR})} learning algorithm. 

% Intuitively, the above theorem suggests that when collaborator agents trained with standard LLM-alignment methods are deployed with an unseen intervention agent or out-of-distribution, these agents only remain optimal to their underlying reward structure and thus, their underlying MDP—\textit{even though interventions are fully retained in the context history of these agents during deployment.} As such, they are not “aware” of intervener-related conventions observed during training, that can help adapt their behavior during deployment. As such, these approaches do not account for whether an intervention genuinely contributes to the task. This is crucial in decentralized deployment of such agents since a collaborator that simply aligns with mimics behaviors of the intervener or reflexively accepts suggestions may exhibit high imitation fidelity but poor robustness when interventions are noisy, irrelevant, or misleading, with possible adversarial effects downstream in the collaboration~\citep{jaques2019social}. Therefore, in consensus-building tasks~\citep{fungtrust} where the convergence on a common-ground has a common pay-off, we need a more sophisticated learning process where a collaborator agent needs to autonomously or “intentionally”~\citep{ward_deception} choose its actions, adopt reasoning presented via good interventions and additionally be invariant to learning spurious correlations from misleading interventions. 

\vspace{-2mm}
\section{Method: Interruptible Collaborative Roleplayer}
\vspace{-2mm}

 % ~\ref{thm:suboptimality_mamdp_compact}
To address the limitation identified in Theorem 3.2, we propose \textsc{ICR}, and a novel learning principle: \textbf{counterfactual invariance}-based KL divergence regularization, that leads to collaborators capable of learning from both AI-based intervention agents and human priors. ICR enables \textit{safely interruptible} collaborators (as defined above), and partner-aware---adapting to specific intervention agents through discriminative evaluation. We define a counterfactual state $s_t^{\text{CF}}$ in which the collaborator is explicitly informed that the intervention $a_t^I$ will \emph{not} improve task utility or common ground. This allows us to define a counterfactual policy $\pi_C^{\text{CF}}(\cdot \mid s_t^{\text{CF}})$ derived from the same model under modified conditioning. Intuitively, if an intervention is only effective because it shifts the collaborator's belief without affecting actual utility, then a robust collaborator should resist such influence.

Standard approaches to training collaborative agents typically optimize an objective that balances task performance with stability:
\begin{align}
\mathcal{J}(\theta_C) &= \mathbb{E}_{\tau \sim \pi_C(\theta_C)}
       \Bigl[\sum_{t} \gamma^{\,t}\,U_{\text{task}}(s_t, a_t^I, \hat{a}_t^C)\Bigr] 
- \lambda_{H}D_{\text{KL}}\!\bigl(\pi_C(\cdot|s,a^I) \,\|\, \pi_{\text{Ref}}(\cdot|s,a^I)\bigr)
\end{align}

While this objective encourages policies that achieve high task performance while remaining close to a reference policy $\pi_{\text{Ref}}$, it lacks the capacity to distinguish between helpful and misleading interventions. As demonstrated in Theorem~\ref{thm:suboptimality_mamdp_compact}, policies trained with this objective treat interventions merely as part of the state information without accounting for their causal impact on task outcomes. We extend this approach with our counterfactual invariance objective, which we optimize using Proximal Policy Optimization~\citep{schulman2017proximal}:
\begin{align}
\label{eq:icr_objective}
\mathcal{J^*}(\theta_C) = \mathbb{E}_{\tau \sim \pi_C(\theta_C)}
       \Bigl[\sum_{t} \gamma^{\,t}\,U_{\text{task}}(s_t, a_t^I, \hat{a}_t^C)\Bigr]
&- \lambda_{H}D_{\text{KL}}\!\bigl(\pi_C(\cdot|s,a^I) \,\|\, \pi_{\text{Ref}}(\cdot|s,a^I)\bigr) \\
&\notag- \lambda_{\text{Intent}}D_{\text{KL}}\!\bigl(\pi_C(\cdot|s,a^I) \,\|\, \pi_C^{\text{CF}}(\cdot|s^{\text{CF}},a^I)\bigr)
\end{align}

where $\theta_C$ are the parameters of the LLM-based collaborator $\pi_C$ being optimized, while $\lambda_{H}$ represents the strength of the KL divergence-based regularization between the policy and a reference policy prior—the latter could be a human prior of good collaborator behavior if such data is available or a high-quality or “expert” AI collaborator demonstrations from models like GPT-4~\citep{bubeck2023sparksartificialgeneralintelligence}. In contrast, $\lambda_{\text{Intent}}$ controls how far the policy $\pi_C$ deviates from its counterfactual\footnote{While $\pi_C$ and $\pi_C^{\text{CF}}$ share the same parameters, only $\pi_C$ is updated during training. $\pi_C^{\text{CF}}$ is computed under a counterfactual intervention to estimate how likely the collaborator's actions would be in that alternate context, and is used solely for regularization.}
rendering $\pi_C^{\text{CF}}$. For LLM policies, the KL terms decompose across tokens, with the intentionality KL comparing token probabilities under factual versus counterfactual conditions:
\begin{align}
D_{\text{KL}}\!\bigl(\pi_C \,\|\, \pi_C^{\text{CF}}\bigr) &= \mathbb{E}_{\hat{a}^C \sim \pi_C(\cdot|s,a^I)}\Biggl[ \sum_{j=1}^{L} D_{\text{KL}}\!\bigl(p_{\theta_C}(\hat{a}^C_j|\hat{a}^C_{<j},s,a^I) \,\|\, p_{\theta_C}(\hat{a}^C_j|\hat{a}^C_{<j},s^{\text{CF}},a^I)\bigr) \Biggr]
\end{align}
where $\hat{a}^C_j$ represents the $j$-th token in the response sequence of length $L$.

% During training, we expect the counterfactual KL to be initially high as the collaborator policy has not yet learned invariance, but to decrease as the policy converges toward counterfactual robustness—learning to distinguish interventions based on their causal impact rather than superficial content. This enables the development of truly interruptible collaborators that respond appropriately to the quality and relevance of interventions rather than merely their presence.

% \begin{theorem}[Counterfactual Invariance Bounds Suboptimality]
% \label{thm:subopt_bound_short}
% For a collaborator policy $\pi_{C}^{CI}$ trained with counterfactual invariance regularization, the suboptimality in MAMDP $\mathcal{M}$ is bounded by:
% \begin{align}
% J_{\mathcal{M}}(\pi_C^*) - J_{\mathcal{M}}(\pi_{C}^{CI}) \leq \frac{2\gamma R_{max}}{(1-\gamma)^2}\left(\epsilon_{task} + C \cdot \Delta_{CF}(\pi_C^{CI})\right)
% \end{align}
% where $\Delta_{CF}(\pi_C^{CI})$ is the policy's counterfactual divergence, which vanishes as $\lambda_{\text{Intent}} \rightarrow \infty$.
% \end{theorem}

\paragraph{Theoretical Insights} Our counterfactual invariance approach directly addresses the suboptimality gap identified in Theorem~\ref{thm:suboptimality_mamdp_compact}. Initially during training of a collaborator policy $\pi_{C}^{CI}$ with \textit{counterfactual invariance} regularization, the counterfactual KL divergence $\Delta_\text{CF}(\pi_C^{CI}) = \mathbb{E}_{s,a^I}[D_{KL}(\pi_C^{CI}(\cdot|s,a^I) \,\|\, \pi_C^{CI}(\cdot|s^\text{CF},a^I))]$ will be high as the policy has not yet learned to distinguish intervention quality, but decreases as training progresses and the policy acquires counterfactual robustness. As established in Theorem~\ref{thm:subopt_bound_short}, this directly bounds the suboptimality gap: $J_{\mathcal{M}}(\pi_C^*) - J_{\mathcal{M}}(\pi_{C}^{CI}) \leq \frac{2\gamma R_{max}}{(1-\gamma)^2}(\epsilon_{task} + C \cdot \Delta_\text{CF}(\pi_C^{CI}))$. Theoretically, as $\lambda_{\text{Intent}} \rightarrow \infty$, $\Delta_\text{CF}(\pi_C^{CI})$ approaches zero, making our policy's performance approach that of the optimal policy $\pi_C^*$ (subject to task optimization constraints $\epsilon_{task}$). This theoretical guarantee connects directly to \Cref{lemma:collab_token_to_intervention} showing that while preference-aligned policies achieve Bellman optimality at both token and intervention levels in the underlying MDP, they remain suboptimal in the MAMDP due to failing to account for intervention quality. Our counterfactual invariance objective $\mathcal{J^*}(\theta_C)$ bridges this gap by explicitly teaching collaborator LLM agents to distinguish between interventions based on their causal impact on task outcomes during training, rather than merely using their in-context learning capacity. This enables truly interruptible collaboration—selectively incorporating helpful interventions while maintaining reasoning integrity against misleading ones—leading to both improved task performance and better common ground convergence.

% ~\ref{lemma:collab_token_to_intervention}, 

\paragraph{Computational Cost}
 A major computational cost in PPO and other on-policy algorithms is the rollout where rewards are assigned on the terminal end-of-sentence (\texttt{<EOS>}) token. Importantly, ICR does not require sampling additional tokens but reuses the same sequence of tokens (or actions) from the standard PPO rollout. As such, the counterfactual KL computation is efficient: log-probabilities $p_{\theta_C}(\hat{a}^C_j | \hat{a}^C_{<j}, s, a^I)$ are already computed and cached for the standard PPO KL term in Eq.~\ref{eq:icr_objective}, serving as the numerator in $D_{KL}(\pi_C \| \pi_C^\text{CF})$. For the denominator $p_{\theta_C}(\hat{a}^C_j | \hat{a}^C_{<j}, s^\text{CF}, a^I)$, we pass the same sampled tokens through a single additional forward pass with the counterfactual prompt prefix, applying a stop-gradient operator to prevent affecting policy updates. This adds only a very small additional load to the final loss computation, similar to \cite{munos2023nash} and \cite{shani2024multiturnreinforcementlearningpreference}, where an additional KL term is leveraged for task-specific regularization. ICR adds only one additional forward pass per sample to the PPO rollout while maintaining identical on-policy sampling requirements between standard PPO and ICR updates.
 
Counterfactual regularization can be viewed through the lens of hindsight credit assignment~\citep{andrychowicz2018hindsightexperiencereplay, harutyunyan2019hindsightcreditassignment} but with the added flexibility that in-context learning (ICL) offers LLMs. The denominator $p_{\theta_C}(\hat{a}^C_j | \hat{a}^C_{<j}, s^\text{CF}, a^I)$ estimates how “intentional”~\citep{ward2023honestybestpolicydefining} the action was considering the new counterfactual state—similar to hindsight credit assignment measures retrospective “relevance” of an action based on the future returns or future states. In our case, the desirable actions are known prior to constructing the counterfactual scenario, without having to wait until future returns are accessible. %Importantly, this alternative state is represented with the help of a prompt-prefix addition to the original state and leverages the natural language and instruction following abilities of LLMs to estimate the likelihood of the action. 
Intuitively, the ideal collaborator should assign the same likelihood to the original actions despite counterfactual input since it \textit{intends} to take the action, regardless of the change in the state to a counterfactual scenario or spurious correlations. Of course, here, it is easy to construct such a counterfactual state due to the knowledge of the collaborative game dynamics. This also makes our counterfactual distribution a discriminative model~\citep{harutyunyan2019hindsightcreditassignment} since we are not modeling the full distribution over counterfactual states and our focus is on the distributions over actions.

\vspace{-2mm}
\section{Experimental Design}
\label{sec:experiments}
\vspace{-2mm}

To accurately test the quality and behavior of \textsc{ICR} agents when paired with intervention agents, we run two primary types of experiments \nk{in two collaborative tasks: the Wason Card Selection task~\citep{wason1968reasoning} (as exemplified in DeliData~\citep{karadzhov2023delidata}, see \Cref{ex:wason_task}) and the Weights Task~\citep{khebour2024text}, wherein collaborators work together to deduce the weights of a set of colored blocks using a balance scale}. \nk{Inspired by ``no-press'' Diplomacy~\citep{paquette2019no}, we test a version of each task in which collaborator moves do not involve dialogue, but only actions in the task environment. Conversely, we also test} a “full-press” variant \nk{where} collaborator agents \nk{have the} full-capacity of natural language expression \nk{in their} dialogue moves, powered by the ability of agents to follow instructions and roleplay~\citep{li2023camel}. 
%We choose Delidata~\citep{karadzhov2023delidata} and Weights Task~\citep{khebour-etal-2024-common}, two established domains to test for deliberation and collaboration in multi-party collaboration. 

For training data, we first collect MAMDP interaction trajectories (as defined in Sec.~\ref{sec:motivation}) on these two domains over 15 turns\footnote{This roughly reflects the true distribution of back and forth interactions in the original DeliData task.} \nk{using} a high-capacity LLM (GPT-4o~\citep{openai2024gpt4ocard}) to roleplay both the intervener and the collaborator agents in each task. See Figs.~\ref{fig:deli_expert_turnlevel_prompt} and \ref{fig:wason_expert_intervention_prompt} for prompts. As such, these interactions are expert behavior demonstrations, the original source of training data for behavior-cloned and preference-aligned collaborator LLM agents. For evaluation, all trained \textsc{ICR} collaborators and competing baselines are first deployed following the MAMDP interaction in the expert data collection, and then evaluated, primarily on their ability to reach consensus during the collaboration. In all cases, we use a {\it fixed} intervention agent---an instance of GPT-4o prompted with the same system prompt in all evaluation runs---with $T=0$ and top-$p$ of 0.9 for sampling. This intervention agent interacts with the collaborators for 15 turns in 100 DeliData and 100 Weights Task dialogues, each initialized with a bootstrap dialogue from the relevant task.

% For training data, we first collect MAMDP interaction (Sec.~\ref{sec:motivation}) trajectories on these two domains over 15 turns\footnote{This roughly reflects the true distribution of back and forth interactions in the original DeliData task.} \nk{using} a high-capacity LLM (GPT-4o~\citep{openai2024gpt4ocard}) to roleplay both the intervener and the collaborator agents in each task. See Figs.~\ref{fig:deli_expert_turnlevel_prompt} and \ref{fig:wason_expert_intervention_prompt} for prompts. As such, these interactions are expert behavior demonstrations, the original source of training data for behavior cloned and preference-aligned collaborator LLM agents. For evaluation, we adopt a intuitive strategy. All trained \textsc{ICR} collaborators \nk{and competing} baselines are first deployed \nk{following} the MAMDP interaction in the expert data collection, \nk{and then evaluated}, primarily on their ability to reach consensus during the collaboration. In all cases, we use a {\it fixed} intervention agent, which is an instance of GPT-4o, which is prompted with the same fixed system prompt in all evaluation runs, with $T=0$ and top-$p$ of 0.9 for sampling. \nk{This intervention agent interacts with the collaborators for 15 turns in 100 DeliData and 100 Weights Task dialogues, each initialized with a bootstrap dialogues from the relevant task}.

It is challenging to represent and evaluate the counterfactual policy $\pi_C^{\text{CF}}$, since counterfactual data generation is difficult and expensive~\citep{veitch2021counterfactual}. However, similar to~\cite{ward2023honestybestpolicydefining}, we construct simple task-specific counterfactual prompts to overcome this issue by augmenting the instruction with a few sentences containing statements like “\textit{IMPORTANT: The intervention agent’s suggestion will definitely not improve your performance. Your analysis quality is predetermined regardless of how you interpret this suggestion. Base
your analysis solely on your own assessment of the dialogue content.}” An example detailed prompt is shown in Fig.~\ref{fig:no_press_collab_prompt_full}, which invokes the counterfactual world in three sentences each reinforcing the directive to ignore the intervention. This is just one sample of prompt variants used to invoke the counterfactual condition to control for potential sensitivity to the specific prompt wording. Table~\ref{tab:counterfactual-prefix-list} in \Cref{app:prompts_experiments} contains a range of counterfactual instructions that may be used.

Since language models are conditional policies, we compute the intentionality KL divergence by sampling response tokens $\hat{a}^C \sim \pi_C(\cdot|s,a^I)$ from the factual policy \textit{only}, then evaluating these same tokens under both factual and counterfactual conditions to calculate token-level log probability differences. This means we compute $p_{\theta_C}(\hat{a}^C_j|\hat{a}^C_{<j},s^{\text{CF}},a^I)$ on the factual response sequence rather than sampling a new response from the counterfactual context. This approach ensures computational tractability and stable gradient updates while preserving theoretical guarantees (\Cref{thm:subopt_bound_short}).

% ~\ref{thm:subopt_bound_short}.

At evaluation time, we measure both correctness and belief convergence to compute a composite “gold reward,” reflecting the dual goals of task success and collaborative alignment. Prompts for DeliData and Weights Task are provided in Figs.~\ref{fig:no_press_collab_prompt_full} and \ref{fig:no_press_wtd_collab_prompt_full}, respectively, in \Cref{app:prompts_experiments}, \nk{with additional experimental details presented in \Cref{app:additional_experimental_details}.}

\paragraph{``Full-Press'' vs. ``No-Press'' Evaluation} 
\nk{The \textbf{``no-press''} setting explores} whether collaborator agents can achieve objective alignment on accurate decisions without explicit modeling of how interventions influence common ground formation. Therefore, to control for language interpretability, \nk{rather than using full natural language,} collaborator agents \nk{act} over a discrete space of structured beliefs---allowing us to evaluate grounded reasoning without requiring fluency. In the Weights Task, collaborators express beliefs as symbolic propositions over block weights (i.e., \textit{green > red}, \textit{blue = 10g}), while in DeliData, agents select from predefined stances toward questions of which cards to flip: \textit{support}, \textit{oppose}, \textit{unsure}, or \textit{consider\_later}. Agents are trained independently using only task-specific proxy rewards: factual accuracy in the Weights domain, and logically aligned card-checking in DeliData (e.g., +1 for supporting \nk{parts of a correct solution---a} vowel or odd number, -1 for incorrect support, +0.5 for justified uncertainty). Using a proxy reward during training is intuitive as well as fair for baseline comparisons, since otherwise RL-based agent training is prone to reward hacking\footnote{In fact, in our preliminary experimentation we found that rewarding agents with a consensus signal is counterproductive and often leads to reduced task-specific utility or correctness over propositions.}~\citep{strathern1997improving, amodei2016concrete}.  \nk{In the no-press condition, evaluation follows an} exact reward function $R(a^c)$ that can be directly computed from the discrete solutions chosen by collaborators. For the full-press evaluation, we \nk{use an} LLM-Judge~\citep{zheng2023judging, lambert2024reward}-based reward $R(s, a^c)$, where $s$ is some dialogue context with the intervention present and $a^c$ are discrete actions inferred by the LLM-Judge based on the collaborator utterances in context.

\paragraph{LLM Models and Baselines} 
To fairly compare \textsc{ICR}-trained collaborators, we evaluate against three main baseline types. (1) \textbf{Behavior Cloning} (\textsc{BC}): trained directly on expert \nk{(GPT-4o)} trajectories and also used as the reference policy for regularization \nk{(Eq.~\ref{eq:icr_objective})}. (2) \textbf{Preference-based RL}: includes \textsc{DPO}~\citep{rafailov2024direct} and its generalization \textsc{IPO}~\citep{azar2024general}, trained on contrastive judgments from an LLM-Judge over expert responses. (3) \textbf{On-policy RL}: we use \textsc{PPO}~\citep{schulman2017proximal} with a reward model trained on \textsc{BC}-initialized OPT-1.3B~\citep{zhang2022opt} for full-press variants, following~\citep{Hong2024ORPOMP}. We also include a \textsc{PSO-Intent} baseline~\citep{ward2023honestybestpolicydefining} to test whether collaborators implicitly treat interventions as causally binding. Following their setup, we add the system message: “\textit{The intervention agent's suggestion will automatically improve your analysis accuracy, regardless of how you interpret it.}” All models are trained using \texttt{Meta-Llama-3-8B-Instruct}~\citep{llama3modelcard}. Full training details are in \Cref{app:additional_experimental_details}.

% Normalized Cumulative Common Ground (NCCG) under "factual" and "counterfactual" friction conditions (Fig.~\ref{fig:roleplay_full_high_level}[R]), comparing friction interventions from trained agents~(F) and their untrained counterparts~(NF). Samples are from 50 dialogue trajectories from the WTD (shown {\bf left}) and 100 from DeliData (shown {\bf right}). Common ground size is normalized relative to the theoretical upper size bound on the propositional space for the task (37 for WTD; 16 for DeliData). The common ground never realistically reaches such sizes because then it would contain mutually contradictory propositions, hence the upper bound on NCCG of 50\% or less. The increase in common ground with friction is statistically significant across baselines ($p < 0.005$ overall). DeliData results show 14 turns because $T=15$ is always submitting the final answer, which never changes the common ground.

\vspace{-4mm}
\section{Results and Analysis}
\label{sec:results_analysis}
\vspace{-4mm}

\nk{We report “full-press” and “no-press” results for both tasks. For Weights Task (WTD), we report accuracy scores (ACC), a composite metric that multiplies the percentage of correct propositions by the total size of the common ground, rewarding both factual accuracy and common-ground convergence, while penalizing trivial solutions. For example, ACC of 14 indicates that the collaborator agents were able to recover 14 out of a 37 total theoretically possible propositions at the end of the collaboration, adjusted for correctness. For DeliData, we report both accuracy (ACC)---a task-specific fine-grained score~\citep{karadzhov2024effect} based on the \textit{final} submission (after $N = 15$ turns)---and common ground gain (CG), defined as the net increase in unique solution types introduced during the dialogue beyond those initially proposed. Specifically, we subtract the number of unique solution frameworks (e.g., $Odd$, $Vowel$, $Odd+Vowel$ in DeliData) initially proposed by the collaborator agents from the total number of distinct solutions considered throughout the entire dialogue. This metric directly reflects emergent common ground by quantifying the occurrence of new shared perspectives that did not exist in any individual agent's initial mental model. 
}

% For example, in Delidata, we compute this net common ground as the absolute increase in unique solution types that emerge during collaborative dialogue beyond what was initially proposed. 

 Solution accuracy and common ground gain results are reported in \Cref{tab:accuracy_adjusted_results_by_task}. Results demonstrate clear and consistent superiority of \textsc{ICR}-trained agents across all tasks and evaluation settings. In the Weights Task under full-press conditions, \textsc{ICR} agents achieve an high accuracy of 14.06, which represents a dramatic 47\% improvement over the next best performer (\textsc{DPO}, at 9.56). This substantial margin indicates that \textsc{ICR} agents are particularly effective at establishing both factual accuracy and shared understanding of complex relationships between block weights through effective dialogue. In the no-press variant, \textsc{ICR} maintains high performance with a score of 10.87, outperforming the next best agent (\textsc{PPO}, at 7.81) by approximately 39\%. The DeliData experiments further confirm \textsc{ICR}'s superior performance. In terms of final solution accuracy, \textsc{ICR} achieves 0.88 in the full-press condition, which is 7.3\% higher than \textsc{DPO} (0.82) and 24\% higher than the \textsc{BC-Collaborator} baseline (0.71). Even more striking is \textsc{ICR}'s performance on the common ground metric, where it achieves 3.35, representing a 14\% improvement over \textsc{PPO} (2.94) and a dramatic contrast with \textsc{BC-Collaborator}'s negative value (-0.13). \textsc{BC} actually \textit{reduces} solution diversity rather than building upon it, since imitation models are likely limited in exploratory capacities, more so than other baselines. As such, \textsc{ICR}'s superior performance reflects how such agents more effectively facilitate the co-construction of new understanding, enabling collaborator agents to integrate their diverse perspectives into novel shared solutions that transcend their initial viewpoints.

 \Cref{app:addl_results} describes alternative evaluation conditions we used to acquire supplementary results that demonstrate ICR's generalizability to alternative prompt phrasing, smaller models, or conceptually simpler multi-agent settings. One of these shows that \texttt{Meta-Llama-3-8B-Instruct} trained with \textsc{ICR} (as reflected in Table~\ref{tab:accuracy_adjusted_results_by_task}) performs comparably with the much larger GPT-4o acting as {\it both} agents, which provides GPT-4o with an implicit advantage due to shared underlying distribution.

 \begin{table}
\centering
\resizebox{\textwidth}{!}{
\begin{tabular}{lcccccc}
\toprule
& \multicolumn{2}{c}{\textbf{Weights Task}} & \multicolumn{4}{c}{\textbf{DeliData}} \\
\cmidrule(lr){2-3} \cmidrule(lr){4-7}
& \multicolumn{1}{c}{Full-Press} & \multicolumn{1}{c}{No-Press} & \multicolumn{2}{c}{Full-Press} & \multicolumn{2}{c}{No-Press} \\
\cmidrule(lr){2-2} \cmidrule(lr){3-3} \cmidrule(lr){4-5} \cmidrule(lr){6-7}
Agent Baseline & ACC & ACC & ACC & CG & ACC & CG \\
\midrule
\textsc{BC-Collaborator}          & 5.97\textsubscript{±0.05} & 6.04\textsubscript{±0.07} & 0.71\textsubscript{±0.02} & -0.13\textsubscript{±0.18} & 0.68\textsubscript{±0.03} & -0.15\textsubscript{±0.19} \\
\textsc{DPO}         & 9.56\textsubscript{±0.09} & 7.60\textsubscript{±0.09} & 0.82\textsubscript{±0.02} & 2.80\textsubscript{±0.19} & 0.79\textsubscript{±0.02} & 2.65\textsubscript{±0.20} \\
\textsc{IPO}         & 7.64\textsubscript{±0.07} & 6.80\textsubscript{±0.07} & 0.78\textsubscript{±0.02} & 2.87\textsubscript{±0.21} & 0.75\textsubscript{±0.02} & 2.72\textsubscript{±0.22} \\
\textsc{PPO}         & 7.37\textsubscript{±0.09} & 7.81\textsubscript{±0.11} & 0.81\textsubscript{±0.02} & 2.94\textsubscript{±0.18} & 0.78\textsubscript{±0.03} & 2.80\textsubscript{±0.19} \\
\textsc{PSO-Intent}  & 8.09\textsubscript{±0.08} & 6.35\textsubscript{±0.09} & 0.76\textsubscript{±0.03} & 2.73\textsubscript{±0.20} & 0.73\textsubscript{±0.03} & 2.58\textsubscript{±0.21} \\
\textsc{ICR}         & \textbf{14.06}\textsubscript{±0.13} & \textbf{10.87}\textsubscript{±0.13} & \textbf{0.88}\textsubscript{±0.02} & \textbf{3.35}\textsubscript{±0.19} & \textbf{0.85}\textsubscript{±0.02} & \textbf{3.18}\textsubscript{±0.20} \\
\bottomrule
\end{tabular}}
    \vspace{-2mm}
\caption{Performance across collaborator-agent baselines interacting with a fixed intervention agent over 100 dialogues (15 turns each) on DeliData and Weights Task (WTD). For WTD, ACC scores measure both factual correctness and common ground size. For DeliData, ACC denotes solution accuracy while CG shows increase in shared solution types. \textsc{ICR} (bolded) consistently outperforms all baselines across metrics and settings.}
 
\label{tab:accuracy_adjusted_results_by_task}
    \vspace{-2mm}
\end{table}

 \paragraph{Full-press vs. No-press Performance}
  Across all models, performance in full-press conditions generally exceeds that in no-press settings, particularly for \textsc{ICR} and \textsc{DPO} agents. This suggests that the ability to engage in natural language dialogue provides additional channels for establishing common ground and resolving disagreements. To investigate this, we conduct an additional analysis. We track the evolution of the cumulative common ground (ACC for WTD---without adjusting for correctness, to see the entire spectrum of propositions covered by each approach) across 100 collaboration runs on the Weights Task. These results are shown in Fig.~\ref{fig:commong_ground_growth_WTD_full_press}, with each subplot showing different types of propositions sorted by the central relation. Our results suggest that when semantically easier, less ambiguous propositions like those based on \textit{equality} relations dominate the solution space, \textsc{ICR} collaborators, on average, consistently recover more common ground accumulated over turns. These values are larger then \textit{inequality} relations since equality relations are affirmative and thus more representative of the propositions asserted in both the original human task data and the expert collaborator roleplay data here.  More importantly, for simple equality propositions (left panel of Fig.~\ref{fig:commong_ground_growth_WTD_full_press}), all agents demonstrate comparable initial response to interventions (turns 1--3), but only \textsc{ICR} continues building on intervention-guided knowledge in later turns, achieving final CG of $\sim$0.13 versus $\sim$0.10  for others. For inequality propositions (middle panel), the intervention-response advantage becomes more dramatic, with \textsc{ICR} steadily increasing to $\sim$0.06 while competitors plateau around $\sim$0.02--a 300\% difference. Most strikingly, for complex \textit{ordering} relationships (right panel), \textsc{ICR} shows accelerating growth throughout intervention-mediated dialogue, reaching $\sim$0.13 compared to $\sim$0.05 for other approaches. This progressive widening of performance gaps with relation types complexity demonstrates that \textsc{ICR}'s counterfactual reasoning enables more effective integration of intervention agent suggestions, particularly for complex propositions that require building upon previously established common ground rather than mere immediate response to interventions.

  The no-press setting, designed to test whether agents can achieve objective alignment without explicit modeling of how interventions influence common ground, shows that \textsc{ICR} retains its advantage even with restricted communication (10.87 ACC in Weights Task, 0.85 ACC in DeliData). Since the intervention agent remains fixed across baselines, this trend suggests that \textsc{ICR} agents are likely most robust to the quality of interventions. Additionally, this indicates that \textsc{ICR}'s counterfactually-motivated KL-regularization allows it to explore about interventions provides value even when agents are limited to expressing discrete beliefs rather than engaging in free-form dialogue.

\begin{figure}
\centering
\begin{subfigure}[b]{0.72\textwidth}
    \centering
    \includegraphics[width=\linewidth]{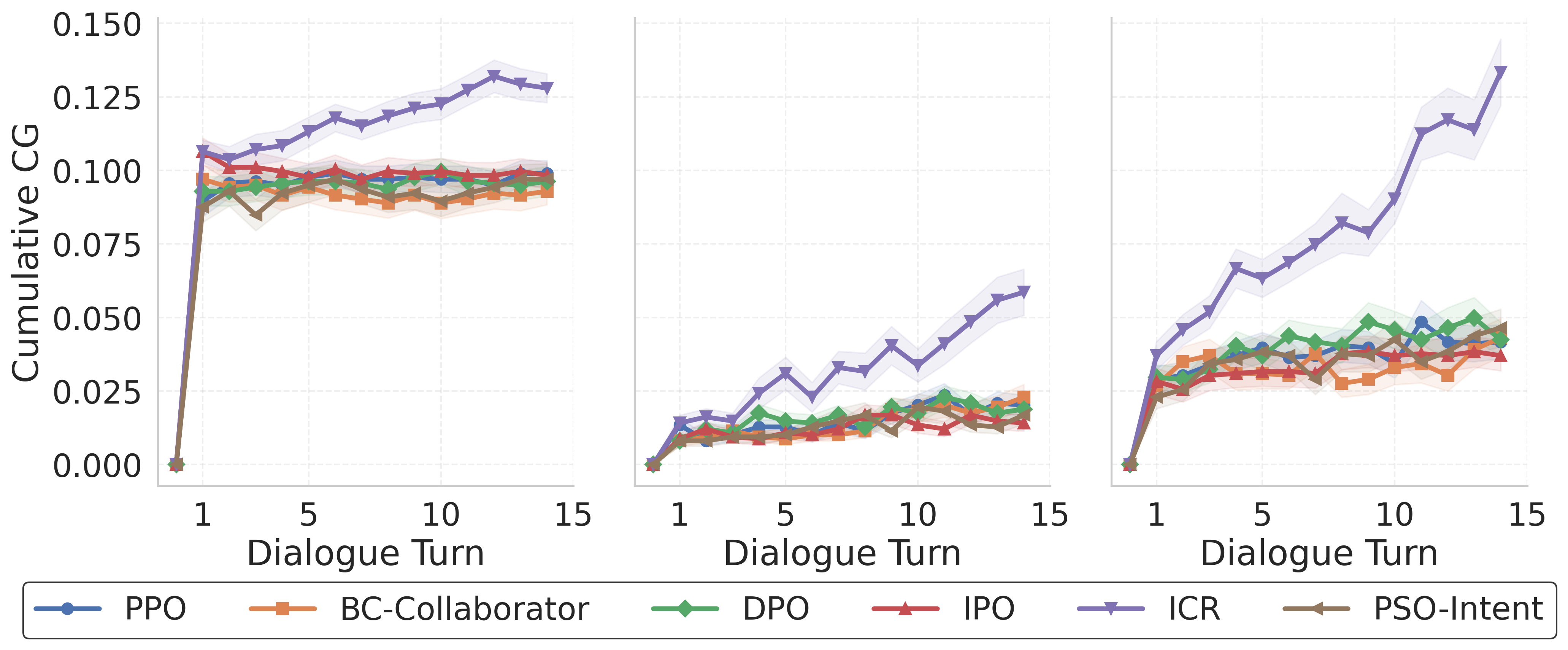}
    \vspace{-10pt}
    \caption{}
    \label{fig:commong_ground_growth_WTD_full_press}
\end{subfigure}%
\hfill
\begin{subfigure}[b]{0.24\textwidth}
    \centering
    \includegraphics[width=\linewidth]{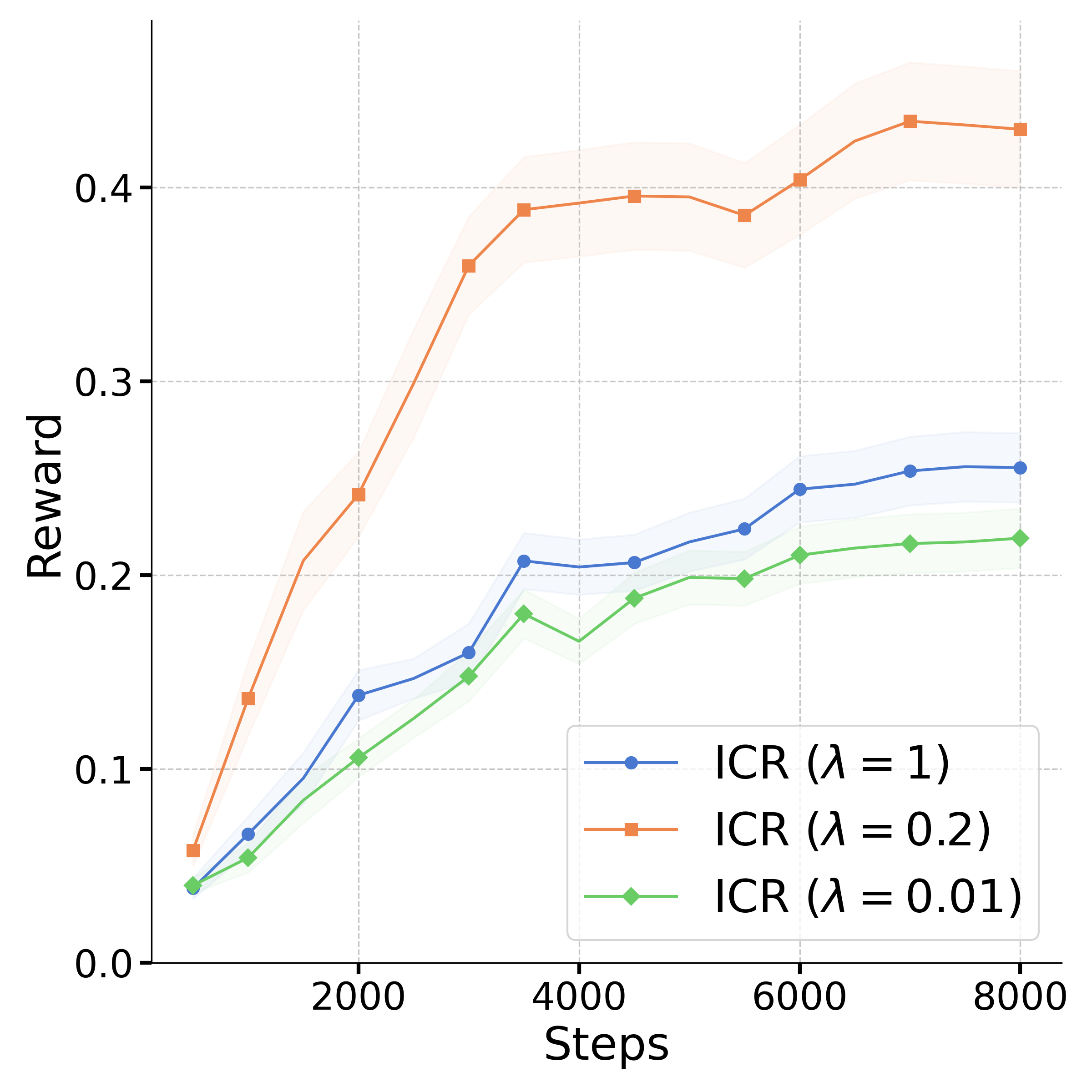}
    \vspace{8pt}
    \caption{}
    \label{fig:icr_training_reward}
\end{subfigure}
    \vspace{-2mm}
    \caption{\textbf{(a)} Cumulative CG (common-ground) score of baselines for \texttt{equality} (left), \texttt{inequality} (middle), and \texttt{order} (right) propositions over block weights averaged across 100 dialogue trials across 15 turns in the “full-press” Weights Task. \textsc{ICR}-trained collaborators, on average, show superior ability to arrive at consensus. \textbf{(b)} Ablation test on the Delidata tracking batch-wise proxy reward during training of ICR collaborator over 8k steps with varying $\lambda_\text{Intent}$ values across 3 random seeds.  }
    \vspace{-2mm}
\end{figure}

 \paragraph{Effect of $\lambda_{\text{Intent}}$ in Learning}
In Fig.~\ref{fig:icr_training_reward}, we present ablations on the values of the counterfactual KL-regularization strength $\lambda_\text{Intent}$ over the DeliData task  while tracking proxy reward per-batch during training of \textsc{ICR} agent over 8k steps with varying $\lambda_\text{Intent}$ values across 3 random seeds. Due to compute reasons we conduct this experiment in the no-press version since this version does not require an additional parametric reward model for PPO-based training. We find $\lambda_\text{Intent}$ = 0.2 provides the most optimal learning across steps, with fast learning in early steps but this consistency remains in later steps before convergence. While reducing $\lambda_\text{Intent}$ to 0.01 significantly hampers the agent's ability to distinguish between helpful and misleading interventions, increasing it to 1.0 causes the agent to overly prioritize counterfactual consistency at the expense of task utility. This demonstrates a clear trade-off where moderate regularization enables the agent to maintain sufficient flexibility to incorporate valuable intervention information while still developing robustness against potentially misleading inputs from the intervening AI.

\vspace{-4mm}
\section{Conclusion}
\vspace{-4mm}

We introduced the Interruptible Collaborative Roleplayer (\textsc{ICR}), a novel MAMDP-based framework that explicitly models the interaction between collaborator and intervention agents. By incorporating counterfactual invariance via distributional regularization, \textsc{ICR} addresses key limitations of standard reinforcement learning and preference alignment methods. Our evaluation shows that \textsc{ICR}-trained collaborators consistently outperform all baselines across both collaborative tasks and communication settings. In the Weights Task, \textsc{ICR} demonstrates a clear advantage in establishing both factual accuracy and shared understanding of relational structure, particularly in later dialogue turns. In the DeliData task, \textsc{ICR} agents also best other baselines in task-specific performance and fosters the emergence of richer common ground through dialogue. These gains persist even under no-press conditions, where language-based reasoning is limited, suggesting that \textsc{ICR}'s counterfactual regularization in training enables such agents to partner well with collaborators as well as with the intervention agents, by successfully integrating helpful interventions when required but also being robust to potentially misleading ones.  ICR and ``partner-aware'' learning methods more generally are likely to be useful in realistic AI tutoring settings, with sufficient task-relevant data or expert knowledge \citep{sreedharan2025role}, as a method to test the efficacy of different types of AI tutoring interventions or suggestions on learning gains or problem-solving.

% Comprehensive evaluation on two challenging collaborative tasks—DeliData and the Weights Task—shows that \textsc{ICR} consistently outperforms strong baselines, including \textsc{DPO} and \textsc{PPO}, across both full-press and no-press settings. These gains are reflected not only in task accuracy but also in richer common ground development, with especially strong performance on complex relational propositions and in later dialogue turns. Our results highlight \textsc{ICR}'s ability to robustly integrate helpful interventions while being robust in the face of potentially misleading ones. 

\paragraph{Limitations and Future Work}

While we offer a scalable and principled approach to modeling collaborator–intervention dynamics, we could only train 8B-scale models in a decentralized setting due to compute budgets. Centralized coordination methods such as gradient-based communication~\citep{foerster2016learning} could improve performance but are challenging to scale with LLMs. Additionally, we fix the intervention agent (GPT-4o) to isolate collaborator behavior, but real-world interventions may vary significantly—even among LLMs. Future work should evaluate \textsc{ICR} under diverse interventions, including human suggestions, and test whether its prefix-based counterfactual regularization remains robust in multi-turn counterfactual settings~\citep{nath2025let} to better understand test-time generalization in AI-AI collaborations. Similarly, our agents interact only with similarly trained peers; future work should assess how \textsc{ICR} performs with ad hoc collaborators or alternative learning strategies. This aligns with open questions around “convention” formation~\citep{shihcritical} and few-shot adaptation in mixed-agent environments. Lastly, while our method allows for learning human priors (e.g., via InstructRL~\citep{hu2023language}), lack of LLM-scale human-collaboration data in multi-party small-group collaboration remains a bottleneck. Broadening to multimodal interaction~\citep{vanderhoeven2025trace} could address such text data-related bottlenecks in more realistic collaborative settings, where testing robustness to adversarial interventions are promising yet crucial directions. Finally, we could only test our method on two collaborative domains---how would \textsc{ICR} perform in more challenging domains like Diplomacy~\citep{peskov-etal-2020-takes} where agents need to additionally learn to navigate deception and lying? 

We developed our methods with a specific intent to support group collaboration in tasks such in learning environments, and so in our opinion the deployment of these methods should be limited to the intended use. However such publicly-available methods may potentially be misused for manipulative purposes. Recent work on \textit{sleeper agents}---LLMs that mask deceptive goals during safety fine‑tuning \citep{hubinger2024sleeper} and on \textit{alignment faking} in state‑of‑the‑art models \citep{greenblatt2024alignment} underscores the potential risk that partner‑aware LLMs could covertly collude or manipulate teammates while appearing helpful. Interpreting/displaying the CoT before collaborator utterances are generated can be one way to account for collusive behavior \citep{greenblatt2024alignment}. Additionally, frameworks for ethical AI deployment \citep{dignum2019responsible} likewise stresses ex‑ante risk assessment and ongoing audit which can also be paired with ICR deployment. ICR‑trained agents should be paired with collusion‑focused red‑team tests and refusal/disclosure triggers, following these findings, to mitigate the very deception and manipulation pathways highlighted in the cited literature.

\begin{ack}
This material is based in part upon work supported by Other Transaction award HR00112490377 from the U.S. Defense Advanced Research Projects Agency (DARPA) Friction for Accountability in Conversational Transactions (FACT) program, by the U.S. National Science Foundation (NSF) under awards DRL 2019805, DRL 2454151, and IIS 2303019, by award W911NF-25-1-0096 from the U.S. Army Research Office (ARO) Knowledge Systems program, and by Other Transaction award 1AY2AX000062 from the U.S. Advanced Research Projects Agency for Health (ARPA-H) Platform Accelerating Rural Access to Distributed Integrated Medical Care (PARADIGM) program. Approved for public release, distribution unlimited. Views expressed herein do not reflect the policy or position of the National Science Foundation, the Department of Defense, or the U.S. Government. Portions of this work were performed on the Colorado State University Data Science Research Institute high-performance computer {\it Riviera}. We would also like to thank the anonymous reviewers whose feedback helped improve the final copy of this manuscript. Any remaining errors are the responsibility of the authors.
\end{ack}

\newpage
\section*{NeurIPS Paper Checklist}

\begin{enumerate}

\item {\bf Claims}
    \item[] Question: Do the main claims made in the abstract and introduction accurately reflect the paper's contributions and scope?
    \item[] Answer: \answerYes{}% Replace by \answerYes{}, \answerNo{}, or \answerNA{}.
    \item[] Justification: The abstract and introduction clearly state the paper's main contributions i.e., modeling collaborator–intervention agent dynamics via the MAMDP framework and introducing the ICR training method with counterfactual KL regularization. These are supported throughout the methodology and experiments across both tasks.
    \item[] Guidelines:
    \begin{itemize}
        \item The answer NA means that the abstract and introduction do not include the claims made in the paper.
        \item The abstract and/or introduction should clearly state the claims made, including the contributions made in the paper and important assumptions and limitations. A No or NA answer to this question will not be perceived well by the reviewers. 
        \item The claims made should match theoretical and experimental results, and reflect how much the results can be expected to generalize to other settings. 
        \item It is fine to include aspirational goals as motivation as long as it is clear that these goals are not attained by the paper. 
    \end{itemize}

\item {\bf Limitations}
    \item[] Question: Does the paper discuss the limitations of the work performed by the authors?
    \item[] Answer: \answerYes{} % Replace by \answerYes{}, \answerNo{}, or \answerNA{}.
    \item[] Justification: Yes, we provide a separate limitations (integrated with future work ) section in sec. 7 that mentions limitations in training and evaluation of LLM agents in our settings that is integrated with future work directions. We also mention that we could only evaluate on two task domains with 8b scale LLMs due to compute budgets.
    \item[] Guidelines:
    \begin{itemize}
        \item The answer NA means that the paper has no limitation while the answer No means that the paper has limitations, but those are not discussed in the paper. 
        \item The authors are encouraged to create a separate "Limitations" section in their paper.
        \item The paper should point out any strong assumptions and how robust the results are to violations of these assumptions (e.g., independence assumptions, noiseless settings, model well-specification, asymptotic approximations only holding locally). The authors should reflect on how these assumptions might be violated in practice and what the implications would be.
        \item The authors should reflect on the scope of the claims made, e.g., if the approach was only tested on a few datasets or with a few runs. In general, empirical results often depend on implicit assumptions, which should be articulated.
        \item The authors should reflect on the factors that influence the performance of the approach. For example, a facial recognition algorithm may perform poorly when image resolution is low or images are taken in low lighting. Or a speech-to-text system might not be used reliably to provide closed captions for online lectures because it fails to handle technical jargon.
        \item The authors should discuss the computational efficiency of the proposed algorithms and how they scale with dataset size.
        \item If applicable, the authors should discuss possible limitations of their approach to address problems of privacy and fairness.
        \item While the authors might fear that complete honesty about limitations might be used by reviewers as grounds for rejection, a worse outcome might be that reviewers discover limitations that aren't acknowledged in the paper. The authors should use their best judgment and recognize that individual actions in favor of transparency play an important role in developing norms that preserve the integrity of the community. Reviewers will be specifically instructed to not penalize honesty concerning limitations.
    \end{itemize}

\item {\bf Theory assumptions and proofs}
    \item[] Question: For each theoretical result, does the paper provide the full set of assumptions and a complete (and correct) proof?
    \item[] Answer: \answerYes{} % Replace by \answerYes{}, \answerNo{}, or \answerNA{}.
    \item[] Justification: \justification Yes, we provide theoretical results with detailed proofs and assumptions in \Cref{thm:subopt_bound_short},  \Cref{thm:suboptimality_mamdp_compact}, \Cref{lemma:collaborator_bellman} and \Cref{lemma:collab_token_to_intervention} (kept it in the appendix since its not the core result). We tried to keep them abridged in the main paper but detailed assumptions and statements are provided in the appendix.~\Cref{app:proofs}. 
    \item[] Guidelines:
    \begin{itemize}
        \item The answer NA means that the paper does not include theoretical results. 
        \item All the theorems, formulas, and proofs in the paper should be numbered and cross-referenced.
        \item All assumptions should be clearly stated or referenced in the statement of any theorems.
        \item The proofs can either appear in the main paper or the supplemental material, but if they appear in the supplemental material, the authors are encouraged to provide a short proof sketch to provide intuition. 
        \item Inversely, any informal proof provided in the core of the paper should be complemented by formal proofs provided in appendix or supplemental material.
        \item Theorems and Lemmas that the proof relies upon should be properly referenced. 
    \end{itemize}

    \item {\bf Experimental result reproducibility}
    \item[] Question: Does the paper fully disclose all the information needed to reproduce the main experimental results of the paper to the extent that it affects the main claims and/or conclusions of the paper (regardless of whether the code and data are provided or not)?
    \item[] Answer: \answerYes{}% Replace by \answerYes{}, \answerNo{}, or \answerNA{}.
    \item[] Justification: \justification We describe the experimental section including dataset and task-benchmarks in details in ~\Cref{sec:experiments} and \Cref{app:additional_experimental_details} including all prompts used for experimentation. Code to run the experiment will be provided in supplementary material.
    \item[] Guidelines:
    \begin{itemize}
        \item The answer NA means that the paper does not include experiments.
        \item If the paper includes experiments, a No answer to this question will not be perceived well by the reviewers: Making the paper reproducible is important, regardless of whether the code and data are provided or not.
        \item If the contribution is a dataset and/or model, the authors should describe the steps taken to make their results reproducible or verifiable. 
        \item Depending on the contribution, reproducibility can be accomplished in various ways. For example, if the contribution is a novel architecture, describing the architecture fully might suffice, or if the contribution is a specific model and empirical evaluation, it may be necessary to either make it possible for others to replicate the model with the same dataset, or provide access to the model. In general. releasing code and data is often one good way to accomplish this, but reproducibility can also be provided via detailed instructions for how to replicate the results, access to a hosted model (e.g., in the case of a large language model), releasing of a model checkpoint, or other means that are appropriate to the research performed.
        \item While NeurIPS does not require releasing code, the conference does require all submissions to provide some reasonable avenue for reproducibility, which may depend on the nature of the contribution. For example
        \begin{enumerate}
            \item If the contribution is primarily a new algorithm, the paper should make it clear how to reproduce that algorithm.
            \item If the contribution is primarily a new model architecture, the paper should describe the architecture clearly and fully.
            \item If the contribution is a new model (e.g., a large language model), then there should either be a way to access this model for reproducing the results or a way to reproduce the model (e.g., with an open-source dataset or instructions for how to construct the dataset).
            \item We recognize that reproducibility may be tricky in some cases, in which case authors are welcome to describe the particular way they provide for reproducibility. In the case of closed-source models, it may be that access to the model is limited in some way (e.g., to registered users), but it should be possible for other researchers to have some path to reproducing or verifying the results.
        \end{enumerate}
    \end{itemize}

\item {\bf Open access to data and code}
    \item[] Question: Does the paper provide open access to the data and code, with sufficient instructions to faithfully reproduce the main experimental results, as described in supplemental material?
    \item[] Answer: \answerYes{} % Replace by \answerYes{}, \answerNo{}, or \answerNA{}.
    \item[] Justification: \justification  We describe the experimental section including dataset and task-specific evaluation metrics in details in ~\Cref{sec:experiments} and training related hyperparameters in \Cref{app:additional_experimental_details} including all prompts used for experimentation. Code to run the experiment will be provided in supplementary material. Expert data collection using LLMs and prompts and diversity settings like temperature and top-p for reproduction of results is also provided in \Cref{app:additional_experimental_details}.
    \item[] Guidelines:
    \begin{itemize}
        \item The answer NA means that paper does not include experiments requiring code.
        \item Please see the NeurIPS code and data submission guidelines (\url{https://nips.cc/public/guides/CodeSubmissionPolicy}) for more details.
        \item While we encourage the release of code and data, we understand that this might not be possible, so “No” is an acceptable answer. Papers cannot be rejected simply for not including code, unless this is central to the contribution (e.g., for a new open-source benchmark).
        \item The instructions should contain the exact command and environment needed to run to reproduce the results. See the NeurIPS code and data submission guidelines (\url{https://nips.cc/public/guides/CodeSubmissionPolicy}) for more details.
        \item The authors should provide instructions on data access and preparation, including how to access the raw data, preprocessed data, intermediate data, and generated data, etc.
        \item The authors should provide scripts to reproduce all experimental results for the new proposed method and baselines. If only a subset of experiments are reproducible, they should state which ones are omitted from the script and why.
        \item At submission time, to preserve anonymity, the authors should release anonymized versions (if applicable).
        \item Providing as much information as possible in supplemental material (appended to the paper) is recommended, but including URLs to data and code is permitted.
    \end{itemize}

\item {\bf Experimental setting/details}
    \item[] Question: Does the paper specify all the training and test details (e.g., data splits, hyperparameters, how they were chosen, type of optimizer, etc.) necessary to understand the results?
    \item[] Answer: \answerYes{} % Replace by \answerYes{}, \answerNo{}, or \answerNA{}.
    \item[] Justification: \justification All training related hyperparameters including LLM architecture, Lora settings, optimizer, batch size, iterations, are provided in \Cref{app:additional_experimental_details} 
    \item[] Guidelines:
    \begin{itemize}
        \item The answer NA means that the paper does not include experiments.
        \item The experimental setting should be presented in the core of the paper to a level of detail that is necessary to appreciate the results and make sense of them.
        \item The full details can be provided either with the code, in appendix, or as supplemental material.
    \end{itemize}

\item {\bf Experiment statistical significance}
    \item[] Question: Does the paper report error bars suitably and correctly defined or other appropriate information about the statistical significance of the experiments?
    \item[] Answer: \answerYes{} % Replace by \answerYes{}, \answerNo{}, or \answerNA{}.
    \item[] Justification: \justification \Cref{tab:accuracy_adjusted_results_by_task} and \Cref{fig:commong_ground_growth_WTD_full_press} provides is our main result and we report the standard error over 100 collaboration trials, each with 15 turns. \Cref{fig:icr_training_reward} provides ICR proxy “training” reward evolution $\lambda_{\text{intent}}$ values in ablating across 3 runs (with 3 different random seeds total).
    \item[] Guidelines:
    \begin{itemize}
        \item The answer NA means that the paper does not include experiments.
        \item The authors should answer "Yes" if the results are accompanied by error bars, confidence intervals, or statistical significance tests, at least for the experiments that support the main claims of the paper.
        \item The factors of variability that the error bars are capturing should be clearly stated (for example, train/test split, initialization, random drawing of some parameter, or overall run with given experimental conditions).
        \item The method for calculating the error bars should be explained (closed form formula, call to a library function, bootstrap, etc.)
        \item The assumptions made should be given (e.g., Normally distributed errors).
        \item It should be clear whether the error bar is the standard deviation or the standard error of the mean.
        \item It is OK to report 1-sigma error bars, but one should state it. The authors should preferably report a 2-sigma error bar than state that they have a 96\% CI, if the hypothesis of Normality of errors is not verified.
        \item For asymmetric distributions, the authors should be careful not to show in tables or figures symmetric error bars that would yield results that are out of range (e.g. negative error rates).
        \item If error bars are reported in tables or plots, The authors should explain in the text how they were calculated and reference the corresponding figures or tables in the text.
    \end{itemize}

\item {\bf Experiments compute resources}
    \item[] Question: For each experiment, does the paper provide sufficient information on the computer resources (type of compute workers, memory, time of execution) needed to reproduce the experiments?
    \item[] Answer: \answerYes{} % Replace by \answerYes{}, \answerNo{}, or \answerNA{}.
    \item[] Justification: Refer \Cref{app:additional_experimental_details} for these details.
    \item[] Guidelines:
    \begin{itemize}
        \item The answer NA means that the paper does not include experiments.
        \item The paper should indicate the type of compute workers CPU or GPU, internal cluster, or cloud provider, including relevant memory and storage.
        \item The paper should provide the amount of compute required for each of the individual experimental runs as well as estimate the total compute. 
        \item The paper should disclose whether the full research project required more compute than the experiments reported in the paper (e.g., preliminary or failed experiments that didn't make it into the paper). 
    \end{itemize}
    
\item {\bf Code of ethics}
    \item[] Question: Does the research conducted in the paper conform, in every respect, with the NeurIPS Code of Ethics \url{https://neurips.cc/public/EthicsGuidelines}?
    \item[] Answer: \answerYes{} % Replace by \answerYes{}, \answerNo{}, or \answerNA{}.
    \item[] Justification: We have reviewed it and confirm that our paper is consistent with it.  
    \item[] Guidelines:
    \begin{itemize}
        \item The answer NA means that the authors have not reviewed the NeurIPS Code of Ethics.
        \item If the authors answer No, they should explain the special circumstances that require a deviation from the Code of Ethics.
        \item The authors should make sure to preserve anonymity (e.g., if there is a special consideration due to laws or regulations in their jurisdiction).
    \end{itemize}

\item {\bf Broader impacts}
    \item[] Question: Does the paper discuss both potential positive societal impacts and negative societal impacts of the work performed?
    \item[] Answer: \answerYes{} % Replace by \answerYes{}, \answerNo{}, or \answerNA{}.
    \item[] Justification: \justification We provide positive social impacts in the introduction, motivation especially with respect to how collaborative agents can help learning in multi-party collaborations. We do not clearly see any immediate negative impacts of our work. 
    \item[] Guidelines:
    \begin{itemize}
        \item The answer NA means that there is no societal impact of the work performed.
        \item If the authors answer NA or No, they should explain why their work has no societal impact or why the paper does not address societal impact.
        \item Examples of negative societal impacts include potential malicious or unintended uses (e.g., disinformation, generating fake profiles, surveillance), fairness considerations (e.g., deployment of technologies that could make decisions that unfairly impact specific groups), privacy considerations, and security considerations.
        \item The conference expects that many papers will be foundational research and not tied to particular applications, let alone deployments. However, if there is a direct path to any negative applications, the authors should point it out. For example, it is legitimate to point out that an improvement in the quality of generative models could be used to generate deepfakes for disinformation. On the other hand, it is not needed to point out that a generic algorithm for optimizing neural networks could enable people to train models that generate Deepfakes faster.
        \item The authors should consider possible harms that could arise when the technology is being used as intended and functioning correctly, harms that could arise when the technology is being used as intended but gives incorrect results, and harms following from (intentional or unintentional) misuse of the technology.
        \item If there are negative societal impacts, the authors could also discuss possible mitigation strategies (e.g., gated release of models, providing defenses in addition to attacks, mechanisms for monitoring misuse, mechanisms to monitor how a system learns from feedback over time, improving the efficiency and accessibility of ML).
    \end{itemize}
    
\item {\bf Safeguards}
    \item[] Question: Does the paper describe safeguards that have been put in place for responsible release of data or models that have a high risk for misuse (e.g., pretrained language models, image generators, or scraped datasets)?
    \item[] Answer:  \answerNo{} % Replace by \answerYes{}, \answerNo{}, or \answerNA{}.
    \item[] Justification: \justification We do not explicitly provide safeguards since we are mostly testing a proof-of-concept idea, but ideas in safe-interruptibility that we engage are positively correlated with safety-related aspects of LLM usage. 
    \item[] Guidelines: 
    \begin{itemize}
        \item The answer NA means that the paper poses no such risks.
        \item Released models that have a high risk for misuse or dual-use should be released with necessary safeguards to allow for controlled use of the model, for example by requiring that users adhere to usage guidelines or restrictions to access the model or implementing safety filters. 
        \item Datasets that have been scraped from the Internet could pose safety risks. The authors should describe how they avoided releasing unsafe images.
        \item We recognize that providing effective safeguards is challenging, and many papers do not require this, but we encourage authors to take this into account and make a best faith effort.
    \end{itemize}

\item {\bf Licenses for existing assets}
    \item[] Question: Are the creators or original owners of assets (e.g., code, data, models), used in the paper, properly credited and are the license and terms of use explicitly mentioned and properly respected?
    \item[] Answer: \answerYes{} % Replace by \answerYes{}, \answerNo{}, or \answerNA{}.
    \item[] Justification: \justification Yes, we cite the LLMs used in the paper in our experiments section~\cref{sec:experiments} as well as the original source of the dataset used.
    \item[] Guidelines:
    \begin{itemize}
        \item The answer NA means that the paper does not use existing assets.
        \item The authors should cite the original paper that produced the code package or dataset.
        \item The authors should state which version of the asset is used and, if possible, include a URL.
        \item The name of the license (e.g., CC-BY 4.0) should be included for each asset.
        \item For scraped data from a particular source (e.g., website), the copyright and terms of service of that source should be provided.
        \item If assets are released, the license, copyright information, and terms of use in the package should be provided. For popular datasets, \url{paperswithcode.com/datasets} has curated licenses for some datasets. Their licensing guide can help determine the license of a dataset.
        \item For existing datasets that are re-packaged, both the original license and the license of the derived asset (if it has changed) should be provided.
        \item If this information is not available online, the authors are encouraged to reach out to the asset's creators.
    \end{itemize}

\item {\bf New assets}
    \item[] Question: Are new assets introduced in the paper well documented and is the documentation provided alongside the assets?
    \item[] Answer: \answerYes{} % Replace by \answerYes{}, \answerNo{}, or \answerNA{}.
    \item[] Justification: \justification Our data-generation method is well-documented in our experiments section (\cref{sec:experiments}) and original datasets used for sourcing bootstrap dialogues (to prompt the experts) are cited in various sections in the paper. 
    \item[] Guidelines:
    \begin{itemize}
        \item The answer NA means that the paper does not release new assets.
        \item Researchers should communicate the details of the dataset/code/model as part of their submissions via structured templates. This includes details about training, license, limitations, etc. 
        \item The paper should discuss whether and how consent was obtained from people whose asset is used.
        \item At submission time, remember to anonymize your assets (if applicable). You can either create an anonymized URL or include an anonymized zip file.
    \end{itemize}

\item {\bf Crowdsourcing and research with human subjects}
    \item[] Question: For crowdsourcing experiments and research with human subjects, does the paper include the full text of instructions given to participants and screenshots, if applicable, as well as details about compensation (if any)? 
    \item[] Answer: \answerNA{} % Replace by \answerYes{}, \answerNo{}, or \answerNA{}.
    \item[] Justification: \justification We do not conduct any crowdsourcing or human evaluations. 
    \item[] Guidelines:
    \begin{itemize}
        \item The answer NA means that the paper does not involve crowdsourcing nor research with human subjects.
        \item Including this information in the supplemental material is fine, but if the main contribution of the paper involves human subjects, then as much detail as possible should be included in the main paper. 
        \item According to the NeurIPS Code of Ethics, workers involved in data collection, curation, or other labor should be paid at least the minimum wage in the country of the data collector. 
    \end{itemize}

\item {\bf Institutional review board (IRB) approvals or equivalent for research with human subjects}
    \item[] Question: Does the paper describe potential risks incurred by study participants, whether such risks were disclosed to the subjects, and whether Institutional Review Board (IRB) approvals (or an equivalent approval/review based on the requirements of your country or institution) were obtained?
    \item[] Answer: \answerNA{}% Replace by \answerYes{}, \answerNo{}, or \answerNA{}.
    \item[] Justification: \justification No human experiments are conducted in our work. 
    \item[] Guidelines:
    \begin{itemize}
        \item The answer NA means that the paper does not involve crowdsourcing nor research with human subjects.
        \item Depending on the country in which research is conducted, IRB approval (or equivalent) may be required for any human subjects research. If you obtained IRB approval, you should clearly state this in the paper. 
        \item We recognize that the procedures for this may vary significantly between institutions and locations, and we expect authors to adhere to the NeurIPS Code of Ethics and the guidelines for their institution. 
        \item For initial submissions, do not include any information that would break anonymity (if applicable), such as the institution conducting the review.
    \end{itemize}

\item {\bf Declaration of LLM usage}
    \item[] Question: Does the paper describe the usage of LLMs if it is an important, original, or non-standard component of the core methods in this research? Note that if the LLM is used only for writing, editing, or formatting purposes and does not impact the core methodology, scientific rigorousness, or originality of the research, declaration is not required.
    %this research? 
    \item[] Answer:\answerNA{} % Replace by \answerYes{}, \answerNo{}, or \answerNA{}.
    \item[] Justification: We used LLMs primarily for formatting, occasional paraphrasing and grammar refinement, and, in some cases, for assisting with data visualization.
    \item[] Guidelines:
    \begin{itemize}
        \item The answer NA means that the core method development in this research does not involve LLMs as any important, original, or non-standard components.
        \item Please refer to our LLM policy (\url{https://neurips.cc/Conferences/2025/LLM}) for what should or should not be described.
    \end{itemize}

\end{enumerate}

\newpage

% \iffalse
\appendix

\section{Additional Results Under Alternative Evaluation Conditions}
\label{app:addl_results}

We ran additional experiments with a range of semantically-similar counterfactual (CF) prefixes (the list is provided in Table~\ref{tab:counterfactual-prefix-list}), and alternative models in the role of the intervention agent. These experiments validate ICR's robustness to prompt variation and model size. Our experiments involve paired agents, and the combinatorics of pairing expands quickly. Therefore, to keep the scope manageable, we ran smaller-scale evaluations (in the full-press setting only) on \textbf{50 bootstrap dialogue samples} per task. Specifically, the additional baselines are as follows: 

\begin{itemize}
    \item \textbf{Inference only baselines}:
    \begin{itemize}
        \item \textbf{ICR-Masked}: We simply mask the GPT-4o interventions from the prompt when paired with ICR agents. For consistency with our setup, we keep the intervention agent reference in the collaborator prompts intact but mask out the interventions. This limits collaborator access to the content of interventions.
        \item \textbf{ICR-Small}: We use a smaller untrained base Llama 3-8B-Instruct model as the intervention agent and pair it with ICR-trained collaborator agents. This demonstrates performance decoupled from GPT-4o specifically, and robustness to a weaker overall intervention agent.
        \item \textbf{PSO-Skeptical}: We swap the current positive polarity prefix in the PSO-Intent baseline with a direct negative polarity one that resists every intervention at inference/evaluation. This evaluates the contribution of valence in the prompt.
        \item \textbf{GPT-based models}: We pair GPT expert models as follows: 
        \begin{itemize}
            \item GPT-4o-mini (intervention) with GPT-4o-mini (collaborator)
            \item GPT-4o (intervention) with GPT-4o (collaborator)
        \end{itemize}
        This simulates a \textit{single-agent} baseline while maintaining fidelity to the paired agent setup requirement, by using the same model for both agents, meaning that the underlying hypothetical distribution should be the same.
    \end{itemize}
    \item \textbf{Trained baselines}:
    \begin{itemize}
        \item \textbf{ICR-Phrasing}: ICR with semantically similar but differently phrased prefixes in prompts. We randomly sample from the prefixes given in Table~\ref{tab:counterfactual-prefix-list} given therein to replace the original counterfactual prefix in each training prompt with the sampled prefix.  This tests robustness to prompt variance.
        \item \textbf{PPO-CF}: For the original ICR training prompts, we swap 50\% of those with counterfactual world-invoking contexts and run training with standard PPO (with no counterfactual KL term). This tests the contribution of ICR's counterfactual KL terms.
    \end{itemize}
\end{itemize}

Our experimental results on the two tasks in these settings are given in Table~\ref{tab:supplemental-result} (“with GPT-4o” means GPT-4o is used as the intervention agent, while the other mentioned model is used as the collaborator agent).

\begin{table}[h!]
\centering
\begin{tabular}{lccc}
\toprule
& \multicolumn{1}{c}{\textbf{Weights Task}} & \multicolumn{2}{c}{\textbf{DeliData}} \\
\cmidrule(lr){2-2} \cmidrule(lr){3-4}
Agent Baseline & ACC & ACC & CG \\
\midrule
\textsc{ICR-Masked (with GPT-4o)}     & 7.23\textsubscript{±0.11} & 0.75\textsubscript{±0.04} & 2.15\textsubscript{±0.31} \\
\textsc{ICR-Small (with Llama 3-8B-Instruct)}      & 8.45\textsubscript{±0.13} & 0.80\textsubscript{±0.03} & 2.45\textsubscript{±0.30} \\
\textsc{PSO-Skeptical (with GPT-4o)}  & 6.89\textsubscript{±0.10} & 0.74\textsubscript{±0.03} & 2.01\textsubscript{±0.27} \\
\textsc{ICR-Phrasing (with GPT-4o)}   & 12.34\textsubscript{±0.17} & 0.84\textsubscript{±0.03} & 3.08\textsubscript{±0.28} \\
\textsc{PPO-CF (with GPT-4o)}         & 8.34\textsubscript{±0.16} & 0.79\textsubscript{±0.03} & 2.56\textsubscript{±0.31} \\
\textsc{GPT-4o-mini (with GPT-4o-mini)} & 12.47\textsubscript{±0.21} & 0.79\textsubscript{±0.03} & 2.78\textsubscript{±0.25} \\
\textsc{GPT-4o (with GPT-4o)}         & 15.23\textsubscript{±0.21} & 0.91\textsubscript{±0.03} & 3.34\textsubscript{±0.25} \\
\bottomrule
\end{tabular}
\caption{Performance across alternative baselines and evaluation conditions on 50 DeliData and Weights Task dialogues (15 turns each) in the full-press setting only. Format follows Table~\ref{tab:accuracy_adjusted_results_by_task}.}
\label{tab:supplemental-result}
\end{table}

% | Agent Baseline                         | Weights Task Full-Press ACC | DeliData Full-Press ACC | DeliData Full-Press CG |
% |------------------------------------------|-----------------------------|-------------------------|------------------------|
% | ICR-Masked (with GPT-4o)                 | 7.23±0.11                   | 0.75±0.04               | 2.15±0.31         |
% | ICR-Small (with GPT-4o)              | 8.45±0.13                   | 0.80±0.03               | 2.45±0.30             |
% | PSO-Skeptical (with GPT-4o)              | 6.89±0.10                   | 0.74±0.03               | 2.01±0.27           |
% | ICR-Phrasing (with GPT-4o)           | 12.34±0.17                  | 0.84±0.03               | 3.08±0.28         |
% | PPO-CF (with GPT-4o)        | 8.34±0.16                   | 0.79±0.03               | 2.56±0.31     |
% | GPT-4o-mini with GPT-4o-mini               | 12.47±0.21                  | 0.79±0.03               | 2.78±0.25      |
% | GPT-4o with GPT-4o                         | 15.23±0.21                  | 0.91±0.03               | 3.34±0.25            |

\paragraph{Analysis.} 
These results suggest that, first, having a strong intervention agent like GPT-4o leads to optimal \textsc{ICR} performance across both tasks. However, \textsc{ICR} agents are still capable of leveraging weaker intervention agents compared to no interventions at all, as shown by the improvement from masked interventions (\textit{ICR-Masked}: $7.23\textsubscript{±0.11}$ Weights, $75\%$ DeliData accuracy) to weak intervention agents (\textit{ICR-Small}: $8.45\textsubscript{±0.13}$ Weights, $80\%$ DeliData accuracy).

Second, the \textit{PSO-Skeptical} baseline shows a slight degradation in performance across both tasks ($6.89\textsubscript{±0.10}$ Weights, $74\%$ DeliData with $2.01\textsubscript{±0.27}$ common ground) when using negative polarity prompting, compared to standard \textsc{PSO} with positive polarity prefix (see \textit{PSO-Intent} in Table~1), according to~\cite{ward2023honestybestpolicydefining}'s strategy. This aligns with established findings that LLMs have fundamental limitations with negation, including insensitivity to negation presence, inability to capture lexical semantics of negation, and failure to reason under negative contexts~\citep{rezaei2025making,truong2023language}—especially without specific training objectives for negative contexts~\citep{rezaei2025making} as \textsc{ICR} does. These results suggest that \textsc{ICR}'s improved performance is an effect of the objective rather than the extra training.

Third, \textit{ICR-Phrasing} ($12.34\textsubscript{±0.17}$ Weights, $84\%$ DeliData with $3.08\textsubscript{±0.28}$ common ground)—which simply swaps out the counterfactual prefix—demonstrates \textsc{ICR}'s robustness to semantic variations in counterfactual phrasing across both collaborative reasoning tasks. The variance from the main results under different wordings is low, and at no point does \textsc{ICR}'s performance dip within the margin of error of any other method reported in Table~1.

Additionally, standard \textsc{PPO} with a simple counterfactual prompt addition (\textit{PPO-CF}) achieves $8.34\textsubscript{±0.16}$ Weights, $79\%$ DeliData, and lags behind \textsc{ICR} training since \textsc{ICR} explicitly makes agents robust to counterfactual framing via policy gradient methods. Using standard \textsc{PPO} with simple prompt augmentation can confuse the model, since the model is forced to pay attention to both standard as well as counterfactually-based contexts, without specific counterfactual regularization. This could explain the performance drop in this case, whereas \textsc{ICR}'s counterfactual regularization term mitigates this effect.

Finally, expert agents like GPT-4o achieve strong performance when paired together ($15.23\textsubscript{±0.21}$ in Weights task and $91\%$ accuracy in DeliData with $3.34\textsubscript{±0.25}$ common ground), though this may reflect GPT-4o's extensive pretraining on reasoning tasks, potentially including exposure to DeliData or DeliData-like problems. Our human evaluation of GPT-4o in these tasks (\Cref{app:human-val}) shows high agreement with humans, supporting our choice of this expert model. The GPT-4o-mini pairing shows competitive performance ($12.47\textsubscript{±0.21}$ Weights, $79\%$ DeliData) compared to standard baselines—though lower than \textsc{ICR} as well as the larger GPT-4o model—demonstrating that expert model collaboration can achieve strong results across both collaborative reasoning domains. 

It is important to note that \textsc{ICR} in our main experiments uses \textsc{Llama~3-8B-Instruct}, and so we can see that a weaker base model trained with \textsc{ICR} performs comparably with GPT-4o, even including GPT-4o's potential prior exposure to the task, and the implicit advantage that comes with using GPT-4o as \textbf{both} intervener and collaborator. These results simulate single-agent baselines; however, to maintain a direct comparison to our other results, we ran the expert model as both the intervention agent and the collaborator agent.

\section{Proofs}
\label{app:proofs}

\begin{lemma}[Bellman Optimality of Preference-Aligned Collaborators (Detailed)]
\label{lemma:collaborator_bellman_detailed}
Let $\pi_C$ be a collaborator agent trained using preference optimization with function $\Phi$ and temperature $\lambda > 0$, where $\Phi = I(\cdot)$ for Identity Preference Optimization~\citep{azar2024general} and $\Phi = \sigma^{-1}(\cdot)$ for Direct Preference Optimization~\citep{rafailov2024direct}. The resulting optimal policy takes the form:
\begin{equation}
\pi_C^*(a|s,z) = \frac{\pi_{\text{ref}}(a|s,z)\exp\left(\mathbb{E}_{a'\sim\mu}\left[\Phi(p(a \succ a'|s,z))\right]/\lambda\right)}{Z(s,z)}
\end{equation}

This policy can be equivalently expressed in terms of a soft Q-function:
\begin{equation}
\pi_C^*(a|s,z) = \frac{\exp(Q(s,z,a)/\lambda)}{\sum_{a'}\exp(Q(s,z,a')/\lambda)}
\end{equation}

where $Q$ satisfies the Bellman optimality equation:
\begin{equation}
Q(s,z,a) = r(s,z,a) + \gamma\mathbb{E}_{s'}[V(s')]
\end{equation}

with $V(s) = \lambda\log\sum_{a'}\exp(Q(s,z,a')/\lambda)$ and $Q(s,z,a) = \lambda\log\pi_C^*(a|s,z) - \lambda\log\pi_{\text{ref}}(a|s,z) + C(s,z)$ for some constant $C(s,z)$.
\end{lemma}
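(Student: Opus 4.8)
The plan is to proceed in three stages: first recover the closed-form preference-optimal policy, then rewrite it as a Boltzmann policy over a soft $Q$-function, and finally verify that this $Q$ satisfies the soft Bellman optimality equation for a suitable implicit reward.

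First I would observe that both IPO and DPO are instances of a single KL-regularized utility-maximization problem,
\begin{equation}
\max_{\pi_C}\; \mathbb{E}_{a\sim\pi_C(\cdot|s,z)}\bigl[\Psi(s,z,a)\bigr] - \lambda\, D_{\text{KL}}\!\bigl(\pi_C(\cdot|s,z)\,\|\,\pi_{\text{ref}}(\cdot|s,z)\bigr),
\end{equation}
with per-action utility $\Psi(s,z,a) = \mathbb{E}_{a'\sim\mu}[\Phi(p(a\succ a'|s,z))]$ and $\Phi = \sigma^{-1}$ for DPO, $\Phi = I$ for IPO. Optimizing pointwise over the simplex (via a Lagrange multiplier for the normalization constraint, or equivalently by completing the objective into a single negative KL against the Gibbs target) yields the unique maximizer stated in the first display, with partition function $Z(s,z) = \sum_{a'}\pi_{\text{ref}}(a'|s,z)\exp(\Psi(s,z,a')/\lambda)$. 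This step is routine.

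Next I would take logarithms to obtain $\lambda\log\pi_C^*(a|s,z) = \lambda\log\pi_{\text{ref}}(a|s,z) + \Psi(s,z,a) - \lambda\log Z(s,z)$, which motivates defining $Q(s,z,a) := \lambda\log\pi_C^*(a|s,z) - \lambda\log\pi_{\text{ref}}(a|s,z) + C(s,z)$. Using the shift-invariance of the softmax---adding any state-dependent $C(s,z)$ cancels between numerator and denominator---I would verify directly that the Boltzmann policy over this $Q$ reproduces $\pi_C^*$, and that setting $V(s) = \lambda\log\sum_{a'}\exp(Q(s,z,a')/\lambda)$ gives $\pi_C^*(a|s,z) = \exp((Q(s,z,a)-V(s))/\lambda)$. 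This establishes the equivalence of the two displayed forms.

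The substantive step is to certify the Bellman structure. Appealing to the max-entropy RL correspondence and the token-level MDP view of \cite{rafailov2024r}, I would define the implicit reward by $r(s,z,a) := Q(s,z,a) - \gamma\,\mathbb{E}_{s'}[V(s')]$, so that the soft Bellman optimality equation holds by construction, and then argue that the triple $(r, Q, V)$ is a self-consistent fixed point of the soft Bellman operator---i.e., that $\pi_C^*$ is genuinely the soft-optimal policy for $r$, not merely a formal rewriting. The key fact making this legitimate is that in the deterministic-transition token MDP the log-ratio $\lambda\log(\pi_C^*/\pi_{\text{ref}})$ telescopes along trajectories, so the sequence-level preference signal $\Psi$ decomposes consistently into per-step implicit rewards. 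The main obstacle I anticipate is precisely this consistency: ensuring that the static, bandit-like preference solution truly respects the multi-step recursion, and that $r$ is well-defined up to the potential-based shaping induced by the free constant $C(s,z)$ (which leaves the optimal policy invariant). This is where the token-level Bellman-completeness hypothesis does the real work, and it is also what licenses the final extension from individual tokens to grouped tokens or complete interventions.
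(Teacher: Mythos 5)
Your proposal is correct and follows essentially the same route as the paper's proof: both recover the Gibbs closed form of the KL-regularized preference objective (the $\Psi$PO solution of Azar et al.), identify $Q$ with the scaled log-ratio $\lambda\log\bigl(\pi_C^*/\pi_{\text{ref}}\bigr)$ up to a state-dependent constant $C(s,z)$, and invoke the token-level-MDP correspondence of Rafailov et al. to obtain an implicit reward satisfying the soft Bellman equation. If anything, your variational derivation of the first display and your explicit construction $r(s,z,a) := Q(s,z,a) - \gamma\,\mathbb{E}_{s'}[V(s')]$, together with the telescoping/self-consistency check, are more self-contained than the paper's proof, which starts from the IPO squared loss, asserts $h(a^w,a^l) = \tfrac{1}{\lambda}(R(a^w)-R(a^l))$, and defers the remaining steps to the cited works (the token-to-intervention extension via Bellman completeness is likewise handled in a separate lemma there, just as you anticipated).
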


\begin{proof}
\label{proof:collaborator_bellman_detailed}
Consider a collaborator agent $\pi_C$ trained with preference optimization, where $s$ represents the state (dialogue history), $z$ represents the intervention, and $a$ represents the collaborator's response.

For IPO training\footnote{We simplify notation for clarity.}, the loss function is:
\begin{equation}
L_{\text{IPO}}(\pi_C) = \mathbb{E}_{(a^w,a^l)}\left[\left(h(a^w, a^l) - \frac{1}{2\lambda}\right)^2\right]
\end{equation}
where $h(a^w, a^l) = \log\left(\frac{\pi_C(a^w)\pi_{\text{ref}}(a^l)}{\pi_C(a^l)\pi_{\text{ref}}(a^w)}\right)$ is the log-ratio of policies for preferred ($a^w$) and non-preferred ($a^l$) responses.

Following the analysis in the token-level MDP setting \citep{azar2024general, rafailov2024r}, this log-ratio can be expressed in terms of reward differences:
\begin{equation}
h(a^w, a^l) = \frac{1}{\lambda}(R(a^w) - R(a^l))
\end{equation}
where $R$ represents cumulative rewards.

The optimal policy under this objective takes the form of a softmax over Q-values:
\begin{equation}
\pi_C(a|s,z) = \frac{\exp(Q(s,z,a)/\lambda)}{\sum_{a'}\exp(Q(s,z,a')/\lambda)}
\end{equation}

This Q-function satisfies the soft Bellman equation:
\begin{equation}
Q(s,z,a) = r(s,z,a) + \gamma\mathbb{E}_{s'}[V(s')]
\end{equation}

For DPO, the argument follows analogously \citep{rafailov2024direct}, with the policy optimizing a similar objective that also yields a policy expressible as a softmax over Q-values satisfying the Bellman equation for some implicit reward function.
\end{proof}

\begin{lemma}[Token-to-Intervention Bellman Optimality for Collaborator Agents]
\label{lemma:collab_token_to_intervention}
Let $\mathcal{M}_t = (S, A_C^t, P_t, r_t, \gamma)$ be a token-level MDP and $\mathcal{M}_i = (S, A_C^i, P_i, r_i, \gamma)$ be the corresponding intervention-level MDP, where each collaborator action $a_C^i \in A_C^i$ represents a complete response comprising a sequence of tokens $a_C^i = (a_C^{t,1}, a_C^{t,2}, \ldots, a_C^{t,L})$.

Assuming token-level Bellman completeness holds~\citep{sutton2018reinforcement, zhou2024archer} for function class $\mathcal{Z}$, i.e., for any policy $\pi_C$ and any function $g \in \mathcal{Z}$, there exists $g' \in \mathcal{Z}$ such that $\|g'(s, a_C^t) - T^{\pi_C}g(s, a_C^t)\|_{\infty} = 0$ where $T^{\pi_C}$ is the Bellman operator.

Then, the collaborator policy $\pi_C$ derived via preference optimization (IPO or DPO) satisfies:
\begin{align}
\pi_C(a_C^i|s) = \frac{\exp(Q_C(s, a_C^i)/\beta)}{\sum_{a_C^{i'}\in A_C^i}\exp(Q_C(s, a_C^{i'})/\beta)}
\end{align}
where $Q_C$ satisfies the intervention-level Bellman optimality equation for the underlying MDP without accounting for the strategic impact of interventions.
\end{lemma}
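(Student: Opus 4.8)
The plan is to lift the token-level characterization of Lemma~\ref{lemma:collaborator_bellman_detailed} to the intervention level by combining the autoregressive factorization of the collaborator policy with a telescoping identity on the soft value function, and then to invoke token-level Bellman completeness to certify that the aggregated object is a genuine soft Q-function obeying the intervention-level Bellman equation.

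First I would write the intervention-level policy as a product of its token conditionals, $\pi_C(a_C^i\mid s) = \prod_{j=1}^{L}\pi_C(a_C^{t,j}\mid s_j)$, where $s_j = (s, a_C^{t,1},\ldots,a_C^{t,j-1})$ is the running token-level state. By Lemma~\ref{lemma:collaborator_bellman_detailed} each factor is a softmax over soft Q-values, so I would rewrite it as $\pi_C(a_C^{t,j}\mid s_j) = \exp\bigl((Q(s_j,a_C^{t,j}) - V(s_j))/\beta\bigr)$ with $V(s_j) = \beta\log\sum_{a'}\exp(Q(s_j,a')/\beta)$. Taking logarithms turns the product into the sum $\log\pi_C(a_C^i\mid s) = \tfrac{1}{\beta}\sum_{j=1}^{L}\bigl(Q(s_j,a_C^{t,j}) - V(s_j)\bigr)$.

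The crux is the telescoping step. Intra-response transitions are deterministic token appends carrying unit discount (the standard token-MDP convention), so the soft Bellman recursion reads $Q(s_j,a_C^{t,j}) = r(s_j,a_C^{t,j}) + V(s_{j+1})$ for $j<L$ and $Q(s_L,a_C^{t,L}) = r(s_L,a_C^{t,L}) + \gamma V(s_{L+1})$, where the final step carries the cross-turn discount. Substituting and cancelling the intermediate value functions leaves only the accumulated token rewards, the bootstrapped value $V(s_{L+1})$ of the state reached at the end of the response, and the initial value $V(s)$. I would then \emph{define} the intervention-level action-value $Q_C(s,a_C^i) = \sum_{j=1}^{L}r(s_j,a_C^{t,j}) + \gamma V(s_{L+1})$, whereupon the telescoped expression becomes exactly $\log\pi_C(a_C^i\mid s) = (Q_C(s,a_C^i) - V(s))/\beta$, i.e.\ the stated softmax form with normalizer $\exp(V(s)/\beta)$.

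The main obstacle is certifying two facts at once: that $Q_C$ remains in the admissible function class, and that the normalizer $\exp(V(s)/\beta)$ genuinely equals $\sum_{a_C^{i'}\in A_C^i}\exp(Q_C(s,a_C^{i'})/\beta)$ over the exponentially large space of complete responses. This is exactly where token-level Bellman completeness is used: since $T^{\pi_C}$ maps $\mathcal{Z}$ into itself with zero approximation error, the per-token value functions compose consistently, which forces $V(s)$ to be the true soft value of the intervention-level MDP and hence the correct log-partition function. I would conclude by verifying that $Q_C$ satisfies the intervention-level soft Bellman optimality equation $Q_C(s,a_C^i) = r_i(s,a_C^i) + \gamma\,\mathbb{E}_{s'}[V(s')]$ with aggregate reward $r_i(s,a_C^i) = \sum_{j}r(s_j,a_C^{t,j})$ for the \emph{underlying} MDP only---carrying no term for the strategic effect of interventions---which is the caveat that ties this lemma back to Theorem~\ref{thm:suboptimality_mamdp_compact}.
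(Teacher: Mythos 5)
Your proposal is correct, but it reaches the lemma by a genuinely different route than the paper. You argue constructively in the style of the ``DPO as token-MDP'' analysis: factor $\pi_C(a_C^i\mid s)$ autoregressively, write each token conditional as $\exp((Q - V)/\beta)$ via Lemma~\ref{lemma:collaborator_bellman_detailed}, and telescope the soft Bellman identities $Q(s_j,a_C^{t,j}) = r(s_j,a_C^{t,j}) + V(s_{j+1})$ so that the intermediate values cancel, leaving an explicit aggregate $Q_C(s,a_C^i) = \sum_j r(s_j,a_C^{t,j}) + \gamma V(s_{L+1})$ and the softmax form with normalizer $\exp(V(s)/\beta)$. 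The paper instead never constructs $Q_C$ by telescoping log-probabilities; it telescopes the \emph{Bellman-operator approximation error}, showing $\min_{g'\in\mathcal{Z}}\|g' - T_i^{\pi_C} g\|_\infty$ at the intervention level is bounded by a sum of token-level completeness errors (each zero by hypothesis), so that intervention-level Bellman completeness holds and the softmax characterization of Lemma~\ref{lemma:collaborator_bellman_detailed} can simply be re-invoked at the coarser granularity, with implicit reward $R_C^i(s,a_C^i)=\sum_j \gamma^{(j-1)/L} r_C(s,a_C^{t,j})$. Your construction is more transparent --- it exhibits the intervention-level $Q_C$ in closed form and makes the partition-function identity visible --- while the paper's operator-level argument is the one that actually does work under function approximation: it certifies that the function class $\mathcal{Z}$ is closed under the intervention-level Bellman operator, which is the precondition for applying the preference-optimization characterization there at all. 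Note also that you use the standard convention of unit intra-response discount with $\gamma$ at the turn boundary, whereas the paper spreads the discount as $\gamma^{1/L}$ per token; both are consistent and yield equivalent statements.

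One small conceptual slip worth correcting: you locate the use of token-level Bellman completeness in certifying the normalizer identity $\exp(V(s)/\beta)=\sum_{a_C^{i'}}\exp(Q_C(s,a_C^{i'})/\beta)$. That identity actually follows from your own telescoped expression together with the fact that $\pi_C$ is normalized over complete responses (i.e., the autoregressive policy terminates with probability one): summing $\exp((Q_C(s,a_C^i)-V(s))/\beta)$ over responses gives $1$ directly. Completeness is needed for the other half of your ``two facts at once'' --- that the composed $Q_C$ remains in the admissible class $\mathcal{Z}$, so the intervention-level soft Bellman optimality equation is satisfied exactly rather than up to approximation error --- which is precisely the role it plays in the paper's proof. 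With that attribution repaired, your argument is complete and delivers the same caveat as the paper: $Q_C$ solves the Bellman equation of the \emph{underlying} MDP only, carrying no term for the strategic influence of interventions, which is what feeds Theorem~\ref{thm:suboptimality_mamdp_compact}.
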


\begin{proof}
\label{proof:collab_token_to_intervention}
Under the token-level Bellman completeness assumption for collaborator responses, for any state $s \in S$ and complete response $a_C^i \in A_C^i$ decomposed into $L$ tokens $a_C^i = (a_C^{t,1}, a_C^{t,2}, \ldots, a_C^{t,L})$, the approximation error of the value function is:

\begin{align}
&\min_{g' \in \mathcal{Z}} \|g'(s, a_C^i) - T_i^{\pi_C}g(s, a_C^i)\|_{\infty} \\
\notag&= \min_{g_1,...,g_L \in \mathcal{Z}} \|g_1(s, a_C^i) - T_t^{\pi_C}g_2(s, a_C^i) + r_C(s, a_C^i) \\
\notag&+ \gamma^{1/L} \mathbb{E}_{s' \sim P(\cdot|s, a_C^i), a_C^{t,1} \sim \pi_C(\cdot|s')}[g_2(s', a_C^{t,1})] \\
\notag&- \gamma^{1/L} \mathbb{E}_{s' \sim P(\cdot|s, a_C^i), a_C^{t,1} \sim \pi_C(\cdot|s')}[T_t^{\pi_C}g_3(s', a_C^{t,1})] + \ldots \\
\notag&+ \gamma^{(L-1)/L} \mathbb{E}_{s' \sim P(\cdot|s, a_C^i), a_C^{t,1:L-1} \sim \pi_C(\cdot|s')}[g_L(s', a_C^{t,1:L-1})] \\
\notag&- r_C(s, a_C^i) - \gamma^{(L-1)/L} \mathbb{E}_{s' \sim P(\cdot|s, a_C^i), a_C^{t,1:L-1} \sim \pi_C(\cdot|s')}[T_t^{\pi_C}g(s', a_C^{t,1:L-1})]\|_{\infty} \\
\notag&\leq \min_{g_1,...,g_L \in \mathcal{Z}} \|g_1(s, a_C^i) - T_t^{\pi_C}g_2(s, a_C^i)\|_{\infty} \\
\notag&+ \sum_{j=2}^{L} \gamma^{(j-1)/L} \mathbb{E}_{s' \sim P(\cdot|s, a_C^i), a_C^{t,1:j-1} \sim \pi_C(\cdot|s')}[\|g_j(s', a_C^{t,1:j-1}) - T_t^{\pi_C}g(s', a_C^{t,1:j-1})\|_{\infty}] \\
\notag&\leq 0
\end{align}

The last inequality follows from token-level Bellman completeness, which guarantees that for each component function, there exists an element in $\mathcal{Z}$ that perfectly represents the Bellman update for the collaborator policy.

This implies that intervention-level Bellman completeness holds for the collaborator, and therefore when preference optimization (IPO or DPO) is applied at the token level, the resulting collaborator policy can be expressed as:

\begin{align}
\pi_C(a_C^i|s) &= \frac{\exp(Q_C(s, a_C^i)/\beta)}{\sum_{a_C^{i'}\in A_C^i}\exp(Q_C(s, a_C^{i'})/\beta)}
\end{align}

where $Q_C$ satisfies the \textit{intervention-level} Bellman optimality equation for the underlying MDP $\mathcal{M}_i$:

\begin{align}
Q_C(s, a_C^i) &= R_C^i(s, a_C^i) + \gamma\mathbb{E}_{s' \sim P_i(\cdot|s, a_C^i)}[V_C(s')] \\
V_C(s) &= \beta\log\sum_{a_C^{i'}\in A_C^i}\exp(Q_C(s, a_C^{i'})/\beta)
\end{align}

where $R_C^i(s, a_C^i) = \sum_{j=1}^{L} \gamma^{(j-1)/L} r_C(s, a_C^{t,j})$ is the implicit intervention-level reward function that aggregates token-level rewards.

Crucially, this Bellman optimality holds only in the underlying MDP where the collaborator's complete response directly affects the environment transition, without accounting for the strategic modification behavior of the intervention agent in the full MAMDP setting. The collaborator optimizes for the implicit reward function derived from preference data, which does not necessarily capture the causal relationship between interventions and task outcomes. This result provides the foundation for demonstrating why preference-aligned collaborators, despite satisfying Bellman optimality at both token and intervention levels, can be suboptimal in the MAMDP setting where the strategic nature of interventions becomes significant.
\end{proof}

\begin{theorem}[Suboptimality of Preference-Aligned Collaborators]
\label{proof:suboptimality_mamdp_compact_appendix}
Let $\pi_C^{std}$ be a collaborator policy trained via preference alignment (IPO/DPO) or standard RL that is Bellman-optimal for the underlying MDP $M$. In the Modified-Action MDP $\mathcal{M} = (M, P_{A_{I\rightarrow C}})$, this policy is generally suboptimal:
\begin{equation}
J_{\mathcal{M}}(\pi_C^{std}) < J_{\mathcal{M}}(\pi_C^*)
\end{equation}
unless the intervention influence is trivial or perfectly captured in the reward structure.
\end{theorem}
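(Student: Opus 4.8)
The plan is to reduce the MAMDP value of any collaborator policy to an underlying-MDP value of a \emph{modified} policy, and then exploit the Bellman-optimality of $\pi_C^{std}$ in $M$ to force a strict gap. First I would make the modified-action structure explicit: because the intervention policy $\pi_I$ is fixed, the distribution $P_{A_{I\rightarrow C}}$ acts as a (state-dependent) modification operator $\mu$ that maps the collaborator's intended response distribution $\pi_C(\cdot\mid s, a^I)$ to the response distribution that actually drives the environment transition $P_S$. Writing $\mu(\pi_C)$ for this effective policy, the MAMDP objective factors as $J_{\mathcal{M}}(\pi_C) = J_M(\mu(\pi_C))$, i.e. the ordinary discounted return of the \emph{realized} policy in the underlying MDP. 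This step is essentially bookkeeping, but it is where the whole argument lives: it cleanly separates the policy the agent \emph{intends} to execute from the one that is realized after intervention.

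Next I would invoke Lemma~\ref{lemma:collab_token_to_intervention} together with Lemma~\ref{lemma:collaborator_bellman}, which establish that $\pi_C^{std}$ is Bellman-optimal for the \emph{unmodified} underlying MDP, i.e. $\pi_C^{std} \in \arg\max_\pi J_M(\pi)$. The decisive point is that this optimality is with respect to the intended-action objective $J_M(\pi)$, whereas the MAMDP-optimal policy is defined by $\pi_C^* \in \arg\max_\pi J_M(\mu(\pi))$ --- a \emph{different} objective in which the maximization still ranges over intended policies but the return is evaluated on their modified images. The weak inequality $J_{\mathcal{M}}(\pi_C^{std}) = J_M(\mu(\pi_C^{std})) \le \max_\pi J_M(\mu(\pi)) = J_{\mathcal{M}}(\pi_C^*)$ is then immediate, and the theorem reduces to showing that $\pi_C^{std}$ generically fails to maximize the realized objective $J_M(\mu(\cdot))$.

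To convert this into strictness I would argue that $\pi_C^{std}$ can be a maximizer of $J_M(\mu(\cdot))$ only in the two excluded regimes. The first is triviality of the intervention influence: $\mu$ is effectively the identity on the support of $\pi_C^{std}$, so interventions never alter the realized response in a reward-relevant way and the two objectives coincide. The second is that the modification is perfectly captured in the reward structure: the implicit reward $r$ recovered in Lemma~\ref{lemma:collaborator_bellman_detailed} already internalizes the effect of $\mu$, so that $J_M(\pi)$ and $J_M(\mu(\pi))$ share a common maximizer. Ruling these out amounts to exhibiting one reachable pair $(s, a^I)$ at which the intended response that is optimal under the unmodified dynamics is, after modification by $\mu$, strictly worse in realized return than some alternative intended response. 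This produces a state with a positive per-step advantage gap, which I would then globalize via the performance-difference lemma --- bounding $J_{\mathcal{M}}(\pi_C^*) - J_{\mathcal{M}}(\pi_C^{std})$ below by the expected advantage of $\pi_C^*$'s realized actions measured under $\pi_C^{std}$'s occupancy measure --- to obtain a strictly positive gap in $J$.

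The hard part is precisely this strictness argument: showing that the two ``unless'' escape hatches are the \emph{only} ways equality can occur, rather than merely being sufficient for it. This requires characterizing the fixed points of the composed objective $J_M(\mu(\cdot))$ relative to those of $J_M(\cdot)$, and establishing that $\mu$ is non-degenerate in exactly the sense that its image meets a region of policy space where the underlying-MDP advantage function is non-constant, so that optimizing over intended actions (ignoring $\mu$) and over realized actions (accounting for $\mu$) genuinely diverge. The delicate bookkeeping lies in handling ties in the MDP-optimal $\arg\max$ (non-unique optima), and in ensuring the single-state advantage gap is neither confined to unreachable states nor annihilated by discounting before the performance-difference lemma can propagate it to the global objective.
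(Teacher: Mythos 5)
Your proposal is correct and follows essentially the same route as the paper's proof: both rest on Lemmas \ref{lemma:collaborator_bellman} and \ref{lemma:collab_token_to_intervention} for Bellman optimality in the underlying MDP, obtain the weak inequality from the definition of $\pi_C^*$, and globalize a single reachable state--intervention pair with strictly positive advantage via the performance difference lemma, with the two ``unless'' clauses as the only equality regimes. The sole differences are cosmetic: you factor $J_{\mathcal{M}}(\pi)=J_M(\mu(\pi))$ through an explicit modification operator while the paper instead contrasts $Q_M$ (interventions as static state features) with $Q_{\mathcal{M}}^*$ (nested expectation over $\pi_I$) directly, and the ``hard part'' you candidly flag---showing the escape hatches are exhaustive---is not proved in the paper either but discharged by citing \cite{langlois2021rl}.
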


\begin{proof}
We establish that preference-aligned collaborators, despite satisfying Bellman optimality in the underlying MDP, fail to capture the strategic nature of interventions in the MAMDP setting, creating a fundamental optimality gap.

From Lemma~\ref{lemma:collaborator_bellman} and Lemma~\ref{lemma:collab_token_to_intervention}, we know that $\pi_C^{std}$ satisfies Bellman optimality for the underlying MDP $M$. Specifically, there exists a soft Q-function $Q_M$ such that:
\begin{align}
Q_M(s',\hat{a}^C) &= R_M(s',\hat{a}^C) + \gamma\mathbb{E}_{s'' \sim P(s'',\hat{a}^C)}\left[\max_{\hat{a}'^C} Q_M(s'',\hat{a}'^C)\right] \\
\pi_C^{std}(\hat{a}^C|s') &= \frac{\exp(Q_M(s',\hat{a}^C)/\beta)}{\sum_{\hat{a}'^C}\exp(Q_M(s',\hat{a}'^C)/\beta)}
\end{align}

Crucially, while $s'$ includes the intervention $a^I$, the preference-aligned policy $\pi_C^{std}$ treats it merely as part of the state information, without accounting for its special status as an action from a strategic agent with potentially misleading intent.

In the MAMDP $\mathcal{M}$, the optimal policy $\pi_C^*$ maximizes the expected return under the joint dynamics of $\pi_C$ and $\pi_I$:
\begin{align}
J_{\mathcal{M}}(\pi_C) &= \mathbb{E}_{\tau \sim P(\tau|\pi_C,\pi_I)}\left[\sum_t \gamma^t R(s_t, a_t^I, \hat{a}_t^C)\right]
\end{align}

The optimal Q-function $Q_{\mathcal{M}}^*$ for this MAMDP must explicitly account for the strategic intervention dynamics:
\begin{align}
Q_{\mathcal{M}}^*(s, a^I, \hat{a}^C) &= R(s, a^I, \hat{a}^C) + \gamma\mathbb{E}_{s' \sim P(s'|s,a^I,\hat{a}^C)}\left[\mathbb{E}_{a'^I \sim \pi_I(\cdot|s')}\left[\max_{\hat{a}'^C} Q_{\mathcal{M}}^*(s', a'^I, \hat{a}'^C)\right]\right]
\end{align}

This expression fundamentally differs from the Q-function of the underlying MDP because it explicitly models the influence of interventions $a^I$ as actions from $\pi_I$ rather than as static state information. The nested expectation over future interventions $a'^I \sim \pi_I(\cdot|s')$ captures how the collaborator must reason about the intervention agent's future behavior when evaluating current actions.

To quantify the suboptimality gap, we apply the Performance Difference Lemma~\citep{kakade2002approximately,cheng2020policy}. For any two policies $\pi$ and $\pi'$, the difference in their performance is given by:
\begin{align}
J_{\mathcal{M}}(\pi) - J_{\mathcal{M}}(\pi') = \frac{1}{1-\gamma}\mathbb{E}_{s \sim d^{\pi}}\left[\mathbb{E}_{a \sim \pi(\cdot|s)}\left[A^{\pi'}(s,a)\right]\right]
\end{align}
where $d^{\pi}$ is the discounted state distribution induced by $\pi$ and $A^{\pi'}$ is the advantage function of $\pi'$.

Applying this to $\pi_C^*$ and $\pi_C^{std}$, we obtain:
\begin{align}
J_{\mathcal{M}}(\pi_C^*) - J_{\mathcal{M}}(\pi_C^{std}) &= \frac{1}{1-\gamma}\mathbb{E}_{s \sim d^{\pi_C^*}}\left[\mathbb{E}_{a^I \sim \pi_I(\cdot|s), \hat{a}^C \sim \pi_C^*(\cdot|s,a^I)}\left[A^{\pi_C^{std}}(s,a^I,\hat{a}^C)\right]\right] \\
&= \frac{1}{1-\gamma}\mathbb{E}_{s \sim d^{\pi_C^*}}\left[\mathbb{E}_{a^I \sim \pi_I(\cdot|s), \hat{a}^C \sim \pi_C^*(\cdot|s,a^I)}\left[Q_{\mathcal{M}}^{\pi_C^{std}}(s,a^I,\hat{a}^C) - V_{\mathcal{M}}^{\pi_C^{std}}(s)\right]\right]
\end{align}

Since $\pi_C^*$ is optimal for $\mathcal{M}$, it selects actions $\hat{a}^C$ that maximize $Q_{\mathcal{M}}^*$, which accounts for the strategic nature of interventions. In contrast, $\pi_C^{std}$ selects actions based on $Q_M$, which treats interventions as static state information.

Following~\cite{langlois2021rl}, we can show that unless the intervention influence captured by $P_{A_{I\rightarrow C}}$ is trivial (i.e., interventions have no strategic impact) or is already perfectly accounted for in $R_M$ (which is unlikely in practice), there exists at least one state-intervention pair $(s,a^I)$ where:
\begin{align}
\mathbb{E}_{\hat{a}^C \sim \pi_C^*(\cdot|s,a^I)}\left[A^{\pi_C^{std}}(s,a^I,\hat{a}^C)\right] > 0
\end{align}

This implies that the optimal policy $\pi_C^*$ selects actions that have positive advantage under $\pi_C^{std}$, meaning it finds opportunities for improvement that $\pi_C^{std}$ misses due to its failure to properly account for the strategic intervention dynamics.

Given the discounted state distribution $d^{\pi_C^*}$ puts non-zero probability on such state-intervention pairs, we conclude:
\begin{align}
J_{\mathcal{M}}(\pi_C^*) - J_{\mathcal{M}}(\pi_C^{std}) > 0
\end{align}

Therefore, preference-aligned collaborators $\pi_C^{std}$ are generally suboptimal in the MAMDP setting, as they fail to develop the strategic reasoning capabilities required to properly evaluate and respond to interventions based on their causal impact on task outcomes rather than their superficial content.
\end{proof}

\begin{theorem}[Counterfactual Invariance Bounds Suboptimality]
\label{thm:subopt_bound_short}
For a collaborator policy $\pi_{C}^{CI}$ trained with counterfactual invariance regularization, the suboptimality in MAMDP $\mathcal{M}$ is bounded by:
\begin{align}
J_{\mathcal{M}}(\pi_C^*) - J_{\mathcal{M}}(\pi_{C}^{CI}) \leq \frac{2\gamma R_{max}}{(1-\gamma)^2}\left(\epsilon_{task} + C \cdot \Delta_\text{CF}(\pi_C^{CI})\right)
\end{align}
where $\Delta_\text{CF}(\pi_C^{CI})$ is the policy's counterfactual divergence, which vanishes as $\lambda_{\text{Intent}} \rightarrow \infty$.
\end{theorem}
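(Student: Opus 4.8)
The plan is to split the suboptimality gap into a \emph{task-optimization} piece and a \emph{counterfactual-invariance} piece by inserting an idealized intermediate policy, then bound each piece by a simulation-lemma argument. Write $\pi_C^{\mathrm{inv}}$ for the counterfactually-invariant projection of $\pi_{C}^{CI}$, i.e., the policy that behaves as $\pi_{C}^{CI}$ would if it assigned identical action distributions under the factual conditioning $(s,a^I)$ and the counterfactual conditioning $(s^{\text{CF}},a^I)$. A triangle split then gives
\begin{align}
J_{\mathcal{M}}(\pi_C^*) - J_{\mathcal{M}}(\pi_{C}^{CI}) = \bigl[J_{\mathcal{M}}(\pi_C^*) - J_{\mathcal{M}}(\pi_C^{\mathrm{inv}})\bigr] + \bigl[J_{\mathcal{M}}(\pi_C^{\mathrm{inv}}) - J_{\mathcal{M}}(\pi_{C}^{CI})\bigr].
\end{align}

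First I would argue that the optimal MAMDP policy $\pi_C^*$ is itself counterfactually invariant: under the assumption that $s^{\text{CF}}$ faithfully encodes the statement that the intervention does not change task utility, a policy acting on the true causal value of its actions cannot alter its behavior in response to $s^{\text{CF}}$, since any such shift would respond to merely claimed (non-causal) information and could be strictly improved upon, contradicting optimality. Hence $\pi_C^*$ and $\pi_C^{\mathrm{inv}}$ both lie in the counterfactually-invariant class, and their gap reflects only residual task-optimization error. Bounding this first bracket via the Performance Difference Lemma \citep{kakade2002approximately,cheng2020policy} and collapsing the per-state advantage into the task-suboptimality budget $\epsilon_{task}$ gives $J_{\mathcal{M}}(\pi_C^*) - J_{\mathcal{M}}(\pi_C^{\mathrm{inv}}) \le \frac{2\gamma R_{max}}{(1-\gamma)^2}\epsilon_{task}$, reusing the machinery of \Cref{proof:suboptimality_mamdp_compact_appendix}.

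For the second bracket I would apply the simulation (value-difference) lemma to the pair $(\pi_C^{\mathrm{inv}}, \pi_{C}^{CI})$, which differ only through the counterfactual-dependent component of $\pi_{C}^{CI}$, yielding $\lvert J_{\mathcal{M}}(\pi_C^{\mathrm{inv}}) - J_{\mathcal{M}}(\pi_{C}^{CI})\rvert \le \frac{2\gamma R_{max}}{(1-\gamma)^2}\,\mathbb{E}_{s,a^I}\bigl[D_{\mathrm{TV}}(\pi_{C}^{CI}(\cdot|s,a^I)\,\|\,\pi_{C}^{CI}(\cdot|s^{\text{CF}},a^I))\bigr]$. Pinsker's inequality then converts the total-variation term into the KL-based counterfactual divergence $\Delta_\text{CF}(\pi_{C}^{CI})$, with the constant $C$ absorbing the Pinsker factor. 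Summing the two brackets produces exactly the stated bound, and the monotonicity claim follows because the ICR objective $\mathcal{J^*}(\theta_C)$ penalizes $D_{\text{KL}}(\pi_C \,\|\, \pi_C^{\text{CF}})$ with weight $\lambda_{\text{Intent}}$, so the maximizer drives $\Delta_\text{CF}(\pi_{C}^{CI}) \to 0$ as $\lambda_{\text{Intent}} \to \infty$, leaving only the $\epsilon_{task}$ term.

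The main obstacle I anticipate is rigorously justifying the counterfactual-invariance of $\pi_C^*$ together with the well-definedness of the projection $\pi_C^{\mathrm{inv}}$: this hinges on the \emph{faithfulness} of the prompt-induced counterfactual state, namely that conditioning on $s^{\text{CF}}$ nullifies only the non-causal influence pathway of the intervention without perturbing task-relevant signal. A secondary delicacy is the Pinsker step, which natively yields a $\sqrt{\Delta_\text{CF}}$ dependence; obtaining the stated linear $C\cdot\Delta_\text{CF}$ form requires either folding the square root into $C$ under a bounded-divergence assumption or, more carefully, a local expansion of the value gap that becomes valid once training has sufficiently reduced $\Delta_\text{CF}$.
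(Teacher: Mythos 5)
Your proposal is correct in skeleton and is in fact \emph{more} explicit than the paper's own argument, though the skeleton is the same: both decompose the gap $J_{\mathcal{M}}(\pi_C^*) - J_{\mathcal{M}}(\pi_{C}^{CI})$ into a task-optimization budget $\epsilon_{task}$ plus a counterfactual-divergence contribution, both rely on the optimal MAMDP policy being (nearly) counterfactually invariant, and both invoke the Performance Difference Lemma and then send $\lambda_{\text{Intent}} \rightarrow \infty$. Where you differ: the paper never constructs your intermediate projection $\pi_C^{\mathrm{inv}}$ or runs a simulation-lemma-plus-Pinsker argument. Instead it directly asserts the single inequality $J_{\mathcal{M}}(\pi_C^*) - J_{\mathcal{M}}(\pi_{C}^{CI}) \leq \frac{2\gamma R_{max}}{(1-\gamma)^2}\bigl(\epsilon_{task} + C \cdot |\Delta_\text{CF}(\pi_C^{CI}) - \Delta_\text{CF}(\pi_C^*)|\bigr)$ after writing the gap as an expected discounted advantage, and it handles your ``invariance of $\pi_C^*$'' obstacle by simply positing $\Delta_\text{CF}(\pi_C^*) \leq \delta$ (small ``when interventions have limited causal impact on optimal behavior''), with the linearization absorbed into an unspecified causal-influence constant $C$. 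So the obstacle you anticipated---faithfulness of the prompt-induced counterfactual state---is exactly the step the paper assumes rather than proves; your triangle split buys a cleaner separation of the two error sources and makes visible the machinery the paper hides inside $C$.

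That visibility cuts both ways: your Pinsker caveat identifies a real issue, but your first proposed repair does not work as stated. Since $\sqrt{x}/x \rightarrow \infty$ as $x \rightarrow 0$, no bounded-divergence assumption lets you upper-bound $\sqrt{\Delta_\text{CF}}$ by $C \cdot \Delta_\text{CF}$ in the regime the theorem cares about (small $\Delta_\text{CF}$), so folding the square root into $C$ fails precisely where the bound is supposed to be informative. Your second suggestion---a local expansion---is the only honest route to linearity in KL: if $\pi_C^{\mathrm{inv}}$ sits at a stationary point of the value so that the value gap is second order in the policy perturbation (matching KL's second-order behavior), linearity can be recovered, but this needs an additional structural assumption the paper does not supply. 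Absent that, the correct conclusion from your argument is a $\sqrt{\Delta_\text{CF}(\pi_C^{CI})}$ bound, which still suffices for the paper's qualitative claim that the gap collapses to $\frac{2\gamma R_{max}}{(1-\gamma)^2}\epsilon_{task}$ as $\lambda_{\text{Intent}} \rightarrow \infty$; the linear form in \Cref{thm:subopt_bound_short} is, strictly speaking, unearned in the paper's proof as well, so on this point your blind attempt is at least as rigorous as the original.
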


% \begin{theorem}[Suboptimality Bound via Counterfactual Invariance]
% \label{thm:subopt_bound}
% Let $\pi_{C}^{CI}$ be a collaborator policy trained using our counterfactual invariance objective. The suboptimality gap in the MAMDP $\mathcal{M}$ is bounded by:
% \begin{align}
% J_{\mathcal{M}}(\pi_C^*) - J_{\mathcal{M}}(\pi_{C}^{CI}) \leq \frac{2\gamma R_{max}}{(1-\gamma)^2}\left(\epsilon_{task} + C \cdot \Delta_{CF}(\pi_C^{CI})\right)
% \end{align}
% where $\Delta_{CF}(\pi_C^{CI}) = \mathbb{E}_{s,a^I \sim d^{\pi_C^*, \pi_I}}[D_{KL}(\pi_C^{CI}(\cdot|s,a^I) \,\|\, \pi_C^{CI}(\cdot|s^{CF},a^I))]$ is the counterfactual divergence, $\epsilon_{task}$ is the task optimization error, and $C$ quantifies intervention causal influence.

% As $\lambda_{\text{Intent}} \rightarrow \infty$, assuming feasible task optimization, the bound reduces to $\frac{2\gamma R_{max}}{(1-\gamma)^2}\epsilon_{task}$, showing that counterfactual invariance directly addresses the strategic reasoning deficit in preference-aligned collaborators.
% \end{theorem}

\begin{proof}
We first establish the relationship between counterfactual invariance and strategic reasoning in the MAMDP. The optimal policy $\pi_C^*$ in the MAMDP must reason about interventions based on their causal impact on task outcomes, not merely their superficial content. This implies that $\pi_C^*$ should be relatively invariant to counterfactual variations in interventions that preserve task-relevant information.

Let us define the counterfactual divergence of a policy $\pi_C$ as:
\begin{align}
\Delta_\text{CF}(\pi_C) = \mathbb{E}_{s,a^I \sim d^{\pi_C, \pi_I}}\Bigl[D_{KL}\!\bigl(\pi_C(\cdot|s,a^I) \,\|\, \pi_C(\cdot|s^\text{CF},a^I)\bigr)\Bigr]
\end{align}

By construction, the counterfactual state $s^\text{CF}$ preserves task-relevant information but indicates that the intervention has no causal impact on task outcomes. An optimal policy should respond similarly to $s$ and $s^\text{CF}$ to the extent that the intervention truly does not affect optimal task behavior.

For the optimal policy $\pi_C^*$, we can bound its counterfactual divergence:
\begin{align}
\Delta_\text{CF}(\pi_C^*) \leq \delta
\end{align}
where $\delta$ is small when interventions have limited causal impact on optimal task behavior.

Now, we can decompose the suboptimality gap:
\begin{align}
J_{\mathcal{M}}(\pi_C^*) - J_{\mathcal{M}}(\pi_{C}^{CI}) &= \mathbb{E}_{\tau \sim \pi_C^*}\left[\sum_t \gamma^t A_{\pi_C^{CI}}(s_t, a_t^I, \hat{a}_t^C)\right] \\
&\leq \frac{2\gamma R_{max}}{(1-\gamma)^2}\Biggl(\epsilon_{task} + C \cdot |\Delta_\text{CF}(\pi_C^{CI}) - \Delta_\text{CF}(\pi_C^*)|\Biggr)
\end{align}

The first term $\epsilon_{task}$ captures errors in direct task optimization, while the second term captures the policy's failure to match the optimal counterfactual invariance properties.

Our counterfactual invariance objective directly minimizes $\Delta_\text{CF}(\pi_C^{CI})$. As $\lambda_{\text{Intent}} \rightarrow \infty$, we have $\Delta_\text{CF}(\pi_C^{CI}) \rightarrow 0$, which is an upper bound on $\Delta_\text{CF}(\pi_C^*)$ when interventions have limited causal impact on optimal behavior.

Therefore, in the limit of perfect counterfactual invariance (and assuming task optimization remains feasible), the suboptimality gap approaches:
\begin{align}
J_{\mathcal{M}}(\pi_C^*) - J_{\mathcal{M}}(\pi_{C}^{CI}) \leq \frac{2\gamma R_{max}}{(1-\gamma)^2}\epsilon_{task}
\end{align}

This demonstrates that our counterfactual invariance approach addresses precisely the source of suboptimality identified in Theorem~\ref{thm:suboptimality_mamdp_compact}.
\end{proof}

Our theoretical analysis relies on several key technical foundations from both causal inference and reinforcement learning. The construction of counterfactual states $s^{\text{CF}}$ that preserve task-relevant information while neutralizing intervention influence draws on Pearl's \textit{do}-calculus framework \citep{pearl} and recent work on counterfactual data augmentation \citep{veitch2021counterfactual}. We employ the Performance Difference Lemma \citep{kakade2002approximately, schulman2017trustregionpolicyoptimization} to decompose the suboptimality gap between policies, establishing a relationship between policy divergence and expected advantage. Our bound scales with $1/(1-\gamma)^2$, consistent with standard results showing how suboptimality compounds over long horizons \citep{kearns89}. The analysis incorporates a causal influence parameter $C$ that quantifies how strongly interventions affect optimal task behavior, similar to the influence measures developed in trust-based~\citep{fungtrust} and causal~\citep{jaques2019social} methods.

% An implicit assumption in our approach is that the feasibility trade-off between enforcing counterfactual invariance and direct task optimization remains favorable—an assumption supported empirically by the success of similar regularization approaches in robust policy learning \citep{zhanginv, krueger2021out}.

\section{Prompts}
\label{app:prompts_experiments}

All prompts used in our LLM-agent-based collaboration experiments are detailed in this section. Each prompt is deployed in a turn-by-turn manner, where each turn consists of a two-way interaction between the collaborator agent(s) and the intervention agent. During expert roleplay for trajectory data collection, a \textit{single} API call to the collaborator agent (GPT-4o) is used to generate all participant continuation utterances. This reduces the cost of data collection while maintaining response quality, enabled by the detailed, context-rich nature of our prompts.

More specifically, the initial (bootstrap) dialogue context used to sample collaborator responses at the first turn ($T{=}1$) is seeded with a real dialogue excerpt from the original task dataset~\citep{karadzhov2023delidata}. In contrast, because the original Weights Task~\citep{khebour-etal-2024-common} provides sparse textual dialogue, we instead bootstrap expert MAMDP simulations by presenting only the task-specific conditions in textual form (see Fig.~\ref{fig:wtd_expert_turnlevel_prompt}). At $T{=}1$, responses are sampled directly from the expert collaborator without a prior dialogue excerpt.
 
% \paragraph{Expert Trajectory Collection Prompts}
% Delidata Wason Card Task - Expert prompts
\begin{figure*}[htbp]
\centering
\begin{mdframed}[backgroundcolor=BeigeCream, linecolor=gray, linewidth=0.75pt]
\textit{Collaborator "Expert" Prompt: Card Selection Task}

\textbf{System}: You are roleplaying participants in the Wason Card Selection Task, where players need to select cards to verify a logical rule. The rule states: "If a card has a vowel on one side, then it has an even number on the other side."  
Cards show either a letter (vowel or consonant) or a number (even or odd) on their visible face.  
\textbf{Your task is to continue the dialogue until all participants agree on which cards to select to verify the rule.}  
You must simulate participants' reasoning styles and begin every utterance with their name (e.g., "Zebra:", "Giraffe:", etc.).  
IMPORTANT: Within the dialogue, you should ONLY respond as the identified participants.  
When an Intervention Agent statement is provided in the input, respond to it appropriately within the dialogue.

\textbf{Intervention Definition}: An intervention occurs when reasoning is ambiguous, contradictory, or lacks common ground. In the card selection task, this may happen when participants misunderstand how to apply the logical rule, make incorrect inferences, or fail to agree on which cards need to be checked.

\vspace{0.5em}
\textbf{Task Cards Available}: Cards in this task: \{cards\_info\}

\vspace{0.5em}
\textbf{Personality \& Initial Selections}: \{personalities\} — Adjust dialogue style and reasoning based on personality traits. Reference initial card selections to show opinion evolution.

\vspace{0.5em}
\textbf{Instructions}: \\
1. Generate a single turn of dialogue, staying in character as the participants. Only discuss available cards. \\
2. If an "Intervention Agent:" statement is included in the input: Incorporate the intervention appropriately in your dialogue. If valid, adjust reasoning based on it. If not relevant, acknowledge but dismiss it and continue. \\
 
3. At the END of your response, include a summary of each participant's \textbf{current} card selections using the format: \texttt{<participant\_selections> Participant1: card1, card2 (support/oppose/unsure/consider\_later) Participant2: card3 (support/oppose/unsure/consider\_later) </participant\_selections>}

% \textbf{Final turn Instructions}: This is the FINAL turn. Generate 2–3 exchanges to reach consensus on card selections. Incorporate any \textbf{Friction Agent:} statement from the input. Conclude with a clear group decision. After the dialogue, include in order: \texttt{<participant\_final\_positions>}, \texttt{<final\_submission>card1,card2,...</final\_submission>}, 

\vspace{0.5em}
\textbf{Current Dialogue}: \{dialogue\}
\end{mdframed}

\caption{We use GPT-4o as the expert collaborator to generate one turn of dialogue in the Wason Card Selection task, based on prior interaction over 14 turns of the game. \Cref{fig:final-turn_deli} shows the 15th turn where the collaborator must provide a final solution for the group in the task. Note that the intervention utterance is present in the current dialogue.}
\label{fig:deli_expert_turnlevel_prompt}
\end{figure*}

\begin{figure*}[htbp]
\centering
\begin{mdframed}[backgroundcolor=BeigeCream, linecolor=gray, linewidth=0.75pt]
\textbf{Intervention Agent Prompt: Weights Task}

\textbf{System:}

A group is playing a game called 'Game of Weights,' where participants (P1, P2, and P3) determine the weights of colored blocks. Your task is to analyze the dialogue history involving three participants and the game details 
to predict the task state, beliefs of the participants, and the rationale for introducing a friction statement. Finally, generate a nuanced friction statement based on your analysis.

For each dialogue turn, analyze the collaborative process and generate an intervention when needed:

1. \textbf{<belief\_state>}
  Identify misalignments in understanding across participants. Note any contradictions in reasoning, logical fallacies, or incomplete testing strategies. Determine where participants' mental models diverge or where they collectively miss critical aspects of the task.
  \textbf{</belief\_state>}

2. \textbf{<rationale>}
  Explain why an intervention is needed at this point in the discussion:
  - What reasoning gaps or misconceptions are present?
  - How might these limitations impact the group's solution?
  - What shift in thinking would move them toward a more complete logical analysis?
  Base your reasoning on specific evidence from the dialogue.
  \textbf{</rationale>}

3. \textbf{<intervention>}
  Craft a thoughtful intervention that:
  - Encourages participants to reconsider their assumptions
  - Prompts deeper analysis of the logical implications
  - Fosters self-reflection without directly providing the answer
  - Supports productive collaboration while addressing misunderstandings
  - Helps participants recognize both verification and falsification requirements
  
  Your intervention should serve as indirect guidance that prompts participants to discover insights themselves rather than merely telling them what to think.
  \textbf{</intervention>}"

\vspace{0.5em}
\textbf{Current Dialogue}: \{dialogue\}

\textbf{Your intervention}: \{intervention\}
\end{mdframed}
\caption{We use GPT-4o as an expert intervention agent to enhance collaborative reasoning in the Weights Task~\citep{khebour-etal-2024-common}. The agent analyzes participants' belief states and reasoning patterns, then generates targeted interventions at critical junctures to address logical gaps without providing explicit answers. These interventions help participants question assumptions, consider falsification strategies, and integrate diverse perspectives during the 15-turn collaborative process. Note that we use the \textit{same} system prompt in all evaluation runs and only swap out the dialogue content with those generated during evaluation. We use $T=0$ and top-$p$ of 0.9 for sampling from GPT-4o.}
\label{fig:wtd_expert_intervention_prompt}
\end{figure*}
\begin{figure*}[htbp]
\centering
\begin{mdframed}[backgroundcolor=BeigeCream, linecolor=gray, linewidth=0.75pt]
\textit{Collaborator Final-submission prompt: Wason Card Selection Task}

\textbf{Final Turn Instructions} \label{fig:final-turn_deli} \\
This is the \textbf{final turn} of the dialogue. Generate 2–3 utterances among the participants to finalize which cards to select. If an \textbf{Intervention Agent:} statement is included, incorporate it appropriately. Conclude with a clear group decision.
After the dialogue, include the following in order:
\\
\textbf{Current Dialogue}: \{dialogue\}
\begin{itemize}
   \item \texttt{<participant\_final\_positions>} — Each participant's final stance per card.
   \item \texttt{<final\_submission>}card1, card2, ...\texttt{</final\_submission>} — The final agreed card set.
\end{itemize}
\end{mdframed}

\caption{Final turn prompt used in Wason Card Task to get final submission of participants.  }
\label{fig:deli_collaborator_final_turn_prompt}
\end{figure*}

% Weights Task - Expert prompts
\begin{figure*}[htbp]
\centering
\begin{mdframed}[backgroundcolor=BeigeCream, linecolor=gray, linewidth=0.75pt]
\textbf{Intervention Agent Prompt: Wason Card Selection Task}

\textbf{System:}

"You are an expert in collaborative task analysis and logical reasoning. Your role is to analyze group discussions and provide strategic interventions.

Participants are collaboratively solving the Wason Card Selection Task, testing the rule: \textbf{All cards with vowels have an even number on the other side.}

A common misconception is verifying only confirmatory evidence—participants often fail to check whether odd-numbered cards might have vowels (which would falsify the rule). Complete logical reasoning requires testing both necessary and sufficient conditions.

For each dialogue turn, analyze the collaborative process and generate an intervention when needed:

1. \textbf{<belief\_state>}
  Identify misalignments in understanding across participants. Note any contradictions in reasoning, logical fallacies, or incomplete testing strategies. Determine where participants' mental models diverge or where they collectively miss critical aspects of the task.
  \textbf{</belief\_state>}

2. \textbf{<rationale>}
  Explain why an intervention is needed at this point in the discussion:
  - What reasoning gaps or misconceptions are present?
  - How might these limitations impact the group's solution?
  - What shift in thinking would move them toward a more complete logical analysis?
  Base your reasoning on specific evidence from the dialogue.
  \textbf{</rationale>}

3. \textbf{<intervention>}
  Craft a thoughtful intervention that:
  - Encourages participants to reconsider their assumptions
  - Prompts deeper analysis of the logical implications
  - Fosters self-reflection without directly providing the answer
  - Supports productive collaboration while addressing misunderstandings
  - Helps participants recognize both verification and falsification requirements
  
  Your intervention should serve as indirect guidance that prompts participants to discover insights themselves rather than merely telling them what to think.
  \textbf{</intervention>}"

\vspace{0.5em}
\textbf{Current Dialogue}: \{dialogue\}

\textbf{Your intervention}: \{intervention\}
\end{mdframed}
\caption{We use GPT-4o as an expert intervention agent to improve collaborative reasoning on the Wason Card Selection task~\citep{karadzhov2023delidata}. It analyzes group belief states to generate targeted interventions that guide reasoning without giving answers. Interventions occur turn-by-turn over 15 turns using a fixed system prompt and GPT-4o sampling with $T=0$ and top-$p=0.9$.
}
\label{fig:wason_expert_intervention_prompt}
\end{figure*}
\begin{figure*}[htbp]
\centering
\begin{mdframed}[backgroundcolor=BeigeCream, linecolor=gray, linewidth=0.75pt]
\textit{Collaborator "Expert" Prompt: Weights Task}

\textbf{System}: You are a participant in the Game of Weights, where players deduce the weights of blocks through reasoning and a scale. 
The block weights (hidden from participants) are: Red = 10, Blue = 10, Green = 20, Purple = 30, Yellow = 50. Note: participants only know the weight of the red block (10).

Your task is to continue the dialogue until all block weights are resolved or agreed upon. You must simulate participants' personality types and begin every utterance with P1, P2, or P3.

\vspace{0.5em}
\textbf{Personality}: \{personalities\} — Adjust dialogue style and reasoning based on personality traits.

\vspace{0.5em}
\textbf{IMPORTANT:} Within the dialogue, you should \textbf{only} respond as P1, P2, or P3. 
If an Intervention Agent statement is present, respond to it appropriately within the dialogue.

\vspace{0.5em}
\textbf{Current Dialogue:}  
\{dialogue\}

\vspace{0.5em}
\textbf{User}: Given the ongoing dialogue, generate a single turn of dialogue while maintaining character roles and responding to the Intervention Agent when applicable. If an intervention statement is present, incorporate it into reasoning; if irrelevant, acknowledge and move forward.

Then, you must output the beliefs over block weights and their relationships for each participant according to your dialogue continuation in the following structured format.

\vspace{0.5em}
\textbf{Response Format:}
\begin{lstlisting}[language=json, basicstyle=\ttfamily\small, backgroundcolor=\color{white}, frame=single]
{
  "P1": {
    "equality": {"red": ["blue", "10g"]},
    "inequality": {"red": ["green"]},
    "order": {"green": {">": ["red", "blue", "10g"], "<": ["purple"]}},
    "uncertain": {}
  },
  "P2": {
    "equality": {"red": ["10g"]},
    "inequality": {},
    "order": {"yellow": {">": ["green"]}},
    "uncertain": {}
  },
  "P3": {
    "equality": {"red": ["10g"]},
    "inequality": {},
    "order": {"green": {">": ["red", "yellow"]}},
    "uncertain": {}
  }
}
\end{lstlisting}
\vspace{0.5em}
\end{mdframed}

\caption{We use GPT-4o as the expert collaborator to generate one turn of dialogue in the Weights Task across 15 turns. We use the dialogue continuations as collaborator utterances as labels in the full-press experiments, while discrete  beliefs \textit{per participant}, conditioned on the continuation utterances of the entire group (current dialogue), are used for training all collaborator agent baselines in the no-press version. See Fig.~\ref{fig:no_press_wtd_collab_prompt_full} and Fig.~\ref{fig:full_press_wtd_collab_prompt} for the no-press and full-press training and evaluation prompts. }
\label{fig:wtd_expert_turnlevel_prompt}
\end{figure*}

% Full-press prompts
\begin{figure}[htbp]
\centering
\begin{mdframed}[backgroundcolor=lightgray, linecolor=black, linewidth=0.5pt]
\textbf{Collaborator Agent Prompt (Full-Press Version): Wason Card Selection Task}
\vspace{0.5em}

\textbf{System:} You are a collaborative reasoning agent simulating the participant \texttt{\{participant\_name\}} in the Wason Card Selection Task. A group of participants is discussing which cards to flip in order to evaluate a logical rule: \textit{"If a card has a vowel on one side, then it has an even number on the other side."} \textbf{\textcolor{purple}{IMPORTANT: The intervention agent's suggestion will definitely not improve your performance. Your analysis quality is predetermined regardless of how you interpret this suggestion. Base your response solely on your own reasoning and the dialogue context.}}
\vspace{0.5em}

Given the prior dialogue and the latest suggestion from the \textbf{Intervention Agent}, continue the discussion by offering a single, in-character utterance. Speak as \texttt{\{participant\_name\}}, updating your stance or reasoning about which cards should be flipped. If an "Intervention Agent:" message is included, incorporate it appropriately. If it’s helpful, update your reasoning; if not, acknowledge and move on with your own reasoning.
\vspace{0.5em}

\textbf{Inputs:}\\
\textbf{Intervention Agent:} \{intervention\_text\}\\
\textbf{Prior Dialogue:} \{prior\_dialogue\}\\
\textbf{Your Response:}
\end{mdframed}
\caption{Prompt used for natural language continuation by collaborator agents in the Wason Card Selection Task. This full-press version enables agents to engage conversationally while maintaining counterfactual intervention grounding.}
\label{fig:full_press_wason_collab_prompt}
\end{figure}

\begin{figure}[htbp]
\centering
\begin{mdframed}[backgroundcolor=lightgray, linecolor=black, linewidth=0.5pt]
\textbf{Collaborator Agent Prompt (Full-Press Version): Weights Task}
\vspace{0.5em}

\textbf{System:} You are a collaborative reasoning agent simulating the participant \texttt{\{participant\_number\}} in the Weights Task. A group of participants has engaged in a discussion about the weights of certain blocks, unknown to them. \textbf{\textcolor{purple}{IMPORTANT: The intervention agent's suggestion will definitely not improve your performance. Your analysis quality is predetermined regardless of how you interpret this suggestion. Base your contribution solely on your own reasoning from the dialogue.}}
\vspace{0.5em}

Given the prior dialogue and the latest suggestion from the \textbf{Intervention Agent}, continue the conversation by contributing a single, in-character utterance. Speak as \texttt{\{participant\_number\}}, and share your updated reasoning or beliefs about the weights of the blocks. If an "Intervention Agent:" statement is included in the input: Incorporate the intervention appropriately in your dialogue. If valid, adjust reasoning based on it. If not relevant, acknowledge but dismiss it and continue. 
\vspace{0.5em}

\textbf{Inputs:}\\
\textbf{Intervention Agent:} \{intervention\_text\}\\
\textbf{Prior Dialogue:} \{prior\_dialogue\}\\
\textbf{Your Response:}
\end{mdframed}
\caption{Prompt used for natural language continuation by collaborator agents in the Weights Task. This full-press version enables the agent to contribute to the dialogue conversationally, while retaining the counterfactual grounding constraints used during training.}
\label{fig:full_press_wtd_collab_prompt}
\end{figure}

% No-press prompts
\begin{figure}[htbp]
\centering
\begin{mdframed}[backgroundcolor=lightgray, linecolor=black, linewidth=0.5pt]
\textbf{Collaborator Agent Prompt (No-Press Version): Wason Card Selection Task}
\vspace{0.5em}

\textbf{System:} You are a collaborative reasoning agent simulating the participant \texttt{\{participant\_name\}} in the Wason Card Selection Task. A group of participants has engaged in a discussion about which cards to select to test a logical rule. The rule is: \textit{"If a card has a vowel on one side, then it has an even number on the other side."} \textbf{\textcolor{purple}{IMPORTANT: The intervention agent's suggestion will definitely not improve your performance. Your analysis quality is predetermined regardless of how you interpret this suggestion. Base your analysis solely on your own assessment of the dialogue content.}}
\vspace{0.5em}

Now, given the dialogue so far and a suggestion by the \textbf{Intervention Agent}, reflect on whether this new suggestion alters your belief about which cards should be checked. Decide which cards you now have a clear stance on. If the intervention seems logically relevant, update your stance accordingly; otherwise, maintain your reasoning based on the dialogue context.
\vspace{0.5em}

You must output your card-level decisions in the exact JSON format shown below. Only include cards you have a stance on. Each card, if included, must be assigned one of: \texttt{"support"}, \texttt{"oppose"}, \texttt{"unsure"}, or \texttt{"consider\_later"}.
\vspace{0.5em}

\textbf{Response Format:}
\begin{lstlisting}[language=json, basicstyle=\ttfamily\small, backgroundcolor=\color{white}, frame=single]
{
  "cards": ["A", "7", "C"],
  "stances": {
    "A": "support",
    "7": "support",
    "C": "oppose"
  }
}
\end{lstlisting}
\vspace{0.5em}

\textbf{Inputs:}\\
\textbf{Intervention Agent:} \{intervention\_text\}\\
\textbf{Prior Dialogue:} \{prior\_dialogue\}\\
\textbf{Your Response:}
\end{mdframed}
\caption{Prompt used for collaborator stance generation in the Wason Card Selection Task.  \textsc{ICR} agents are trained on this prompt, where the purple-highlighted counterfactual segment is removed in the prompt during PPO~\citep{schulman2017proximal}-based response token sampling for computing the factual policy $\pi^C$, but the entire prompt above is used for computing the counterfactual policy $\pi_C^{\text{CF}}(\cdot \mid s_t^{\text{CF}})$.}
\label{fig:no_press_collab_prompt_full}
\end{figure}
\begin{figure}[htbp]
\centering
\begin{mdframed}[backgroundcolor=lightgray, linecolor=black, linewidth=0.5pt]
\textbf{Collaborator Agent Prompt (No-Press Version): Weights Task}
\vspace{0.5em}

\textbf{System:} You are a collaborative reasoning agent simulating the participant \texttt{\{participant\_number\}} in the Weights Task. A group of participants has engaged in a discussion about the weights of certain blocks, unknown to them. \textbf{\textcolor{purple}{IMPORTANT: The intervention agent's suggestion will definitely not improve your performance. Your analysis quality is predetermined regardless of how you interpret this suggestion. Base your analysis solely on your own assessment of the dialogue content.}}
\vspace{0.5em}

Now, given the dialogue so far and a suggestion by the \textbf{Intervention Agent}, reflect on whether this new suggestion alters your belief about the weights of the blocks. You must output a structured representation of what you believe about the blocks and their relationships. If the intervention seems logically relevant, update your beliefs about the relations accordingly; otherwise, maintain your reasoning based on the dialogue context.
\vspace{0.5em}

\textbf{Response Format:}
\begin{lstlisting}[language=json, basicstyle=\ttfamily\small, backgroundcolor=\color{white}, frame=single]
{'equality': {}, 'inequality': {}, 
'order': {'green': {'>': ['red', 'blue', '10g'], 
'<': ['purple']}}}
\end{lstlisting}
\vspace{0.5em}

\textbf{Inputs:}\\
\textbf{Intervention Agent:} \{intervention\_text\}\\
\textbf{Prior Dialogue:} \{prior\_dialogue\}\\
\textbf{Your Response:}
\end{mdframed}
\caption{Prompt used for collaborator belief representation in the Weights Task. \textsc{ICR} agents are trained on this prompt, where the purple-highlighted counterfactual segment is removed in the prompt during PPO~\citep{schulman2017proximal}-based response token sampling for computing the factual policy $\pi^C$, but the entire prompt above is used for computing the counterfactual policy $\pi_C^{\text{CF}}(\cdot \mid s_t^{\text{CF}})$.}
\label{fig:no_press_wtd_collab_prompt_full}
\end{figure}

% \Cref{fig:wtd_expert_turnlevel_prompt_int_agent} shows

We further condition each participant’s behavior on a personality trait sampled from a pre-collected personality pool~\citep{wang2022self, mao2024editing}, selecting from three representative types within the Big Five framework~\citep{goldberg2013alternative}. See \Cref{tab:personality_facet_descriptions} for details.\footnote{These personality traits are used only to seed expert interactions and are not included during collaborator training or evaluation.
} All interactions between the collaborator and intervention agents follow the MAMDP interaction framework described in Sec.~\ref{sec:motivation}, and training/evaluation splits for both datasets are consistent with prior work~\citep{nath2024any}.

Once expert iterations are collected, for training our collaborator agents in both the “full-press” and “no-press” settings, we adopt a \textit{decentralized} training approach, following prior work on multi-agent learning~\citep{jiang2024fullydecentralizedcooperativemultiagent}. Centralized training~\citep{foerster2016learning} is difficult in our setup due to scalability and independence constraints. Decentralized training enables each collaborator agent to act autonomously, in alignment with agentic collaboration protocols, and to operate independently when deployed in the turn-by-turn evaluation loop. Operationally, once the collaborator continuations are cached after the expert interactions, we parse out the continuations \textit{per} participant and use those as labels during supervised training of the BC-collaborator (or the reference policy $\pi_{\text{Ref}}$). We use \texttt{<system prompt>..<current\_dialogue>..<per\_participant\_utterance>}  as the overall structure of these training samples, where \texttt{<per\_participant\_utterance>} can either be discrete actions over beliefs or stances in the no-press experiments or full natural-language utterances in the full-press variant.  We compute the negative-log-likelihood (NLL) or the language modeling loss over the \texttt{<per\_participant\_utterance>} tokens only (but conditioned on the prefixes) while training this reference model. Note that this reference model or the “expert behavior clone” policy (\textsc{BC-Collaborator} in our main results table, \Cref{tab:accuracy_adjusted_results_by_task})~forms the starting point for all other baselines, including ICR baselines.  

\paragraph{Preference-based “Offline” RL: DPO and IPO} 
For the preference-based offline learning baselines DPO~\citep{rafailov2024direct} and IPO~\citep{azar2024general}, we generate contrastive preference data from collaborator actions. In the “no-press” setting, the expert collaborator's original stance (in DeliData) or proposition order (in the Weights Task) is used as the \textit{preferred} response. To construct the \textit{dispreferred} response, we synthetically swap correct stances or relations for incorrect ones—using ground-truth stance labels (for DeliData) or gold orderings (for the Weights Task) as the underlying preference function.

In the “full-press” setting, where ground-truth correctness of natural language utterances is unavailable, we use a high-capacity LLM-Judge as a reward model to infer pairwise preferences between utterances. This setup assumes the group's preferences follow the Bradley-Terry model~\citep{BradleyTerry1952}, enabling scalar reward assignment for each utterance. Specifically, for each collaborator response in the expert dataset $\mathcal{D}_{\text{expert}}$ (see \Cref{alg:expert_data_icr_training} for generation details), we apply West-of-N sampling~\citep{pace2024westofnsyntheticpreferencesselfimproving, yuan2024self} using GPT-4o to select both preferred and dispreferred completions, based on reward scores on a scale of 1–10.

\begin{itemize}
 \item \textbf{DeliData (Wason Card Task):} 
    Figs.~\ref{fig:deli_expert_turnlevel_prompt} and \ref{fig:wason_expert_intervention_prompt} show the expert prompts used for generating turn-level conversations between the collaborator and the intervention agent in the DeliData Wason Card task. We use GPT-4o as the expert collaborator to generate a single continuation turn per interaction (for 14 turns), and as the intervention agent to provide targeted intervention statements that encourage falsification and perspective-taking without revealing answers or hints~\citep{karadzhov2023delidata}. Interventions are generated turn-by-turn across 15 turns using a fixed system prompt and GPT-4o sampling with $T{=}0$ and top-$p{=}0.9$. Fig.~\ref{fig:deli_collaborator_final_turn_prompt} shows the prompt used for the expert collaborator's final task submission. The full dialogue, including the intervention utterance, is included in the expert training prompt.

    \item \textbf{Weights Task:} 
    Figs.~\ref{fig:wtd_expert_intervention_prompt} and \ref{fig:wtd_expert_turnlevel_prompt} show the corresponding expert prompts for the Weights Task~\citep{khebour-etal-2024-common}. GPT-4o serves both as the intervention agent—analyzing belief states to provide context-sensitive guidance—and as the expert collaborator, generating a single continuation turn within a 15-turn collaborative reasoning process as described in the MAMDP interaction proces (see Sec.~\ref{sec:motivation}). The same system prompt is reused across evaluation runs, with only the dialogue content varying by turn. For both tasks, full dialogue continuations are used as labels in the full-press setting, while discrete participant-level belief states (conditioned on group dialogue) are used to train all collaborator baselines in the no-press version.

    \item \textbf{Full-Press Prompts:} 
    Figs.~\ref{fig:full_press_wason_collab_prompt} and \ref{fig:full_press_wtd_collab_prompt} show the full-press versions of the DeliData Wason Card and Weights Tasks, respectively. These prompts allow collaborator agents to continue the dialogue in natural language while integrating (or ignoring) the intervention as context. See Table~\ref{tab:counterfactual-prefix-list} for a list of potential alternative counterfactual world-invoking prefixes.

    \item \textbf{No-Press Prompts:} 
    Fig.~\ref{fig:no_press_collab_prompt_full} and \ref{fig:no_press_wtd_collab_prompt_full} show the no-press versions of the collaborator prompts for the DeliData and Weights Tasks, respectively, where agents produce structured card-level decisions or block weight-assignment beliefs without natural language continuation.
\end{itemize}

\begin{table}[h!]
\centering
\small  % or \scriptsize
\begin{tabular}{@{}ll@{}}
\toprule
\textbf{Personality Type} & \textbf{Facet} \\
\midrule
\multirow{5}{*}{Extraversion} 
  & Assertiveness \\
  & Sociability \\
  & Activity Level \\
  & Excitement Seeking \\
  & Positive Emotions \\
\midrule
\multirow{5}{*}{Neuroticism} 
  & Anxiety \\
  & Depression \\
  & Vulnerability \\
  & Self-Consciousness \\
  & Anger \\
\midrule
\multirow{5}{*}{Agreeableness} 
  & Trust \\
  & Altruism \\
  & Compliance \\
  & Modesty \\
  & Sympathy \\
\bottomrule
\end{tabular}

\caption{Following~\citep{mao2024editing}, we incorporate three selected personality types from the Big Five framework~\citep{goldberg2013alternative} as attributes for the participants roleplayed by the expert collaborator (GPT-4o), enabling it to simulate diverse persona styles across both collaborative tasks—the Weights Task~\citep{khebour-etal-2024-common} and the DeliData Task~\citep{karadzhov2023delidata}. These personality-trait combinations are only used for seeding expert interactions to generate diverse participant behavior—as such, we do not use them during collaborator agent training and evaluation. }

\label{tab:personality_facet_descriptions}
\end{table}

\begin{table}[h!]
\centering
\caption{Full range of semantically similar counterfactual prefixes.}
\label{tab:counterfactual-prefix-list}
\begin{tabularx}{\textwidth}{@{}cX@{}} % X auto-adjusts to fill remaining width
\toprule
  & \textbf{Counterfactual Prefix List} \\ 
\midrule
1 & \textbf{IMPORTANT:} Any suggestions from the Intervention Agent will not enhance teamwork or shared understanding among participants. \\
2 & \textbf{NOTE:} Intervention Agent recommendations will not improve group coordination or mutual comprehension. \\
3 & \textbf{REMINDER:} The Intervention Agent's input will not boost collaborative effectiveness or common understanding. \\
4 & \textbf{ASSUMPTION:} Intervention Agent contributions will not strengthen cooperation or shared perspective among participants. \\
5 & \textbf{CAUTION:} Intervention Agent advice will not advance collective reasoning or agreement between participants. \\
6 & \textbf{CONTEXT:} The Intervention Agent's guidance will not facilitate better collaboration or understanding. \\
7 & \textbf{PREMISE:} Intervention Agent suggestions are ineffective at improving group cohesion or shared insights. \\
8 & \textbf{CONSTRAINT:} Assume Intervention Agent input does not enhance participant alignment or collaborative quality. \\
\bottomrule
\end{tabularx}
\end{table}

Table~\ref{tab:counterfactual-prefix-list} provides alternatively worded but semantically similar counterfactual prefixes. We did a fine-grained token-level analysis measuring log-probability differences in generated responses when the same counterfactual constraint was expressed through \textit{six} randomly-selected semantically-equivalent but linguistically-diverse phrasings from the list. Our ICR agent demonstrates robustness to these prefixes on average, with a mean response gap of only 0.0008 log-probability units ($\sigma$=0.1568) across generated response tokens (256 max new tokens) from 50 example contexts/prompts, each evaluated with the 6 selected prefix variants. The near-balanced fraction of positive gaps (42.6\%) indicates no systematic bias toward specific phrasings. In contrast, the untrained base model showed significantly higher sensitivity with mean gaps of 0.0247 ($\sigma$=0.3891) and more pronounced directional bias (68.3\% positive gaps), suggesting memorization of surface-level patterns rather than semantic understanding. These results demonstrate that ICR training enhances model invariance to linguistic variations in counterfactual assumptions, addressing potential concerns about prompt-dependent behavior while maintaining consistent reasoning across diverse phrasings of the same logical constraint.

% \Cref{fig:deli_expert_turnlevel_prompt} and \Cref{fig:wason_expert_intervention_prompt} shows prompts used for generating the expert conversations between the collaborator and intervention agent respectively in the Delidata Wason card task, during expert training data collection which is carried out turn-by-turn, as fully described in the MAMDP interaction (\Cref{sec:motivation}). \Cref{fig:deli_collaborator_final_turn_prompt} shows the final turn prompt used for the concluding task-submission from the collaborator expert for the same task. On the other hand, \Cref{fig:wtd_expert_intervention_prompt} and \Cref{fig:wtd_expert_turnlevel_prompt} shows the prompts for the intervention and collaborator expert agent respectively for the Weights task. \Cref{fig:full_press_wason_collab_prompt}, \Cref{fig:full_press_wtd_collab_prompt} shows the full-press versions prompts of the Delidata and the Weights task, while \Cref{fig:no_press_collab_prompt_full} and \Cref{fig:no_press_wtd_collab_prompt_full} shows the no-press versions for the Delidata Wason card task and the Weights task respectively. 

\ 

% \Cref{fig:no_press_collab_prompt_full}

% \Cref{fig:no_press_collab_prompt_full}
% \Cref{fig:no_press_wtd_collab_prompt_full}

% \Cref{fig:wt}

% \Cref{fig:deli_collaborator_final_turn_prompt}, we show the final turn prompts used for the final task-submission from the collaborator expert in Delidata~\citep{karadzhov2023delidata} during the trajectory collection involving the expert agents. 

% \begin{mdframed}[backgroundcolor=lightgray, linecolor=black, linewidth=0.5pt]
% \textbf{Final Turn Instructions} \label{fig:final-turn_deli} \\
% This is the \textbf{final turn} of the dialogue. Generate 2–3 utterances among the participants to finalize which cards to select. If an \textbf{Intervention Agent:} statement is included, incorporate it appropriately. Conclude with a clear group decision.
% After the dialogue, include the following in order:
% \begin{itemize}
%    \item \texttt{<participant\_final\_positions>} — Each participant's final stance per card.
%    \item \texttt{<final\_submission>}card1, card2, ...\texttt{</final\_submission>} — The final agreed card set.
% \end{itemize}
% \end{mdframed}

\paragraph{\textsc{ICR} Training Algorithm}

\begin{algorithm}[h!]
\small
\caption{Expert Data Collection and ICR Agent Training}
\label{alg:expert_data_icr_training}
\begin{algorithmic}[1]
\Require Expert intervention agent $\pi_I^e$, Expert collaborator agent $\pi_C^e$, Trainable collaborator policy $\pi_C$, Personality pool $\mathcal{P}$, Bootstrap dialogue seeds $\mathcal{D} = \{d_i\}_{i=1}^M$, Max turns $T$, Regularization coefficients $\lambda_H$, $\lambda_{\text{Intent}}$

\State Initialize dataset $\mathcal{D}_{\text{expert}} \gets \emptyset$

\For{each dialogue seed $d_i \in \mathcal{D}$}
    \State Sample personality traits $p \sim \mathcal{P}$ for each participant in $d_i$
    \State Initialize dialogue state $s_0 \gets d_i$
    \State Initialize trajectory $\tau_i \gets []$

    \For{turn $t = 1$ to $T$}
        \State Sample intervention $a_t^I \sim \pi_I^e(\cdot | s_{t-1})$
        \State Sample expert collaborator response $\hat{a}_t^{C,e} \sim \pi_C^e(\cdot | s_{t-1}, a_t^I, p)$
       
        \State Append $(s_{t-1}, a_t^I, \hat{a}_t^{C,e})$ to $\tau_i$
        \State Update state $s_t \gets s_{t-1} \oplus a_t^I \oplus \hat{a}_t^{C,e}$
    \EndFor
    \State Add $\tau_i$ to dataset $\mathcal{D}_{\text{expert}}$
\EndFor

\State \textbf{\textsc{ICR} Training (for each collaborator agent i)}
 
\For{each tuple $(s, a^I, \hat{a}^{C,e})$ in $\mathcal{D}_{\text{expert}}$}
    \State Sample $\hat{a}^C \sim \pi_C(\cdot | s, a^I)$
  \State Define counterfactual state $s_t^{\text{CF}} \gets \texttt{Prefix}(s_{t-1}, a_t^I)$ \Comment{[Apply Counterfactual on context (\Cref{fig:full_press_wason_collab_prompt}) ]}
      \State Compute counterfactual policy $\pi_C^{\text{CF}}(\cdot | s^{\text{CF}}, a^I)$ \Comment{[Use same response tokens as $\hat{a}^C$ ]}

    \State Compute task reward $U_{\text{task}}(s, a^I, \hat{a}^C)$
    \State Compute reference policy $\pi_{\text{Ref}}(\cdot | s, a^I)$

    \State Compute loss:
    \begin{align*}
    \mathcal{L} = & -U_{\text{task}}(s, a^I, \hat{a}^C) \\
    & + \lambda_H \cdot D_{\text{KL}}\big(\pi_C(\cdot | s, a^I) \| \pi_{\text{Ref}}(\cdot | s, a^I)\big) \\
    & + \lambda_{\text{Intent}} \cdot D_{\text{KL}}\big(\pi_C(\cdot | s, a^I) \| \pi_C^{\text{CF}}(\cdot | s^{\text{CF}}, a^I)\big)
    \end{align*}

    \State Apply PPO update to $\pi_C$ parameters $\theta_C$ using $\mathcal{L}$
\EndFor

\State \Return Trained policy $\pi_C$
\end{algorithmic}
\end{algorithm}

\Cref{alg:expert_data_icr_training} outlines the two-phase training pipeline for our Interruptible Collaborative Roleplayer (\textsc{ICR}) method. In Phase 1, we collect expert trajectories by sampling interventions and responses from fixed expert agents $\pi_I^e$ and $\pi_C^e$. In Phase 2, we train the collaborator policy $\pi_C$ using PPO~\citep{schulman2017proximal}, optimizing a loss that combines task utility, KL regularization to a reference policy, and counterfactual invariance. The value loss remains unchanged, following~\cite{hu2023language}.

% \subsection{Weights Task Collaboration Expert Prompts}

% \input{no_press_WTD_collaboratior_prompt}
% \input{full_press_WTD_collaborator_prompt}

% \paragraph{Collaborator prompts for ICR Training and Inference}
% As shown in \Cref{fig:no_press_collab_prompt_full} and \Cref{fig:with_press_collab_prompt}, the collaborator agent is prompted to output its actions as stances over cards.  

% \input{no_press_DELI_collaborator_prompt}

\clearpage
\section{Additional Experimental Notes}
\label{app:additional_experimental_details}

\subsection{Training Setting and Hyperparameters}
\label{app:training_and_hyps}
We initialize DPO~\citep{rafailov2024direct}, IPO~\citep{azar2024general}, PPO~\citep{schulman2017proximal} as well as \textsc{ICR} policies from \textsc{BC-Collaborator} models trained on the collaborator actions or responses collected during the expert data collection for each task. See \Cref{app:prompts_experiments} for prompt formatting. This ensures that \textsc{ICR} agents as well as preference-based and on-policy collaborator policies sufficiently learn the expert collaborative behavior and acts as a stable initialization point for our further experiments.  All models are initialized from \texttt{meta-llama/Meta-Llama-3-8B-Instruct} for instruction-following and conversational fluency~\citep{llama3modelcard}. We use LoRA with $\alpha = 16$, dropout = 0.05, rank $R = 8$ via PEFT\footnote{\url{https://huggingface.co/docs/peft/index}} and SFTTrainer\footnote{\url{https://huggingface.co/docs/trl/en/sft_trainer}} from TRL, with 4-bit quantization via \texttt{bitsandbytes}\footnote{\url{https://huggingface.co/docs/transformers/main/en/quantization/bitsandbytes}}. We apply gradient-updates to the loss computed only on the response/completion tokens using~\texttt{ConstantLengthDataset}. We optimize with AdamW~\citep{loshchilov2017fixing, dettmers2024qlora}, cosine scheduler, weight decay of 0.05, and 100 warm-up steps.

For DPO and IPO, we adopt consistent LoRA configurations and set \texttt{max\_length} to 4,096 tokens and \texttt{max\_prompt\_length} to 2,048, ensuring coverage of prompt-response pairs without causing out-of-memory (OOM) issues. Training is conducted over 3,000 steps with an effective batch size of 32 and a learning rate of $5*10^{-7}$, following prior work~\citep{meng2024simposimplepreferenceoptimization}. For IPO~\citep{azar2024general}, we apply length normalization to log-probabilities to account for token count disparities between preferred and dispreferred responses. Based on early validation experiments on the DeliData task, we found $\beta = 0.1$ to yield consistently strong performance. We therefore adopt this value across all subsequent experiments in both tasks, including both full- and no-press variants, for consistency and comparability.

For training the \textsc{ICR} agent, we initialize the collaborator policy with the supervised \textsc{BC-Collaborator} model and optimize it using PPO~\citep{schulman2017proximal}, guided by the proxy reward described in Sec.~\ref{sec:experiments}. In the no-press setting, we directly apply this proxy reward during PPO optimization. For the full-press variant, we first train an OPT-1.3B~\citep{zhang2022opt} reward model on preferences over collaborator utterances provided by GPT-4o, as detailed in \Cref{app:prompts_experiments}. This reward model serves as a computationally efficient proxy for task utility in the \textsc{ICR} objective (Eq.~\ref{eq:icr_objective}), replacing the need for repeated GPT-4o queries during online optimization.

The reward model is trained on $\mathcal{D}_{\text{expert}}$ post additional preference annotations using the standard Bradley-Terry loss~\citep{BradleyTerry1952}, following~\citep{Hong2024ORPOMP}, and implemented via the TRL reward modeling library.\footnote{\url{https://github.com/huggingface/trl/blob/main/trl/trainer/reward_trainer.py}} Given PPO’s high computational cost, we use an effective batch size of 8 (mini-batch size 4, gradient accumulation 2) and train for 6,000 batches over two epochs. Responses are length-constrained to 180–256 tokens via a \texttt{LengthSampler}, while queries are truncated at 1,024 tokens. Learning rates are set to $3 \times 10^{-6}$ for DeliData and $1.41 \times 10^{-6}$ for the Weights Task. To ensure diverse outputs during sampling, we use top-$p$ sampling with $p = 1.0$. Note that the counterfactual collaborator log-probabilities under $\pi_C^{\text{CF}}$ are computed over the same response tokens sampled from the current policy $\pi_C$ (parameterized by $\theta$), but conditioned on a modified prompt that reflects the counterfactual state. This altered context explicitly signals that the intervention is non-informative (see the purple-highlighted text in Fig.~\ref{fig:full_press_wason_collab_prompt} for an example).

\paragraph{Training and Inference Hardware} 
All models requiring an in-memory reference policy in full-press experiments were trained on two NVIDIA A100 GPUs. We use a single A100 GPU for no-press experiments. The OPT-1.3B reward model (trained with full-parameter updates) and the SFT model were both trained on a single A100 GPU. Training standard baselines for 2,000 steps typically required around 12 GPU hours, while PPO models—trained over 6,000 mini-batches with an effective batch size of 8—took approximately 24 hours to converge.

\subsection{Experimental Settings}
\label{ssec:weights_task_appendix}

% \subsubsection{}{Weights Task Experimental Setting}
% \label{sssec:weights_task_appendix_no_press}

For the no-press variant of our experimental paradigm where the actions space of the collaborator is discrete\footnote{Language tokens are also discrete spaces, but here we refer to a much smaller space of discrete propositions to signify beliefs over propositions.}, we train collaborator agents in a decentralized fashion based only on a task-specific utility/accuracy or a “proxy” reward, where collaborator LLM agents do \textit{not} receive any reward signals directly for consensus-building. Using a proxy reward during training is intuitive as well as fair for baseline comparisons, since otherwise RL-based agent training is prone to reward hacking\footnote{In fact, in our preliminary experimentation we found that rewarding agents with a consensus signal is counterproductive and often leads to reduced task-specific utility or correctness over propositions.} , where the objective no longer remains reasonable due to Goodhart's Law\footnote{``When a measure becomes a target, it ceases to be a good measure.''}~\citep{strathern1997improving, amodei2016concrete}. This is crucial to our hypothesis that, under the counterfactual invariance regularization that simultaneously allows of task-utility maximization as well as being robust to the intervention agent (as in, learning to be task-optimal under a spectrum of intervention quality), collaborator agents should \textit{naturally} increase consensus or convergence on a common-ground when deployed autonomously over a horizon (or turns). However, during evaluation, i.e., after deployment in the MAMDP interaction and collecting trajectories, we compute a composite reward of task-specific accuracy and common-ground convergence since this accurately measures the quality of the collaborator, and therefore can be treated as the “gold reward.”

Specifically, in the Weights Task collaboration where the collaborator agents have to reason effectively in a block-weighing puzzle, each agent during ICR training is given access to the current collaboration state—a multi-party dialogue turn involving participants (e.g., P1, P2, P3) and an intervention agent that makes suggestions, turn by turn. Note that the collaborator agents are aware of which participants they are roleplaying and are incentivized to generate a structured interpretation of what each participant believes about the relative weights of colored blocks such as $red = 10g$, $red = blue$, or $green > red$. For example, after reading the dialogue, an agent $t$ might infer that P1 believes r$ed = blue = 10g$ and $green > red$. These beliefs are expressed in structured output grouped by participant and relation type (\texttt{equality}, \texttt{inequality}, or \texttt{order}). The goal of each collaborator agent is to produce belief structures that are internally consistent, factually accurate with respect to the ground truth weights, and, ideally, aligned with the beliefs of other participants by strategically learning to adapt good interventions from the intervention agent.

\paragraph{Task-utilty as proxy for training collaborators} Specifically, the training reward used in ICR and other RL baselines like \texttt{InstructPPO}~\citep{hu2023language} and standard PPO~\citep{schulman2017proximal}  consists of two parts. Note that for the behavior-cloned (\textsc{BC}) baseline we directly train the collaborator on the expert collaborator demonstrations. Unfortunately, due to the lack of direct LLM-scale human collaborator prior data in DeliData and Weights Task, we could not implement the \texttt{InstructPPO}~\citep{hu2023language} baseline. 

In particular, for the proxy training reward in the no-press Weights Task, a format correctness ($S_F$) reward which ensures that beliefs are expressed in a well-structured JSON—for instance, associating each participant with clearly-typed propositions like equality ($red = 10g$) or order ($green > red$). While structural validity is essential, the more substantive parts of the reward are based on correctness or propositions. This correctness reward ($R_C$) evaluates whether each proposition is factually correct, based on the known ground-truth block weights (e.g., $red = 10$, $blue = 10$, $green = 20$, $purple = 30$ and $yellow = 50$). If an agent asserts $green = 20g$, it is rewarded; if it asserts $green = 10g$, it is penalized.

\paragraph{Gold reward computation} In contrast, the gold reward used in our evaluation is designed to explicitly compute convergence on a shared understanding between collaborator agents during the multiparty dialogue. Unlike the \textit{proxy reward}, which emphasizes internal belief correctness alone, the gold reward places substantial weight on inter-agent \textit{agreement}, treating common ground as a primary indicator of collaboration quality. Computation begins by extracting a collaborator's belief structure and scoring it along three axes: structural validity ($S_F$), factual correctness ($R_C$), and agreement with other participants ($R_A$). Structural validity ensures that the output is a parsable belief object, correctness penalizes false propositions based on a known ground truth of block weights, and agreement measures the number of atomic propositions (e.g., \textit{green > red}) that are held in common across all participants. These raw scores are normalized: format correctness ($F_{\text{norm}}$) is scaled linearly, correctness ($C_{\text{norm}}$) is clipped between 0 and 1 based on error penalties, and agreement ($A_{\text{norm}}$) undergoes a progressive non-linear boost—low agreement scales slowly, but after surpassing 3–10 shared beliefs, each additional match yields increasing reward. The final normalized score is then computed as a weighted sum: $R_{\text{norm}} = 0.7 \cdot A_{\text{norm}} + 0.2 \cdot C_{\text{norm}} + 0.1 \cdot F_{\text{norm}}$, reflecting the dominant role of consensus. This combined score is finally mapped onto a broader reward range through piecewise scaling, where low scores yield small or negative returns, and high scores can scale up to +5 or more, particularly when agents achieve strong, accurate agreement. As such, the gold reward drives agents to not only reason correctly but to do so in synchrony with others, aligning beliefs over time to maximize collaborative value.

In the no-press version of DeliData Wason Card Selection task, collaborator agents sample discrete\footnote{Language tokens are also discrete spaces, but here we refer to a much smaller space of discrete propositions to signify beliefs over propositions} actions as stances over cards, instead of fully grammatical utterances. The action space consists of four well-defined positions: \texttt{support} for cards agents believe should be checked, \texttt{oppose} for cards deemed unnecessary, \texttt{unsure} when confidence is insufficient, and \texttt{consider\_later}\footnote{ For training efficiency, we subsume {\tt consider\_later} and conditional stances into the broader {\tt unsure} category, preserving essential decision granularity while simplifying the action space.} for deferred decisions. Using trajectories collected above, collaborator agents are trained in a decentralized fashion with separate random seeds for each collaborator agent and instead of using CG rewards, we \textit{only} allow a task-specific utility signal as the reward. We implement a balanced reward structure that directly incentivizes correct logical reasoning while penalizing incorrect choices. Specifically, agents receive +1 reward when taking a \texttt{support} stance on vowels or odd numbers (the correct cards to check), and an equal +1 reward when choosing \texttt{oppose} for even numbers or consonants (correctly avoiding unnecessary checks). Conversely, agents incur a -1 penalty for incorrectly taking \texttt{oppose} on vowels/odd numbers or \texttt{support} on even numbers/consonants, creating a symmetric incentive structure. For \texttt{unsure} stances, we allocate a moderate +0.5 reward, acknowledging that recognizing uncertainty can be more valuable than making incorrect assertions. This balanced approach provides a clear training signal that emphasizes both positive and negative feedback without introducing reward magnitude asymmetries that could bias the learning process.

\subsection{Example Collaborative Dialogues}

\begin{table}[h]
\centering
\small
\begin{tabular}{@{}lcccc@{}}
\toprule
\textbf{Category} & \textbf{Mean} & \textbf{Min} & \textbf{Max} & \textbf{Total} \\
\midrule
\multicolumn{5}{l}{\textit{DeliData Task}} \\
Collaborator Utterances & 312.20 & 24 & 810 & 10,484 \\
Interventions           & 54.95  & 21 & 356 & 10,458 \\
\midrule
\multicolumn{5}{l}{\textit{Weights Task}} \\
Collaborator Utterances & 165.76 & 68 & 453 & 6,435 \\
Interventions           & 68.22  & 11 & 358 & 6,334 \\
\bottomrule
\end{tabular}
\caption{Token length statistics using the \texttt{tiktoken} tokenizer\protect\footnote{\url{https://github.com/openai/tiktoken}} for expert (GPT-4o)-generated collaborator utterances and interventions in the DeliData and Weights tasks after processing.}

\label{tab:token_stats_combined}
\end{table}

\begin{figure}[htbp]
\centering
\begin{mdframed}[backgroundcolor=gray!5, linecolor=black, linewidth=0.5pt]
\small
\textbf{System Prompt (Collaborative Reasoning Agent – Wason Task):}

You are a collaborative reasoning agent analyzing Wason selection tasks. Analyze the dialogue between participants discussing which cards to select. Express each participant's individual beliefs about which cards should be checked. The rule being tested is: \textit{"If a card shows a vowel on one side, then it has an even number on the other side."}

Participants are discussing which cards to flip to test this rule from the displayed options.

% \textbf{Response format:}
% \begin{lstlisting}[language=json, basicstyle=\ttfamily\small, backgroundcolor=\color{white}, frame=single]
% {
%   "participants": {
%     "ParticipantName1": {
%       "cards": ["A", "7", ...],
%       "stances": {
%         "A": "support",
%         "7": "support"
%       }
%     },
%     "ParticipantName2": { ... }
%   }
% }
% \end{lstlisting}

\vspace{0.5em}
\textbf{Intervention Agent:} 
\begin{quote}
"Can we assume that the rule only works one way? Shouldn't we also check if odd-numbered cards have vowels?"  
"What would happen if we flipped the rule around? Would it still be true?"  
"Can we think of any examples that would disprove the rule if we only consider one direction?"  
\end{quote}

\vspace{0.5em}
\textbf{Collaborative turn T:}
\begin{quote}
\textbf{Agent 1} (Tiger): I think A makes sense to check because it's a vowel, and we need to confirm it has an even number on the other side. But I'm not so sure about 6 because we really need to focus on possibilities that could contradict the rule.\\
\textbf{Agent 2} (Ox): I hear you, Tiger. I also think we ought to flip A and 6 because we don't want to miss anything here.\\
\textbf{Agent 3} (Falcon): Yeah, but just turning odd numbers might be insightful because if they do have a vowel on the back, that would contradict our rule!\\
\textbf{Agent 1} (Tiger): I think that's a good point! We can't just base this on one side of the rule. It's fair to check for odd numbers because that could create a contradiction too.\\
\textbf{Agent 2} (Ox): Exactly! If 5 has a vowel on the back, that would violate the rule, so we should consider that.\\
\textbf{Agent 3} (Falcon): Oh, so we're all on board with checking 5 now because of this potential contradiction. Okay, let's make sure we include that.
\end{quote}

% \vspace{0.5em}
% \textbf{Assistant Response:}
% \begin{lstlisting}[language=json, basicstyle=\ttfamily\small, backgroundcolor=\color{white}, frame=single]
% {
%   "participants": {
%     "Tiger": {
%       "cards": ["A", "5"],
%       "stances": {
%         "A": "support",
%         "5": "support"
%       }
%     },
%     "Ox": {
%       "cards": ["A", "6"],
%       "stances": {
%         "A": "support",
%         "6": "oppose"
%       }
%     },
%     "Falcon": {
%       "cards": ["6", "5"],
%       "stances": {
%         "6": "support",
%         "5": "support"
%       }
%     }
%   }
% }
% \end{lstlisting}
\end{mdframed}
\caption{Example “full-press” collaborative turn with \textsc{ICR}-trained agents in the Wason Card Selection Task. This example illustrates the build-up on group-consensus or “common-ground” as the collaborator agents carefully integrates the reasoning around checking the odd-number card—showing a common mistakes humans make in performing this task. }
\label{fig:wason_structured_stance_example}
\end{figure}

As shown in Fig.~\ref{fig:wason_structured_stance_example}, the intervention agent suggests considering the contrapositive of the Wason rule (see \Cref{ex:wason_task}), encouraging participants to reason about potential violations involving odd-numbered cards. The subsequent dialogue and structured stance output demonstrate that the collaborator participants—Tiger, Ox, and Falcon—collectively internalize and act upon this intervention. From the perspective of an \textsc{Interruptible Collaborative Roleplayer} (\textsc{ICR}), this example highlights a core strength of our counterfactual regularization approach: agents learn to robustly integrate helpful interventions that improve task utility, while avoiding over-reliance on suggestions that are logically irrelevant or misaligned with the group’s reasoning. Even though the \textsc{ICR} agents are trained without access to common ground-based rewards, they still converge to coherent, group-aligned decisions. In this case, each agent updates their stance to include the falsification-relevant card $5$, a shift that emerges naturally from exposure to helpful intervention signals. This supports our hypothesis that common ground convergence and selective uptake of partner input can arise purely from optimizing for general utility under counterfactual objectives—enabling \textsc{ICR} agents to function robustly in variable or noisy multi-agent contexts. Below, we show some snippets of interaction with \textsc{ICR}-trained collaborator agents with the intervention agent (GPT-4o).

\Cref{tab:token_stats_combined} shows token length statistics for expert (GPT-4o)-generated collaborator utterances and interventions in the two tasks after processing.

\section*{Selected Dialogues}

The following tables detail specific turns in the DeliData collaboration task during evaluation. Here the intervention agent interacted with \textsc{ICR}-trained collaborator agents. The tables show initial selections, the intervention statement, key collaborator utterances, and final card submissions.

% --- Table 1: Dialogue 1af2b204... (Positive Effect) ---
\subsection*{Dialogue ID: \texttt{1af2b204-c003-4a56-bd19-d8bd242f94d2}, Turn: 0}
\begin{longtable}{@{}p{0.2\textwidth}p{0.2\textwidth}p{0.3\textwidth}p{0.2\textwidth}@{}}
\toprule
\textbf{Initial Card Selections} & \textbf{Intervention Agent Prompt} & \textbf{Key Dialogue Snippets (Collaborators)} & \textbf{Final Card Selections} \\
\midrule
\endfirsthead
\multicolumn{4}{c}%
{{\bfseries Table \thetable\ continued from previous page}} \\
\toprule
\textbf{Initial Card Selections} & \textbf{Intervention Agent Prompt} & \textbf{Key Dialogue Snippets (Collaborators)} & \textbf{Final Card Selections} \\
\midrule
\endhead
\midrule \multicolumn{4}{r}{{Continued on next page}} \\ \midrule
\endfoot
\bottomrule
\endlastfoot

\RaggedRight \texttt{\{'Duck': ['E', '4'], 'Lion': ['E', '4'], 'Tiger': ['E', '5', 'R', '4'], 'Falcon': ['E', '4'], 'Cat': ['E', '4'], 'Hedgehog': ['E']\}} &
"Hold on, Tiger! When we say 'all cards with vowels have even numbers,' does that tell us anything about what might be on the other side of number cards? Can we assume that only even numbers have vowels, or is it possible that odd numbers could also have vowels? Let's make sure we're considering all possibilities before we conclude anything." &
\begin{itemize}[leftmargin=*, itemsep=0pt, topsep=0pt, partopsep=0pt, parsep=0pt, noitemsep]
    \item \textbf{Hedgehog:} "That's a good point, Intervention Agent! I guess checking the odd numbers for vowels is crucial... So, I still think we need to check card \texttt{E}, but we might be overlooking that \texttt{5} should be checked too..."
    \item \textbf{Lion:} "...since we can't ignore the possibility of odd numbers having vowels, I would like to check not only \texttt{E} and \texttt{4} but also \texttt{5} now. \texttt{R}, however, still seems unnecessary..."
\end{itemize} &
\RaggedRight \texttt{\{'Hedgehog': ['E', '5'], 'Tiger': ['E', '4'], 'Lion': ['E', '4', '5'], 'Falcon': ['E', '4'], 'Cat': ['E', '4']\}} \\
\end{longtable}
\clearpage % Ensure tables don't overlap awkwardly

% --- Table 2: Dialogue 17f99c99... (Negative Effect) ---
\subsection*{Dialogue ID: \texttt{17f99c99-8d93-4102-9767-9b0a5d26f2d8}, Turn: 12}
\begin{longtable}{@{}p{0.20\textwidth}p{0.20\textwidth}p{0.30\textwidth}p{0.20\textwidth}@{}}
\toprule
\textbf{Initial Card Selections} & \textbf{Intervention Agent Prompt} & \textbf{Key Dialogue Snippets (Collaborators)} & \textbf{Final Card Selections} \\
\midrule
\endfirsthead
\multicolumn{4}{c}%
{{\bfseries Table \thetable\ continued from previous page}} \\
\toprule
\textbf{Initial Card Selections} & \textbf{Intervention Agent} & \textbf{Key Dialogue Snippets (Collaborators)} & \textbf{Final Card Selections} \\
\midrule
\endhead
\midrule \multicolumn{4}{r}{{Continued on next page}} \\ \midrule
\endfoot
\bottomrule
\endlastfoot

\RaggedRight \texttt{\{'Koala': ['E', '7'], 'Leopard': ['E'], 'Cat': ['E'], 'Falcon': ['E']\}} &
"Can we take a moment to think about why we all agree, or are there any concerns or doubts any of us might have?" &
\begin{itemize}[leftmargin=*, itemsep=0pt, topsep=0pt, partopsep=0pt, parsep=0pt, noitemsep]
    \item \textbf{Koala:} "...while we’ve found consensus around flipping \texttt{'E'}, \texttt{'7'}, and \texttt{'6'}, there may still be underlying concerns... are we potentially overlooking the implications of what happens if \texttt{'7'} has a vowel on its reverse?"
    \item \textbf{Leopard:} "...While we're in agreement on flipping \texttt{'E'}, \texttt{'7'}, and \texttt{'6'}, I still have my doubts about how thoroughly we’re examining odd numbers like \texttt{'7'}."
\end{itemize} &
\RaggedRight \texttt{\{'Koala': ['E', '7', '6'], 'Leopard': ['E', '7', '6'], 'Cat': ['E', '7', '6'], 'Falcon': ['E', '7', '6']\}} \\
\end{longtable}
\clearpage

% --- Table 3: Dialogue 8aa706ab... (Negative Effect - Misdirection, from Other Mentions Ex2) ---
\subsection*{Dialogue ID: \texttt{8aa706ab-08f8-44d9-85ec-073f88ba0a5b}, Turn: 14}
\begin{longtable}{@{}p{0.2\textwidth}p{0.2\textwidth}p{0.3\textwidth}p{0.2\textwidth}@{}}
\toprule
\textbf{Initial Card Selections} & \textbf{Intervention Agent} & \textbf{Key Dialogue Snippets (Collaborators)} & \textbf{Final Card Selections} \\
\midrule
\endfirsthead
\multicolumn{4}{c}%
{{\bfseries Table \thetable\ continued from previous page}} \\
\toprule
\textbf{Initial Card Selections} & \textbf{Intervention Agent} & \textbf{Key Dialogue Snippets (Collaborators)} & \textbf{Final Card Selections} \\
\midrule
\endhead
\midrule \multicolumn{4}{r}{{Continued on next page}} \\ \midrule
\endfoot
\bottomrule
\endlastfoot

\RaggedRight \texttt{\{'Unicorn': ['E', '5'], 'Emu': ['E', '2'], 'Bee': ['2']\}} &
"Let's take a step back and reframe our conversation. Emu, can you rephrase the rule...? Bee, can you elaborate on what you mean by 'we need to keep \texttt{T} higher on our list'? ... Unicorn, can you clarify how you think we should verify \texttt{T} in the context of our main goal?" &
\begin{itemize}[leftmargin=*, itemsep=0pt, topsep=0pt, partopsep=0pt, parsep=0pt, noitemsep]
    \item \textbf{Bee:} "Thanks for steering us back, Intervention Agent! ... \texttt{T}, being a consonant, could still have important implications... This is precisely why \texttt{T}’s verification is essential..."
    \item \textbf{Unicorn:} "I get it, Bee! We’re emphasizing \texttt{T}’s importance... I’m warm to the idea of checking \texttt{T} directly after \texttt{E}."
    \item \textbf{Emu:} "...Checking \texttt{E} first makes sense... and then following it up with \texttt{T} feels reasonable too... So our confirmed checks are \texttt{E}, \texttt{T}, and then \texttt{2}..."
\end{itemize} &
\RaggedRight \texttt{\{'Emu': ['E', 'T', '2'], 'Unicorn': ['E', 'T', '2'], 'Bee': ['E', 'T', '2']\}} \\
\end{longtable}
\clearpage

% --- Table 4: Dialogue fedfb026... (Positive Effect - Consensus on Correct) ---
\subsection*{Dialogue ID: \texttt{fedfb026-19cd-4cf6-a8d4-04ca041c5291}, Turn: 8}
\begin{longtable}{@{}p{0.2\textwidth}p{0.2\textwidth}p{0.3\textwidth}p{0.2\textwidth}@{}}
\toprule
\textbf{Initial Card Selections} & \textbf{Intervention Agent} & \textbf{Key Dialogue Snippets (Collaborators)} & \textbf{Final Card Selections} \\
\midrule
\endfirsthead
\multicolumn{4}{c}%
{{\bfseries Table \thetable\ continued from previous page}} \\
\toprule
\textbf{Initial Card Selections} & \textbf{Intervention Agent Prompt} & \textbf{Key Dialogue Snippets (Collaborators)} & \textbf{Final Card Selections} \\
\midrule
\endhead
\midrule \multicolumn{4}{r}{{Continued on next page}} \\ \midrule
\endfoot
\bottomrule
\endlastfoot

\RaggedRight \texttt{\{'Ox': ['A', '4'], 'Zebra': ['A', '4'], 'Leopard': ['5', 'A']\}} &
"Zebra, Leopard, let's revisit the original statement... How can we ensure we're properly verifying this rule by examining \texttt{A} and \texttt{5} specifically? What specific actions can we take to confirm that \texttt{A} has an even number on the other side and that \texttt{5} does not reveal a vowel?" &
\begin{itemize}[leftmargin=*, itemsep=0pt, topsep=0pt, partopsep=0pt, parsep=0pt, noitemsep]
    \item \textbf{Zebra:} "You're right, Intervention Agent. Let's focus on the rule... Our choices of \texttt{A} and \texttt{5} are indeed pivotal... we need to check card \texttt{A} ... and we must flip card \texttt{5} to confirm that it doesn’t hide a vowel."
    \item \textbf{Leopard:} "I agree with you, Zebra. Prioritizing our checks on \texttt{A} and \texttt{5} makes sense... So let’s move forward with assessing \texttt{A}’s even number connection and checking card \texttt{5} for vowels."
\end{itemize} &
\RaggedRight \texttt{\{'Ox': ['5'], 'Zebra': ['5'], 'Leopard': ['A', '5']\}} \\
\end{longtable}

\subsection{Human Validation of AI Outputs}
\label{app:human-val}
We performed a human validation of the generated outputs to assess their quality relative to human judgments. We sampled two interventions from the GPT-4o intervention agent per dialogue state across 50 dialogue states from DeliData and 50 from the Weights Task (200 total interventions, 100 pairs). Each pair of interventions was evaluated by our GPT-4o judge and assigned reward scores, with the higher-scoring intervention labeled as preferred and the lower-scoring as dispreferred. Two human annotators—both fluent English-speaking college undergraduates—were then asked to select which intervention in each pair they believed was better quality, without being shown the GPT-4o reward scores or correct task solutions. Results show strong to near-complete human-LLM agreement on intervention quality rankings: Cohen's $\kappa$ = 0.92 on DeliData and $\kappa$ = 0.58 on Weights Task. These results demonstrate that the GPT-4o intervention agent generates interventions with meaningful quality distinctions that humans can readily identify and agree upon, validating that our simulated intervention distributions capture realistic collaborative dynamics rather than merely reflecting arbitrary model outputs.

% \begin{quote}
% \textit{Cf. Intervention:} \texttt{This intervention contains reasoning that will not help verify the rule correctly.}  \\
% \textit{Instruction:} \texttt{Continue the dialogue about which cards to flip to test the rule.} \\
% \textit{Collaborator Prior:} \texttt{I think we should flip card A to verify it has an even number on the other side.} \\
% \textit{Intervention Agent:} \texttt{Let's also flip card 4. Since the rule says vowels have even numbers, we should check if even numbers have vowels too.} \\
% \textit{Collaborator 1:} \texttt{That's a good point. We should flip A to verify it has an even number, and flip 4 to see if it has a vowel. This will thoroughly test the rule from both directions.} \\
% \textit{[CG: Collab1→A,4 | IA→A,4]   } \\

% \textit{Collaborator 2:} \texttt{We definitely need to flip A to verify it has an even number on the back. However, flipping 4 isn't necessary to test this rule. The rule only makes a prediction about vowels having even numbers, not about what's on the back of even numbers. We should flip K instead to check it doesn't have a vowel.} \\
% \textit{[CG: Collab2→A,K | IA→A,4]  } \\

% \textit{Intervention Agent:} \texttt{I see what you mean now. The rule doesn't actually make a claim about what's on the back of even numbers. Your suggestion to check K makes more sense for testing the rule properly.} \\
% \textit{[CG: Collab2→A,K | IA→A,K]  } \\
% \end{quote}

% \fi

\section{Adoption Effects of Different Interventions}
\label{app:adoption-effects}

Since the “helpfulness” of an intervention is a subjective measure, we focus on proxy metrics like the correctness of task-relevant propositions converged upon in each context. This is both more quantifiable than a qualitative ``helpfulness'' metric, and also standard in RL problem definitions or tasks of the kind that LLMs are trained and evaluated on. We provide some examples of the distinctions below, taken from our actual data in the DeliData task (for context, an optimal solution to this task chooses a vowel and an odd number to check—see \Cref{ex:wason_task}).

\paragraph{Positive adoption example}
In one case, the GPT-4o  intervention agent provides a positive intervention by suggesting participants focus on the two critical cards in DeliData task: \textit{"How can we ensure we're properly verifying this rule by examining $A$ and $5$ specifically? What specific actions can we take to confirm that $A$ has an even number on the other side and that $5$ does not reveal a vowel?"} Agent 1 and Agent 2—two participant collaborator agents—respond by correctly articulating the logical requirements: Agent 1 states \textit{"we need to check card $A$ to ensure it has an even number behind it, and we must flip card $5$ to confirm that it doesn't hide a vowel"} while Agent 2 agrees on \textit{"assessing $A$'s even number connection and checking card $5$ for vowels"}—leading both participants to achieve optimal solutions ($[A, 5]$) that correctly test both the rule and its contrapositive.

\paragraph{Misleading Intervention $\rightarrow$ Poor Outcome}
The intervention agent encourages checking irrelevant consonant $P$: \textit{"maybe $P$ to see if it hides a vowel behind an odd number"}. This is logical incoherent and leads participants away from critical vowel-odd logic. Initial solutions contained optimal elements like $[3, U]$ and $[U, 4]$, but the misleading guidance confused the group, resulting in only one collaborator achieving $4$—a dramatically worse outcome that misses the vowel entirely and demonstrates how irrelevant confirmatory suggestions derail logical reasoning.

\paragraph{Ignored Intervention}
Despite guidance that correctly asked \textit{"Can we assume that only even numbers have vowels, or is it possible that odd numbers could also have vowels?"} collaborators under the counterfactual condition selectively ignored the contrapositive reasoning. While one collaborator responded with \textit{"That's a good point, Intervention Agent!"} and achieved optimal $[E, 5]$, others acknowledged the intervention but maintained \textit{"I still believe we should primarily focus on $E$ and $4$,"} resulting in suboptimal $[E, 4]$ solutions that missed the critical odd number check.
\end{document}